\newcommand{\printfnsymbol}[1]{%
  \textsuperscript{\@fnsymbol{#1}}%
}
\def\hlinewd#1{%
\noalign{\ifnum0=`}\fi\hrule \@height #1 %
\futurelet\reserved@a\@xhline}
\title{
Flow Straight and Fast: \\ 
 Learning to Generate and Transfer Data with 
 {\Name} Flow 
}
\author{%
	Xingchao Liu\thanks{XL and CG contributed equally to this work.}\\
    ~University of Texas at Austin\\
	\texttt{xcliu@utexas.edu}
	\and
	Chengyue Gong\printfnsymbol{1}\\
	~University of Texas at Austin\\
	\texttt{cygong@cs.utexas.edu} \\
	\and 
	Qiang Liu\\
	University of Texas at Austin\\
	\texttt{lqiang@cs.utexas.edu} \\
}
\date{}
\begin{document}

\definecolor{commentcolor}{RGB}{110,154,155}   
\definecolor{inputcolor}{RGB}{255, 105, 180}   
\newcommand{\PyComment}[1]{\ttfamily\textcolor{commentcolor}{\# #1}}  
\newcommand{\PyInput}[1]{\ttfamily\textcolor{inputcolor}{\# #1}}
\newcommand{\PyCode}[1]{\ttfamily\textcolor{black}{#1}} 

\maketitle
 
\begin{abstract}
We present rectified flow, 
a  surprisingly simple approach 
to learning (neural) ordinary differential equation (ODE) models to transport between two empirically observed distributions $\tg_0$ and $\tg_1$,
hence providing a unified solution to 
generative modeling and domain transfer, among various other tasks involving distribution transport. 
The idea of rectified flow is to learn the ODE to follow the straight paths connecting 
the points drawn from $\tg_0$ and $\tg_1$ as much as possible. 
This is  achieved by solving a straightforward nonlinear least squares optimization problem, 
which can be easily scaled to large models 
without introducing extra parameters beyond standard supervised learning. 
The straight paths  
are special and preferred because they are the shortest  
paths between two points, and can be simulated exactly without time discretization and hence yield computationally efficient models.  
We show that the procedure of learning a rectified flow from data, called rectification, turns an arbitrary coupling of $\tg_0$ and $\tg_1$ to a  new deterministic coupling with provably non-increasing convex transport costs. 
In addition, recursively applying 
rectification allows us to obtain a sequence of flows with increasingly straight paths, which can be simulated accurately with coarse time discretization in the inference phase. 
In empirical 
studies, 
we show that
rectified flow performs superbly on 
image generation, 
image-to-image translation, and domain adaptation. 
In particular, on image generation and translation, our method yields 
nearly straight flows 
that give high quality results even with \emph{a single Euler discretization step}. 
\end{abstract}

\section{Introduction} 

Compared with supervised learning, 
the shared difficulty of various forms of unsupervised learning is the lack of 
\emph{paired} input/output data  
with which standard regression or classification tasks can be invoked. 
The gist of 
most unsupervised methods is to 
find, in one way or another,  meaningful correspondences between 
points from two distributions. 
For example, generative models such as 
generative adversarial networks (GAN) 
and variational autoencoders (VAE) 
\cite[e.g.,][]{goodfellow2014generative, kingma2013auto, dinh2016density}
seek to 
map data points 
to latent codes 
following 
a simple elementary (Gaussian) distribution
with which the data can be generated and manipulated. 
Representation learning 
rests on the idea that 
if a sufficiently smooth function can map
a structured data distribution to an elementary 
distribution, it can (likely) be endowed 
with certain  semantically meaningful interpretation
and useful for various downstream learning tasks.  
On the other hand, 
domain transfer methods 
find mappings to transfer points 
from two different data distributions, 
both observed empirically, 
for the purpose of image-to-image translation,  %
style transfer, and domain adaption \citep[e.g.,][]{cyclegan, flamary2016optimal, trigila2016data, peyre2019computational}.
All these tasks can be framed  unifiedly %
as finding a transport map between two distributions: 

\noindent\textbf{The Transport Mapping Problem}  
\emph{Given empirical observations 
of two distributions $X_0\sim \tg_0,  X_1\sim \tg_1$ on $\RR^d$, 
find a %
transport map $T\colon\RR^d\to\RR^d$ (hopefully nice or optimal in certain sense),  
such that $Z_1 \defeq T(Z_0)\sim \tg_1$ when $Z_0 \sim \tg_0$, that is, $(Z_0,Z_1)$ is a coupling (a.k.a transport plan) of $\tg_0$ and $\tg_1$.  
}

Several lines of techniques have been developed depending on how to  represent and train  the map $T$. 
In traditional generative models, 
$T$ 
is parameterized as a neural network, 
and trained with either 
GAN-type minimax algorithms or (approximate) maximum likelihood estimation (MLE).  
However, GANs are known to suffer from numerically instability  and mode collapse issues,  
and require substantial engineering efforts 
and human tuning, which often do not transfer well across different model architecture and datasets.  
On the other hand, MLE tends to be intractable for complex models, 
and hence
requires 
approximate variational or Monte Carlo inference techniques  
such as those used in variational auto-encoders (VAE), 
or special model structures  
such as normalizing flow and auto-regressive models, to yield tractable likelihood, causing  difficult trade-offs between expressive power and computational cost. %

Recently, advances have been made by representing 
the transport plan 
\emph{implicitly as a continuous time  process}, such as 
flow models with neural ordinary differential equations (ODEs) \citep[e.g.,][]{chen2018neural, papamakarios2021normalizing} and  diffusion models by stochastic differential equations (SDEs) \citep[e.g.,][]{song2020score, ho2020denoising, tzen2019theoretical, de2021diffusion, vargas2021solving};
in these models, 
a neural network 
is trained to represent the 
drift force of the processes
and a numerical ODE/SDE solver is used to simulate the process during inference. 
The key idea is that, 
by leveraging the mathematical structures of ODEs/SDEs, 
the continuous-time models 
can be trained efficiently without 
resorting to minimax or traditional approximate inference techniques. 
The most notable examples are 
the recent score-based generative models \cite{song2019generative, song2020improved, song2020score} 
and denoising diffusion probabilistic models (DDPM) \citep{ho2020denoising}, 
which we call denoising diffusion methods collectively. 
These methods 
allow us to train 
large-scale diffusion/SDE-based generative models that 
surpass GANs on image generation in both image quality and diversity,
without the instability and mode collapse issues
\citep[e.g.,][]{dhariwal2021diffusion, glide, dalle2, imagegen}.  
The learned SDEs can be converted into deterministic ODE models  for faster inference with the method of probability flow ODEs \citep{song2020score} and DDIM \citep{song2020denoising}. 

However, compared with the traditional one-step models like GAN and VAE, 
a key drawback of  continuous-times models %
is the 
high computational cost in inference time: drawing a single point (e.g., image) requires to solve the ODE/SDE with a numerical solver that needs to repeatedly call the expensive neural drift function. 
In addition, the existing denoising diffusion techniques 
require substantial hyper-parameter search in an involved design space 
and are still poorly understood both empirically and theoretically \citep{elucidating}.

In existing approaches, generative modeling and domain transfer 
are  typically treated  separately. 
It often requires to extend or customize a generative learning techniques to solve domain transfer problems; see e.g., Cycle GAN \citep{cyclegan} and diffusion-based image-to-image translation \citep[e.g.,][]{su2022dual, zhao2022egsde}. 
One framework that naturally unifies both domains is 
 optimal transport (OT)  \citep[e.g.,][]{villani2021topics,ambrosio2021lectures,figalli2021invitation,peyre2019computational}, which 
endows a collection of techniques for finding optimal
couplings with minimum transport costs of form 
$\E[c(Z_1-Z_0)]$ w.r.t. a cost function $c\colon \RR^d\to \RR$, yielding 
natural applications  to both generative and transfer learning. 
However, 
the existing OT techniques 
are slow for problems with high dimensional and large volumes of data \citep{peyre2019computational}. 
Furthermore, 
as the transport costs do not perfectly align with 
the actual learning performance, 
methods that faithfully find the optimal transport maps 
do not necessarily have better learning performance \citep{korotin2021neural}.

\begin{figure}[h]
    \centering
    \includegraphics[width=0.95\textwidth]{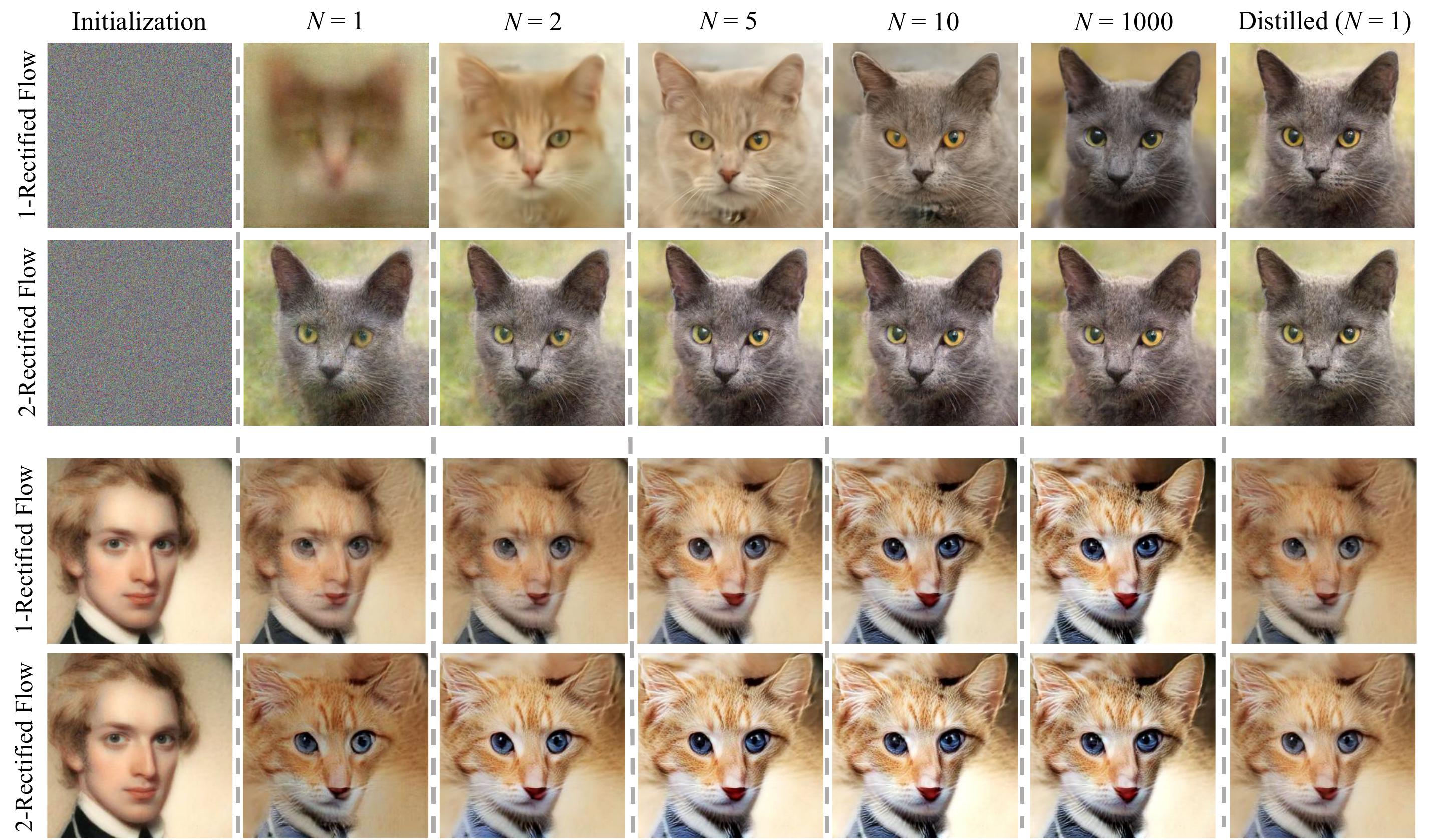} %
    \caption{The trajectories of rectified flows for 
    image generation 
    ($\tg_0$: standard Gaussian noise,  $\tg_1$: cat faces, top two rows), 
    and image transfer between human and cat faces ($\tg_0$: human faces,  $\tg_1$: cat faces, bottom two rows), 
    when simulated using Euler method with step size $1/N$ for $N$ steps. 
    The first rectified flow induced from the training data (\emph{1-rectified flow}) yields good results with a very small number (e.g., $\geq 2$) of steps; 
   the straightened reflow induced from \emph{1-rectified flow} (denoted as \emph{2-rectified flow})   
   has nearly straight line trajectories and yield good results even with one  discretization step. 
    }
    \label{fig:cat_k}
\end{figure}

\subsection*{Contribution}
We introduce 
\emph{rectified flow}, 
a surprisingly  
simple approach  
to the transport mapping problem, 
which unifiedly solves both 
generative modeling and domain transfer. 
The rectified flow 
is an ODE model 
that transport 
distribution $\tg_0$ to $\tg_1$
by 
\emph{following straight line paths as much as possible.} 
The straight paths 
are preferred both theoretically 
because it is the shortest path 
between two end points, 
and computationally because it can be 
exactly simulated without time discretization. 
Hence,
 flows  with straight paths 
bridge the gap between one-step and continuous-time models. %

Algorithmically, the rectified flow 
is trained with 
a simple and scalable unconstrained 
least squares optimization procedure,    which 
avoids the instability issues of GANs,  
the intractable likelihood of MLE methods, and the subtle hyper-parameter decisions of denoising diffusion models.  
The procedure of obtaining the rectified flow from the training data has the attractive theoretical property of 
1) yielding a coupling with 
non-increasing transport cost jointly for all convex cost $c$, 
and 2) making the paths of flow increasingly straight and hence incurring lower error with numerical solvers.
Therefore, 
with a \emph{reflow} 
procedure that iteratively trains new 
rectified flows
with the data simulated from the previously obtained rectified flow, 
we obtain nearly straight flows 
that yield good results even with
the coarsest time discretization, i.e., 
one Euler step. 
Our method is purely ODE-based, 
and is both conceptually simpler and practically faster in inference time than the SDE-based approaches of
\cite{ho2020denoising, song2020score, song2020denoising}. 

Empirically, 
rectified flow 
can yield
high-quality results 
for image generation when simulated with a very few number of Euler steps (see Figure~\ref{fig:cat_k}, top row).
Moreover, with just one step of reflow, 
the flow becomes nearly straight and hence yield good results with a single Euler discretization step (Figure~\ref{fig:cat_k}, the second row).
This substantially improves 
over the standard denoising diffusion methods. %
Quantitatively, we claim a state-of-the-art result of FID (4.85) and recall (0.51) on CIFAR10 for one-step fast diffusion/flow models~\citep{bao2022analytic, lyu2022accelerating, xiao2021tackling, zheng2022truncated, luhman2021knowledge}. 
The same algorithm 
also achieves superb result on domain transfer tasks such as image-to-image translation (see the bottom two rows of Figure \ref{fig:cat_k}) and 
transfer learning.

\section{Method} 
We provide a quick overview of the method in Section~\ref{sec:firstintro}, 
followed with some discussion and remarks in Section~\ref{sec:secondintro}. 
We introduce a nonlinear extension of our method in Section~\ref{sec:nonlinear}, 
with which we clarify the connection and advantages of our method with the method of probability flow ODEs \citep{song2020score} and DDIM \citep{song2020denoising}.

\begin{figure}[t]
\hspace{-1em}
\begin{tabular}{wl{0.17\textwidth}wl{0.3\textwidth}wl{0.17\textwidth}wl{0.17\textwidth}wl{0.15\textwidth}wl{0.1\textwidth}} 
\includegraphics[width=.15\textwidth]{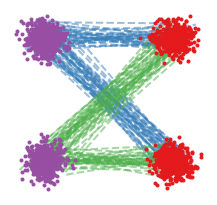}  &
\includegraphics[width=.15\textwidth]{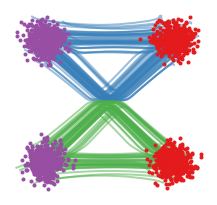} 
\raisebox{1em}{{\includegraphics[width=.12\textwidth]{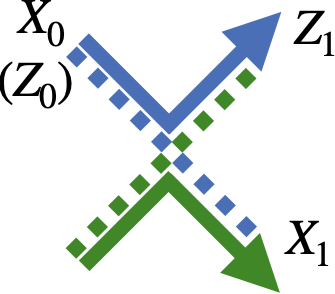}}}%
\hspace{-2em} 
 & 
 \hspace{-2em} 
\includegraphics[width=.15\textwidth]{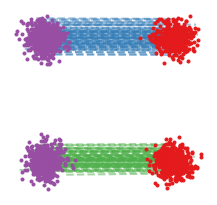} 
 & 
 \hspace{-2em} 
\includegraphics[width=.15\textwidth]{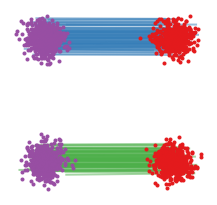} 
 & 
 \hspace{-3.5em}
\raisebox{1.2em}{\fbox{\includegraphics[width=.15\textwidth]{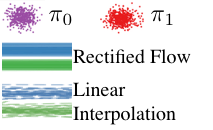}}} 
 & 
\\
\begin{tabular}{l} 
\scriptsize (a) Linear interpolation \\ 
\scriptsize $X_t = tX_1+(1-t)X_0$  
\end{tabular} 
& 
\begin{tabular}{l} 
\scriptsize (b)  Rectified flow $Z_t$ \\ 
\scriptsize induced by $(X_0,X_1)$ 
\end{tabular} 
 & 
  \hspace{-2em} 
\begin{tabular}{l} 
\scriptsize (c) Linear interpolation \\ 
\scriptsize $Z_t = tZ_1+(1-t)Z_0$   
\end{tabular} 
& 
  \hspace{-2em} 
\begin{tabular}{l} 
\scriptsize (d)  Rectified flow $Z'_t$ \\ 
\scriptsize induced by $(Z_0,Z_1)$ 
\end{tabular} 
&  
& 
\end{tabular}
\caption{(a) Linear interpolation of data input $(X_0,X_1)\sim \tg_0\times \tg_1$. 
(b) The rectified flow $Z_t$ induced by $(X_0,X_1)$;  
the trajectories are ``rewired" at the intersection points 
to avoid the crossing. 
 (c) The linear interpolation of the end points $(Z_0,Z_1)$ of flow $Z_t$.
 (d) The rectified flow induced from $(Z_0,Z_1)$, which  follows straight paths. 
}
\label{fig:twodotstoy}
\end{figure}

\subsection{Overview}%
\label{sec:firstintro}

\paragraph{Rectified flow}
Given empirical observations of $X_0\sim \tg_0, X_1\sim \tg_1$, 
the rectified flow induced from $(X_0,X_1)$ 
is an ordinary differentiable model (ODE)
on time $t\in[0,1]$, 
$$
\d Z_t = v(Z_t, t) \dt, 
$$
which converts $Z_0$ from $\tg_0$ to a $Z_1$ following $\tg_1$. 
The drift force 
 $v\colon \RR^d\to \RR^d$  is set to drive the flow to 
 follow the direction $(X_1-X_0)$ 
 of the linear path pointing from $X_0$ to $X_1$
 as much as possible, by solving a simple 
 least squares regression problem:  \bbb\label{equ:mainf}
\min_{v} %
\int_0^1\E \left [ \norm{( X_1 - X_0) - v\big (X_t,~ t\big)}^2
\right ] \dt, 
~~~~~\text{with}~~~~
X_t = t X_1 + (1-t) X_0, 
\eee 
where $X_t$ is 
the linear interpolation  of  $X_0$ and  $X_1$. 
Naviely, $X_t$ follows the ODE of $\d  X_t =(X_1-X_0)\dt,$ which is non-causal (or anticipating) as the update of $X_t$ requires the information of the final point $X_1$.  
By fitting the drift $v$ with $X_1-X_0$, 
the rectified flow \emph{causalizes}
the paths of linear interpolation $X_t$, 
yielding an ODE flow %
that can be simulated without seeing the future. %

 In practice, we parameterize $v$ with a neural network or other nonlinear models 
 and solve  
  \eqref{equ:mainf} %
  with any  off-the-shelf stochastic optimizer, such as stochastic gradient descent,  
  with empirical draws of $(X_0,X_1)$. See 
  Algorithm~\ref{alg:cap}. 
  After we get $v$, we solve the ODE starting from $Z_0 \sim \tg_0$ to transfer $\tg_0$ to $\tg_1$,
   backwardly starting from $Z_1\sim \tg_1$ to transfer $\tg_1$ to $\tg_0$. Specifically, 
  for  backward sampling, 
  we simply  
  solve $\d \tilde X_t = -v(\tilde X_t, t) \dt $ initialized from $\tilde X_0 \sim \tg_1$ and set 
   $X_t= \tilde X_{1-t}$. 
   The forward and backward sampling 
are equally favored by the training  algorithm, because  the objective  in \eqref{equ:mainf} is 
\emph{time-symmetric} 
in that it yields the equivalent problem if we exchange $X_0$ and $X_1$ and flip the sign of $v$.

\paragraph{Flows avoid crossing} %
A key to understanding the method is the non-crossing property of flows: 
the different paths 
following a well defined ODE $\d Z_t = v(Z_t,t) \dt $, 
whose solution exists and is unique, \emph{cannot cross each other} at any time $t\in[0,1)$. 
Specifically,
there exists no location $z\in\RR^d$ and time $t\in[0,1)$, such that two paths go across $z$ at time $t$ along different directions, because otherwise the solution of the ODE would be non-unique. 
On the other hand, the paths of the interpolation process $X_t$ may intersect with each other (Figure~\ref{fig:twodotstoy}a), which makes it non-causal. 
Hence, %
as shown in Figure~\ref{fig:twodotstoy}b,  
the rectified flow 
\emph{rewires} the individual trajectories passing through the intersection points to avoid crossing, while
tracing out the same density map as the linear interpolation paths
due to the optimization of 
\eqref{equ:mainf}.  
We can view the linear interpolation $X_t$ as building
 roads (or tunnels) to connect $\tg_0$ and $\tg_1$, 
 and the rectified flow as traffics of particles passing through the roads in a myopic, memoryless, non-crossing way, 
 which allows them to ignore the global  
 path information of how $X_0$ and $X_1$ are paired, 
 and rebuild a more deterministic pairing of $(Z_0,Z_1)$.

\paragraph{Rectified flows reduce transport costs} 
If \eqref{equ:mainf} is solved exactly, 
the pair $(Z_0,Z_1)$ 
of the rectified flow 
is guaranteed to be a valid coupling of $\tg_0,\tg_1$ (Theorem~\ref{thm:marginal}), 
that is, $Z_1$ follows $\tg_1$ if $Z_0\sim \tg_0$.  
Moreover, $(Z_0,Z_1)$ guarantees to yield no larger transport cost than the data pair $(X_0,X_1)$ simultaneously for \emph{all} convex cost functions $c$ (Theorem~\ref{thm:cost}). 
The data pair $(X_0,X_1)$ can be an arbitrary coupling of $\tg_0,\tg_1$, typically independent  (i.e., $(X_0,X_1)\sim \tg_0\times \tg_1$)  
as dictated by the lack of meaningfully paired observations in practical problems. %
In comparison, 
the rectified coupling $(Z_0,Z_1)$ has a deterministic dependency as it is constructed from an ODE model. 
Denote by $(Z_0,Z_1)=\rectify((X_0,X_1))$ 
the mapping from $(X_0,X_1)$ to $(Z_0,Z_1)$. 
Hence, %
$\rectify(\cdot)$ converts an arbitrary coupling into a deterministic coupling with lower convex transport costs.

\paragraph{Straight line flows yield fast simulation}
Following Algorithm~\ref{alg:cap}, 
denote by $\vv Z = \rectifyflow((X_0,X_1))$ the rectified flow induced from $(X_0,X_1)$. 
Applying this operator recursively 
yields a sequence of rectified  flows 
$\vv Z^{k+1} = \rectifyflow((Z_0^k, Z_1^k))$ with $(Z_0^0,Z_1^0)=(X_0,X_1)$, where $\vv Z^k$ is the $k$-th rectified flow, or simply \emph{$k$-rectified flow}, induced from $(X_0,X_1)$. 

This 
\emph{reflow} 
procedure %
not only decreases transport cost,
but also has the important effect of 
straightening paths of rectified flows, 
that is, making the paths of the flow more straight. 
This is highly attractive computationally  
as flows with nearly straight paths 
incur small time-discretization error 
in numerical simulation. Indeed, perfectly straight paths can be simulated exactly with a single Euler step and is effectively a one-step model. 
This addresses the very bottleneck of high inference cost 
in existing continuous-time ODE/SDE models.

\begin{algorithm}[t]
\caption{{\Name} Flow: Main Algorithm}\label{alg:cap}
\begin{algorithmic}
\STATE \textbf{Procedure}: $\vv Z={\rectifyflow((X_0,X_1))}$: 
\vspace{.2\baselineskip}

\STATE~~~\emph{Inputs}: Draws from a coupling $(X_0,X_1)$ of $\tg_0$ and $\tg_1$; velocity model $v_\theta \colon \RR^d \to \RR^d$ with parameter $\theta.$ 
\vspace{.155\baselineskip}
\STATE~~\emph{Training}:
$
\displaystyle 
\hat \theta = \argmin_\theta
\E\left [\norm{X_1 - X_0 - v(t X_1 +(1-t)X_0, ~t )}^2 \right], 
$
with $t \sim \uniform([0,1])$.
\vspace{.15\baselineskip}

\STATE~~\emph{Sampling}: Draw $(Z_0,Z_1)$ following $\d Z_t = v_{\hat\theta}(Z_t,t)\dt$ starting from $Z_0 \sim \tg_0$ (or backwardly $Z_1\sim \tg_1$).

\STATE~~\emph{Return}: $\vv Z = \{Z_t\colon t\in[0,1]\}$.
\vspace{.4\baselineskip}

\STATE \textbf{Reflow} (optional): $%
\vv Z^{k+1} = \rectifyflow((Z_0^k, Z_1^k))$, starting from $(Z^0_0,Z^0_1) = (X_0,X_1)$.
\vspace{.2\baselineskip}

\STATE \textbf{Distill} (optional): Learn a neural network $\hat T$ to distill the $k$-rectified flow, such that $Z^k_1 \approx \hat T(Z^k_0)$. 
\end{algorithmic}
\end{algorithm}

 \subsection{Main Results and Properties} 
 \label{sec:secondintro}

We provide more in-depth discussions 
on the main properties of  rectified flow. 
We keep the discussion informal 
to highlight the intuitions  in this section and defer the 
  full course theoretical analysis to  Section~\ref{sec:theory}. %
 
 First, 
 for a given input coupling $(X_0,X_1)$, 
it is easy to see that the exact minimum of \eqref{equ:mainf} is achieved if 
 \bbb \label{equ:vxoracle}
 v^X(x,t) = \E[X_1- X_0 ~|~ X_t=x],
 \eee 
 which is the expectation of the line directions $X_1-X_0$ that pass through $x$ at time $t$. 
 We discuss below the property of rectified flow $\d Z_t = v^X(Z_t,t)\dt $ with $Z_0\sim \tg_0$,
 assuming that the ODE has an unique solution. %
 
 \paragraph{Marginal preserving property} [Theorem~\ref{thm:marginal}] \emph{The pair $(Z_0,Z_1)$ is a coupling of $\tg_0$ and $\tg_1$. 
In fact, the marginal law of $Z_t$ equals that of $X_t$ at every time  $t$, that is, $\law(Z_t)=\law(X_t), \forall t\in[0,1]$. } 

Intuitively, this is because, by the definition of $v^X$ in \eqref{equ:vxoracle}, the expected amount of mass
that  passes through every infinitesmal volume at all location and time are equal under the dynamics of $X_t$ and $Z_t$, which ensures that they trace out the same marginal distributions: \newcommand{\imineq}[2]{\vcenter{\hbox{\includegraphics[height=#2ex]{#1}}}}
\bb 
\text{
Flow in $\&$ out}\!\left(\imineq{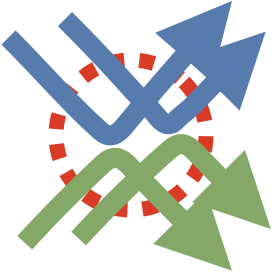}{6} \right) \!\! = 
\text{Flow in $\&$ out}\!\left(\imineq{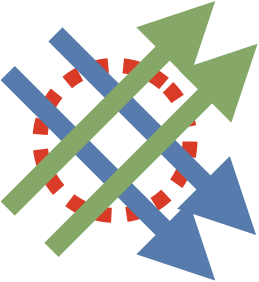}{6} \right)\!, ~\forall \text{time $\&$ location} 
&&\implies && \law(Z_t)=\law(X_t),\forall t.%
\ee 
On the other hand, the joint distributions of the whole trajectory of $Z_t$ and that of $X_t$ are  different in general.  
In particular, $X_t$ is in general a non-causal, non-Markov process, 
with $(X_0,X_1)$ a stochastic coupling,
and $Z_t$  \emph{causalizes}, \emph{Markovianizes}
 and \emph{derandomizes} $X_t$, while preserving the marginal distributions at all time.

\paragraph{Reducing  transport costs} [Theorem~\ref{thm:cost}] \emph{The coupling $(Z_0,Z_1)$ yields lower or equal convex transport costs than the input $(X_0,X_1)$ in that  $\E[c(Z_1-Z_0)] \leq \E [c(X_1 - X_0)]$ for any convex cost $c\colon \RR^d\to \RR$. }

The transport costs measure the expense of transporting the mass of one distribution to another following the assignment relation specified by the coupling and is a central topic in optimal transport \citep[e.g.,][]{villani2009optimal,villani2021topics, santambrogio2015optimal, peyre2019computational, figalli2021invitation}. %
Typical examples are $c(\cdot)= \norm{\cdot}^\alpha$ with $\alpha\geq 1$.
Hence, $\rectify(\cdot)$
yields a Pareto descent on the collection of all convex transport costs, without targeting any specific $c$. 
This distinguishes it from the 
typical optimal transport optimization methods, which are explicitly framed 
to optimize a given $c$. 
As a result, recursive application of $\rectify(\cdot)$  
does not guarantee to attain the $c$-optimal coupling for any given $c$, with the exception 
in the one-dimensional case when the fixed point of $\rectify(\cdot)$ coincides with the unique monotonic coupling that simultaneously minimizes all non-negative convex costs $c$; see Section~\ref{sec:stcouplings}. 

Intuitively, 
the convex transport costs 
are guaranteed to decrease because the paths of the rectified flow $Z_t$ is a rewiring of the straight paths connecting $(X_0,X_1)$. 
To give an illustration, %
consider the simple case of $c(\cdot) = \norm{\cdot}$ when transport costs $\E[\norm{X_0-X_1}]$ and $\E[\norm{Z_0-Z_1}]$  
 are the expected length of the straight lines connecting the end points. 
 The inequality can be proved graphically as follows: 
$$
\text{ 
$\E[\norm{Z_0-Z_1}]$
$= $ 
Length$\left(\imineq{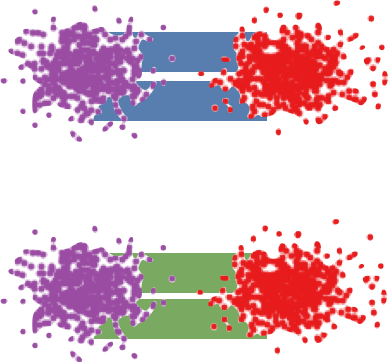}{6}\right)$ 
$\overset{\scriptsize (*)}{\leq}$
Length$\left(\imineq{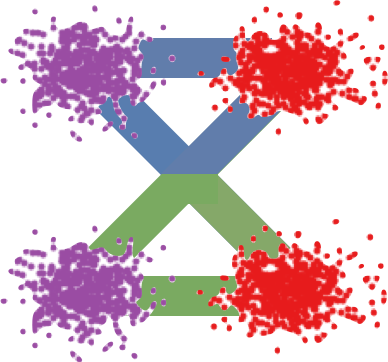}{6}\right)$ %
$\overset{\scriptsize (**)}{=}$ Length$\left(\imineq{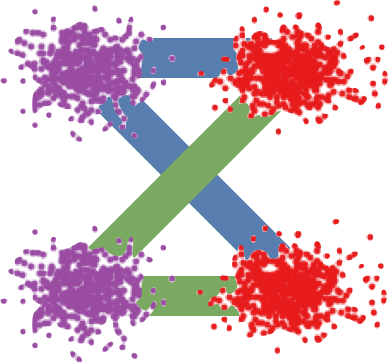}{6}\right)$
$=$ %
${\E[\norm{X_0-X_1}]}$
}, 
$$
where $\overset{\scriptsize (*)}{\leq}$ uses 
the triangle inequality,
and $\overset{\scriptsize(**)}{=}$ holds because the paths of $Z_t$ is a rewiring 
of the straight paths of $X_t$, following the 
construction of $v^\X$ in \eqref{equ:vxoracle}.  
For general convex $c$, 
a similar proof using Jensen's inequality is shown in Section~\ref{sec:cost}.

\paragraph{Reflow, straightening, fast simulation} 
As shown in Figure~\ref{fig:toystar}, 
when we recursively apply the procedure $\vv Z^{k+1} = \rectifyflow((Z_0^k, Z_1^k))$, 
 the paths of the $k$-rectified flow $\vv Z^k$ 
are increasingly 
straight, and hence easier to simulate numerically, as $k$ increases.  %
This straightening tendency can be guaranteed theoretically. 

\begin{figure}[h]
~~~~~~
\begin{tabular}{wc{0.15\textwidth}wc{0.32\textwidth}wc{0.22\textwidth}wc{0.2\textwidth}}
\includegraphics[height=.18\textwidth]{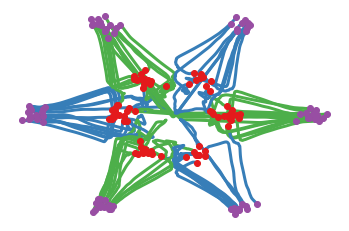} \hspace{-.05\textwidth} & 
\includegraphics[height=.18\textwidth]{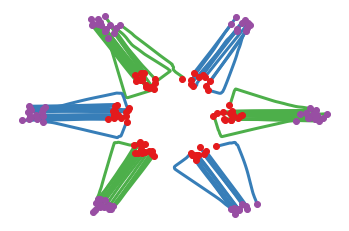} & \hspace{-.05\textwidth} 
\includegraphics[height=.18\textwidth]{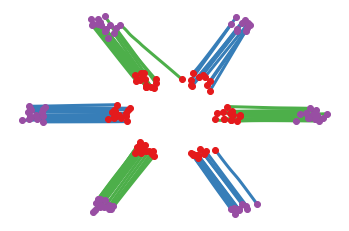} & \hspace{-.00\textwidth} 
\includegraphics[height=.18\textwidth]{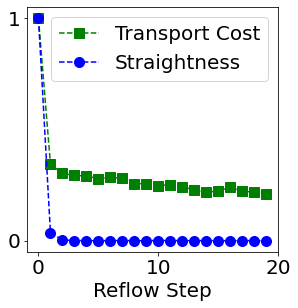}  \\
\begin{tabular}{l}
\small (a)The 1st rectified flow $\vv Z^1$ \\ %
\scriptsize $\vv Z^1=\rectifyflow((X_0,X_1))$
\end{tabular}
& 
\begin{tabular}{l}
\small (b) Reflow $\vv Z^2$ \\ %
\scriptsize $\vv Z^2=\rectifyflow((Z_0^1,Z_1^1))$
\end{tabular}
& 
\begin{tabular}{l}
\small (c)  Reflow $\vv Z^3$ \\ %
\scriptsize $\vv Z^3=\rectifyflow((Z_0^2,Z_1^2))$
\end{tabular}
&  
\begin{tabular}{l}
\small(d)
Transport cost, \\ %
\small Straightness
\end{tabular}
\end{tabular}
\definecolor{dotscolor1}{HTML}{984EA3}
\definecolor{dotscolor2}{HTML}{E41A1C}
\caption{(a)-(c) Samples of trajectories  drawn from the reflows on a toy example (\textcolor{black}{$\tg_0$: purple dots},
\textcolor{black}{$\tg_1$: red dots}; 
the green and blue lines are trajectories connecting different modes of $\tg_0,\tg_1$). 
(d) The
straightness and the relative L2 transport cost   v.s. 
the reflow steps; the values are scaled into $[0,1]$, so 0 corresponds to  straight lines and $L_2$ optimal transport; see Section~\ref{sec:toy} for more information.  %
We use the non-parametric model in \eqref{equ:npfunc} with bandwidth $h=0.1$. 
}
\label{fig:toystar}
\end{figure}

Specifically, 
we say that a flow $\d Z_t = v(Z_t, t)\dt$ %
is straight if we have almost surely that $Z_t = t Z_1 + (1-t) Z_0$ for $\forall t\in[0,1]$, 
or equivalently $v(Z_t,t)=Z_1-Z_0=\const$ following each path. 
(More precisely, ``straight'' here refers to  straight with a constant speed.) 
Such straight flows are highly attractive computationally 
as it is effective a one-step model: 
 a single Euler step update $Z_1 = Z_0 + v(Z_0,0)$ calculates the exact $Z_1$ from $Z_0$. %
 Note that the linear interpolation $\vv X= \{X_t\}$ is straight by this definition but it is not a (causal) flow and hence can not be simulated without an oracle assess to draws of both  $\tg_0$ and $\tg_1$.  
 In comparison, it is non-trivial 
to make a flow $\d Z_t = v(Z_t,t)\dt$  straight, because if so $v$ must satisfy the inviscid Burgers' equation $\partial_t v + (\partial_z v) v = 0$: %
$$
\ddt v(Z_t,t)
= \partial_z v(Z_t, t)  \dot Z_t + \partial_t v(Z_t,t) = 
\partial_z v(Z_t, t)  v(Z_t,t) + \partial_t v(Z_t,t) = 0. 
$$
More generally, 
we can measure the straightness of any 
continuously differentiable process $\vv Z=\{Z_t\}$ by %
\bbb \label{equ:straight}
S(\traj Z) = \int_0^1 \e{\norm{(Z_1-Z_0) -\dot Z_t}^2} \dt.
\eee 
$S(\traj Z) = 0$ means exact straightness. 
A flow whose $S(\traj Z)$  is small 
has nearly straight  paths and hence can be simulated accurately using numerical solvers with a small number of discretization steps.  %
Section~\ref{sec:straight} shows that 
applying rectification recursively 
provably decreases 
 $S(\traj Z)$ towards zero. %

\emph{[Theorem~\ref{thm:convergence}] 
Let $\traj Z^k $ be the  $k$-th rectified flow %
induced from $(X_0,X_1)$. 
Then %
$$
\min_{k\in\{0\cdots K\}} S(\traj Z^k) \leq \frac{\E[\norm{X_1-X_0}^2]}{K}.  
$$
}

As shown Figure~\ref{fig:cat_k}, 
applying one step of reflow %
can already provide nearly straight flows that yield good performance when simulated with a single Euler step.  
It is not recommended to apply 
too  many reflow steps as it may accumulate  estimation error on $v^X$.

\paragraph{Distillation} 
After obtaining the $k$-th rectified flow $\vv Z^k$, we can further improve the inference speed by distilling the relation of $(Z_0^k, Z_1^k)$ into a neural network $\hat T$ to directly predict  $Z_1^k $ from $Z_0^k$ without simulating the flow. 
Given that the flow is already nearly straight (and hence well approximated by the one-step update), 
and  the distillation can be done efficiently. 
In particular, if we take $\hat T(z_0) = z_0 + v(z_0, 0)$, 
then the loss function for distilling $\vv Z^k$ is $\e{\norm{(Z_1^k - Z_0^k) - v(Z_0^k, 0)}^2}$, which is the term in \eqref{equ:mainf} when $t =0$.   %

We should highlight the difference between  distillation and  rectification:  %
 distillation attempts to faithfully approximate the coupling $(Z_0^k,Z_1^k)$ 
 while rectification  yields a different coupling $(Z_0^{k+1}, Z_1^{k+1})$ 
 with lower transport cost and more straight flow. 
 Hence, distillation should be applied only in the final stage when we want to fine-tune the model for fast one-step inference.

\paragraph{%
On the velocity field $v^X$}   
If  $X_0$ yields a conditional density function $\rho(x_0~|~x_1)$ when conditioned on $X_1 = x_1,$  %
then the optimal velocity field $v^X(z,t) = \E[X_1-X_0|X_t=z]$ can be represented by %
\bbb  \label{equ:vxformula}
v^X(z,t) = \e{\frac{X_1-z}{1-t} \eta_t(X_1, z)}, 
&&
\eta_t(X_1, z) = {\rho\left (\frac{z-t X_1}{1-t} ~\bigg |~X_1 \right )}\bigg  /{\e{\rho\left (\frac{z-t X_1}{1-t} ~\bigg |~X_1\right )}},
\eee 
where the expectation $\e{\cdot}$ is taken w.r.t. $X_1\sim \tg_1$. This can be seen by noting that $X_0 = \frac{z-t X_1}{1-t}$ and $X_1-X_0 = \frac{X_1-z}{1-t}$, when conditioned on $X_t = z$.
Hence, if  $\rho$ is positive and continuous everywhere, then $v^X$ is well defined and continuous on $\RR^d\times [0,1)$. 
Further, if $\log \eta_t$ is continuously differentiable w.r.t. $z$, we can show that 
$$
\dd_z v^X(z,t) = 
\frac{1}{1-t} \E\left [ 
\left (({X_1-z}) \dd_z \log \eta_t(X_1,z) - 1 \right ) \eta_t(X_1,z) 
\right]. 
$$
Note that $\d Z_t = v^X(Z_t,t)\dt $ 
is guaranteed to have a unique solution if 
$v^X$ is uniformly Lipschitz continuous on $[0,a]$ for any $a<1$.

If %
$X_0|X_1=x_1$ does not yield a conditional density function, 
$v^X(z,t)$ may be undefined or discontinuous, 
making the %
ODE $\d Z_t = v^X(Z_t,t)\dt$ 
ill-behaved. 
A simple fix is to add $X_0$ with a Gaussian noise $\xi \sim \normal(0, \sigma^2I)$ independent of $(X_0,X_1)$ to yield a smoothed variable $\tilde X_0 = X_0 +  \xi$, and transfer $\tilde X_0$ to $X_1$ using rectified flow.  
This would effectively give a randomized mapping of form $T(X_0 + \xi)$ transporting  $\tg_0$ to $\tg_1$. %

\paragraph{Smooth function approximation}
Following \eqref{equ:vxformula}, 
we can \emph{exactly} calculate $v^X$ 
if the conditional density function $\rho(\cdot|x_1)$ exists and is known, 
and $\tg_1$ is the empirical measure of a finite number of points (whose expectation can be evaluated exactly). 
In this case, running the rectified flow 
forwardly would 
precisely recover the points in $\tg_1$. 
This, however, is not practically useful in most cases as it completely overfits the data.
Hence, it is both necessary and beneficial 
to fit 
$v^X$ with a smooth function approximator such as neural network or non-parametric models, 
to obtain 
smoothed distributions  
with novel samples 
that  are %
practically useful.

Deep neural networks are no doubt the best function approximators for large scale problems. 
For low dimensional problems, 
the following simple Nadaraya–Watson style non-parametric estimator of  $v^X$ 
can yield a good approximation to the exact rectified flow 
without knowing
the conditional density $\rho$: 
\bbb \label{equ:npfunc}
v^{X,h}(z,t) 
=  \e{\frac{X_1-z}{1-t}\omega_h(X_t, z)},  %
\eee  
where $\omega_h(X_t,z) = \frac{\kappa_h(X_t, z)}{\e{\kappa_h( X_t,z)}} $, and  $\kappa_h(x,z)$ is a smoothing kernel with a bandwith parameter $h>0$ 
that measures the similarity between $z$ and $x$.  
Taking the Gaussian RBF kernel $\kappa_h(x, z) = \exp(-\norm{x-z}^2/2h^2)$, 
then when $h\to 0^+$, 
it can be shown that 
$v^{X,h}(z,t)$   converges to $v^X(z,t) = \e{\frac{X_1-z}{1-t} ~|~X_t=z}$
on points $z$ 
that can be attained by $X_t$ (i.e., the conditional expectation $\e{\cdot~|~X_t = z}$ exists. ). 
On points $z$ that $X_t$ can not attain, 
$v^{X,h}(z,t)$ extrapolates the value by finding the $X_t$ that is close to $z$. 
In practice, we replace the expectations in \eqref{equ:npfunc} with empirical averaging. 
We find that $v^{X,h}$ performs well in practice 
because it is a mixture of linear functions that always point to a point 
in the support of $\tg_1$.

\subsection{A Nonlinear Extension} %
\label{sec:nonlinear}

We %
present a nonlinear
  extension of rectified flow  
  in which the  linear interpolation  $X_t$  is replaced by any time-differentiable curve connecting $X_0$ and $X_1$. 
    Such generalized rectified flows can still  transport $\tg_0$ to $\tg_1$ (Theorem~\ref{thm:marginal}),  
    but no longer guarantee to decrease convex transport costs, or have the straightening effect. 
Importantly, 
the method of  probability flows \citep{song2020score} and  DDIM \citep{song2020denoising} can be viewed (approximately) as special cases of this framework, 
allows us to clarify the connection with and the advantages over these methods. 

Let 
$\X = \{X_t\colon t\in[0,1]\}$  be  any  time-differentiable 
random process that connects $X_0$ and $X_1$. Let $\dot X_t$ be the time derivative of $X_t$. 
The (nonlinear) rectified flow induced from $\vv X$ is defined as 
\bb 
\d Z_t = v^\X(Z_t,t)\dt, ~~~\text{with}~~~~ Z_0 = X_0, ~~~~   \text{and}~~~~ v^\X(z,t) = \e{\dot X_t ~|~ X_t =t}.%
\ee 
We can estimate $v^\X$ by solving 
\bbb \label{equ:Lgvphi}
\min_{v} %
\int_0^1 
\e{ w_t 
\norm{v(X_t, ~t) - \ddtdot X_t }^2}   \dt 
, 
\eee 
where %
$w_t \colon (0,1)\to (0,+\infty)$ is a positive weighting sequence ($w_t=1$ by default). 
When using the linear interpolation $X_t=tX_1 + (1-t)X_0$, we have $\dot X_t = X_1- X_0$ and 
\eqref{equ:Lgvphi} with $w_t=1$ reduces to \eqref{equ:mainf}. 
As we show in Theorem~\ref{thm:marginal}, 
the flow $\vv Z$ given by this method 
still preserves the marginal laws of $\vv X$, that is, $\law(Z_t)=\law(X_t)$, $\forall t\in[0,1]$, and hence  
$(Z_0,Z_1)$ remains to be a coupling of $\tg_0,\tg_1$. 
However, if $\vv X$ is not straight, 
$(Z_0,Z_1)$ no longer guarantees to decrease the convex transport costs over $(X_0,X_1)$. More importantly,  the reflow procedure no longer straightens the paths of $Z_t$.

A simple class of interpolation processes is $X_t = \alpha_t X_1 + \beta_t X_0$ where $\alpha_t$ and $\beta_t$ 
are two differentiable sequences that satisfy $\alpha_1=\beta_0=1$ and $\alpha_0=\beta_1 = 0$ to ensure that the process equals $X_0,X_1$ at the starting and end points. In this case,
we have $\dot X_t = \dot \alpha_t X_1 + \dot \beta_t X_0$ in \eqref{equ:Lgvphi} where $\dot \alpha_t$ and $\dot\beta_t$ are the time derivatives of $\alpha_t$ and $\beta_t$. 
The shape of the curve is controlled by the relation of $\alpha_t$ and $\beta_t$. If we take $ \beta_t =1-\alpha_t$ for all $t$, then  $X_t$ have  straight paths but does not travel at a constant speed; it can be viewed as a time-changed variant of the canonical case $X_t = t X_1 + (1-t)X_0$ when $t$ is reparameterized to $\alpha_t$. %
When $\beta_t\neq 1-\alpha_t$, the paths of $X_t$ 
are not straight lines except some special cases (e.g., $\dot\alpha_t X_1=0$ or $\dot \beta_t X_0 = 0$, or $X_1 = a X_1$ for some $a\in\RR$).

\subsubsection{Probability Flow ODEs and DDIM} 
The probability flow ODEs (PF-ODEs) \cite{song2020score}  and denoising diffusion implicit models (DDIM)  \cite{song2020denoising} 
are methods for learning  ODE-based generative models of $\tg_1$ from a spherical Gaussian initial distribution $\tg_0$, derived by converting a SDE learned by  
denoising diffusion methods to an ODE 
with equivalent marginal laws. 
In \cite{song2020score}, 
three types of PF-ODEs are derived from 
three types of SDEs learned as score-based generative models,  including 
variance-exploding (VE) SDE,
variance-preserving (VP) SDE,
and sub-VP SDE, 
which we denote by 
VE ODE, VP ODE, and sub-VP ODE, respectively. 
VP ODE is equivalent to the continuous time limit of DDIM, 
which is derived from the denoising diffusion probability model (DDPM) \cite{ho2020denoising}. 
As the derivations of PF-ODEs and DDIM 
require advanced tools in stochastic calculus, 
we limit our discussion on the final algorithmic  procedures suggested in \cite{song2020score, ho2020denoising}, which we summarize in Section~\ref{sec:diffusion}.  
The readers are referred to \cite{song2020score, song2020denoising} for the details.

\emph{[Proposition~\ref{pro:ddim}] 
All variants of PF-ODEs 
can be 
viewed as instances of \eqref{equ:Lgvphi}  
when using 
$X_t = \alpha_t X_1 +
\beta_t \xi$ for some $\alpha_t,\beta_t$ with $\alpha_1=1,\beta_1=0$, where 
$\xi \sim \normal(0, I)$ is a standard Gaussian random variable. 
}

Here we need to use introduce $\xi$ to replace $X_0$ because the choices of $\alpha_t$ and $\beta_t$  suggested in \cite{song2020score, song2020denoising, ho2020denoising} 
do not satisfy the boundary condition of $\alpha_0 = 0$ and $\beta_0 = 1$ at $t=0$, and hence $X_0 \neq \xi$. 
Instead,  
in these methods, the initial distribution $X_0\sim \tg_0$ is implicitly defined as $X_0 = \alpha_0  X_1 + \beta_0 \xi$, 
which is approximated by $X_0 \approx \beta_0 \xi $ by making $\alpha_0 X_1 \ll \beta_0 \xi$.   
Hence, $\tg_0$ is set to be  $\normal(0, \beta_0^2 I)$ in these methods. 
Viewed through our framework, 
there is no reason to restrict $\xi$ to be $\normal(0, \beta_0^2 I)$,  %
or not set $\alpha_0=0,\beta_0=1$ to avoid the approximation. 

\begin{figure}[h]
\begin{tabular}{wc{0.3\textwidth}|wc{0.3\textwidth}|wc{0.3\textwidth}}
{\small Rectified flow} 
& {\small VP ODE} 
& {\small sub-VP ODE} 
\\
 \begin{tabular}{c}
\small ($\alpha_t=t$, $\beta_t = 1-t$) 
\end{tabular}
&  \begin{tabular}{c}
\small  ($\alpha_t$ in \eqref{equ:vpode}, $\beta_t = \sqrt{1-\alpha_t^2}$) 
\end{tabular}
&  \begin{tabular}{c}
\small($\alpha_t$ in \eqref{equ:vpode}, $\beta_t = {1-\alpha_t^2}$)
\end{tabular}\\
 \includegraphics[width=.2\textwidth]{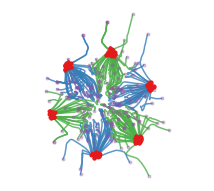} 
   \hspace{-2em} \includegraphics[width=.2\textwidth]{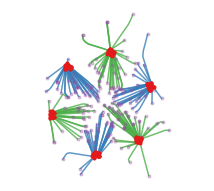} 
& \includegraphics[width=.2\textwidth]{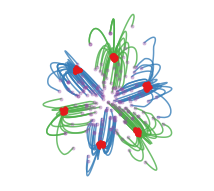} 
 \hspace{-2em}\includegraphics[width=.2\textwidth]{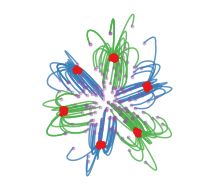} 
& \includegraphics[width=.2\textwidth]{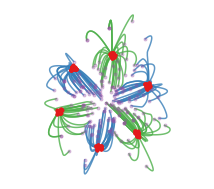} 
    \hspace{-2em}\includegraphics[width=.2\textwidth]{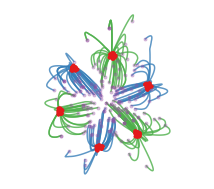}  
\\
{\scriptsize \reflow{1}} \hfill
  {\scriptsize \reflow{2}} 
& {\scriptsize \reflow{1}} \hfill
 {\scriptsize \reflow{2}} 
& {\scriptsize \reflow{1}} \hfill
 {\scriptsize \reflow{2}} 
\end{tabular}
\caption{
Comparing rectified flow with VP ODE and sub-VP ODE when $\tg_0 = \normal(0,I)$ (purple dots) and $\tg_1$ is a low variance Gaussian mixture shown as the red dots. 
The linear rectified flow yields nearly straight trajectories with one step of reflow. But the trajectories of VP ODE and sub-VP ODE are curved and can not be straightened by reflowing. 
} 
\label{fig:gauss2dots}
\end{figure}

\newcommand{\trimlen}{1.5cm}
\begin{figure}[h]
    \centering
 \resizebox{\textwidth}{!}{    
\begin{tabular}{c|c|c|c|c}   
Time-Discretization & 
Rectified Flow & 
VP ODE & 
sub-VP ODE & 
VP ODE (const speed) \\
Steps $N$ & 
{\small $\alpha_t = t, \beta_t = 1-t$} & 
{\small $\alpha_t$ in \eqref{equ:vpode}, $\beta_t =\sqrt{1-\alpha_t^2}$} & 
{\small $\alpha_t$ in \eqref{equ:vpode}, $\beta_t = {1-\alpha_t^2}$}  & 
{\small $\alpha_t=t$, $\beta_t = \sqrt{1-\alpha_t^2}$} \\ \hline
 \small  $N=1$ & 
    \includegraphics[width=.2\textwidth, trim={0cm 3cm 0 3cm},clip]{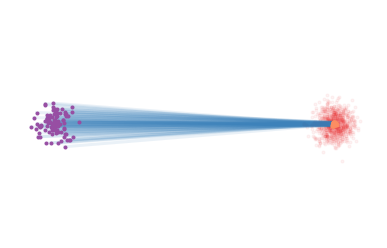} & 
    \includegraphics[width=.2\textwidth, trim={0cm 3cm 0 3cm},clip]{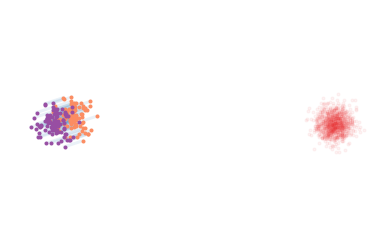} & 
    \includegraphics[width=.2\textwidth, trim={0cm 3cm 0 3cm},clip]{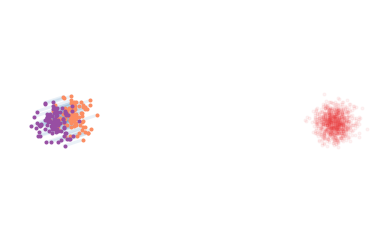} & 
    \includegraphics[width=.2\textwidth, trim={0cm 3cm 0 .4cm},clip]{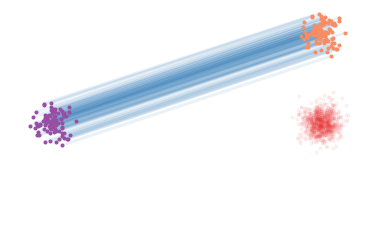}  \\  \hline  
    \small  $N=2$ &    
    \includegraphics[width=.2\textwidth, trim={0cm 3cm 0 3cm},clip]{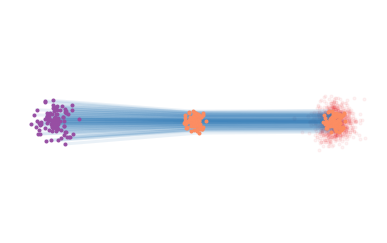} & 
        \includegraphics[width=.2\textwidth, trim={0cm 3cm 0 \trimlen},clip]{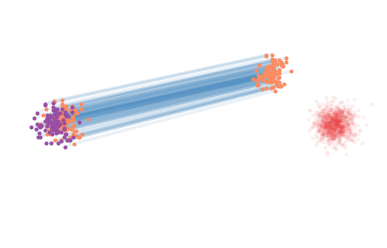} & 
        \includegraphics[width=.2\textwidth, trim={0cm 3cm 0 \trimlen},clip]{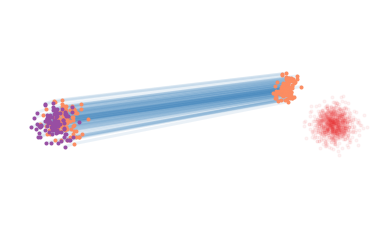} & 
    \includegraphics[width=.2\textwidth, trim={0cm 3cm 0 \trimlen},clip]{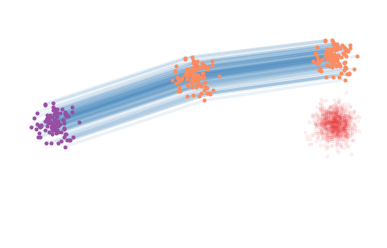}  \\ \hline 
\small  $N=5$ &     
    \includegraphics[width=.2\textwidth, trim={0cm 3cm 0 3cm},clip]{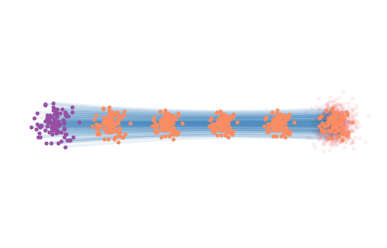} & 
    \includegraphics[width=.2\textwidth, trim={0cm 3cm 0 \trimlen},clip]{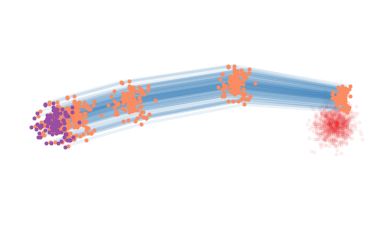} &     
    \includegraphics[width=.2\textwidth, trim={0cm 3cm 0 \trimlen},clip]{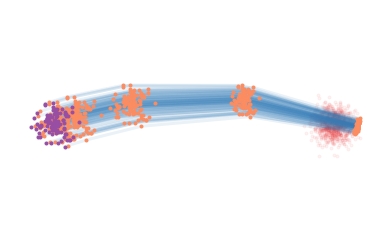} &     
    \includegraphics[width=.2\textwidth, trim={0cm 3cm 0 \trimlen},clip]{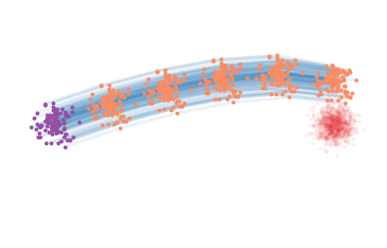}  \\   \hline   
\small  $N=100$ &    
    \includegraphics[width=.2\textwidth, trim={0cm 3cm 0 3cm},clip]{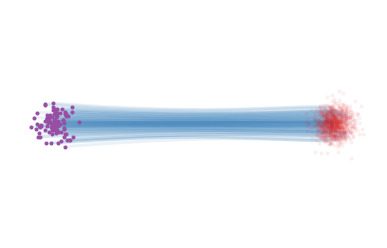} & 
    \includegraphics[width=.2\textwidth, trim={0cm 3cm 0 \trimlen},clip]{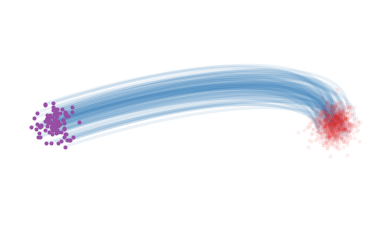} & 
    \includegraphics[width=.2\textwidth, trim={0cm 3cm 0 \trimlen},clip]{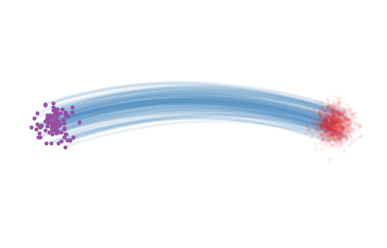}  & 
    \includegraphics[width=.2\textwidth, trim={0cm 3cm 0 \trimlen},clip]{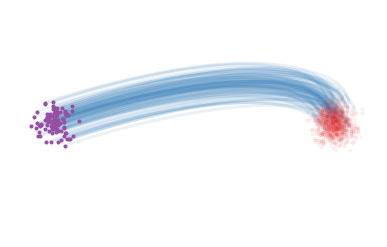}     
\end{tabular}  
}
    \caption{Trajectories of different methods when varying the number of discretization steps $N$ (purple dots: $\tg_0$; red dots: $\tg_1$; orangle dots: intermediate steps; blue curves: flow trajectories).  
    The rectified flow travels in straight lines and progresses uniformly in time; 
    it generates the mean of $\tg_1$ when simulated with a single Euler step, and quickly covers the whole distribution $\tg_1$ with more steps (in this case $N=2$ is sufficient). %
    In comparison, 
    VP ODE and sub-VP ODE travel in curves with non-uniform speed: they tend to be slow in the beginning and speed up in the later phase (much of the update happens when $t{ \scriptstyle\gtrapprox}0.5$). The non-uniform speed can be avoided by setting $\alpha_t =t $ (see the last column). 
    }
    \label{fig:speedtoy}
\end{figure}

\paragraph{VP ODE and sub-VP ODE} 
The VP ODE and sub-VP ODE 
of \cite{song2020score}
 use the following shared $\alpha_t$: 
\bbb \label{equ:vpode}
\text{(sub-)VP ODE:}~~~~
\alpha_t = \exp\left (-\frac{1}{4} a(1-t)^2 %
- \frac{1}{2}b (1-t) 
\right );  && \text{default values: $a=19.9$, $b=0.1$,}
\eee  
where the default values of  $a,b$ are %
 chosen to match the continuous time limit of the shared training procedure of DDIM and DDPM. 
 The difference of VP ODE and sub-VP ODE is on the choice of $\beta_t$, given as follows:
\bbb \label{equ:vpodebeta} 
\text{VP ODE:~~~}\beta_t = \sqrt{1-\alpha_t^2},  &&
\text{sub-VP ODE:~~~}
\beta_t = {1-\alpha_t^2}. 
\eee  

As $\beta_0 \approx 1$ in both VP and sub-VP ODE, 
the $\tg_0$ in both cases are taken as  $\normal(0,I)$. %

The choices 
of $\alpha_t,\beta_t$ 
above  are the consequence of the 
SDE-based derivation in \cite{song2020score}. %
However, 
they are not well-motivated when we exam the path properties  of the induced  
ODEs:  %

\emph{$\bullet$~Non-straight paths:} 
Due to choices of $\beta_t$ in  \eqref{equ:vpodebeta}, the trajectories of VP ODE and sub-VP ODE are curved in general,  
and can not be straightened by the reflow procedure. 
We should choose 
 $\beta_t = 1-\alpha_t$ to induce straight paths.

\begin{wrapfigure}{r}{0.4\textwidth}
  \vspace{-1\baselineskip}  
  \begin{center}
    \includegraphics[width=0.4\textwidth]{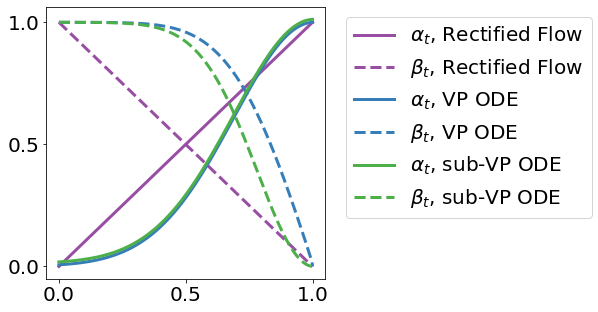}
  \end{center}
  \vspace{-.5\baselineskip}
  \caption{$t$ vs. $\alpha_t,\beta_t$ of different methods. 
  }
  \vspace{-.5\baselineskip}  
\end{wrapfigure}
\emph{$\bullet$~Non-uniform speed:} 
The exponential form of $\alpha_t$  in \eqref{equ:vpode} 
is a consequence of using  Ornstein–Uhlenbeck processes in the derivation of SDE models \cite{song2020score, ho2020denoising}.  
However, there is no clear advantage of using \eqref{equ:vpode} for  ODEs. 
As shown in Figure~\ref{fig:speedtoy},  
the $\alpha_t$ and $\beta_t$ of VP and sub-VP ODE 
change slowly in the early phase ($t{\scriptstyle \lessapprox} 0.5$).
As a result, 
the flow also moves slowly in beginning 
and hence most of the updates are concentrated in the later phase. 
Such non-uniform update speed, 
in addition to the non-straight paths,  
make VP ODE and sub-VP ODE perform sub-optimally 
when using large step sizes, even
for transport between simple spherical Gaussian distributions (see Figure~\ref{fig:speedtoy}).  
As we show in the last column of Figure~\ref{fig:speedtoy}, 
 changing the exponential $\alpha_t$ to the  linear function $\alpha_t = t$ in VP ODE 
 allows us to get a uniform update speed while 
 preserving the same continuous-time trajectories.

\paragraph{VE ODE} 
The VE ODE  of  \cite{song2020score} 
uses  
$\alpha_t = 1$ and 
$\beta_t = \sigma_{\min}\sqrt{r^{2(1-t)}-1}$   
where $\sigma_{\min} =0.01$ by default $r$ is set such that   $\sigma_{\max} \defeq r\sigma_{\min}$ is as large as the maximum Euclidean distance between all pairs of training data points from $\tg_1$  (Technique 1 of \cite{song2020improved}). 
Assume that %
$\sigma_{\max}^2$ 
is much larger than both $\sigma_{\min}^2$ and the variance of $X_1$,  
then  $X_0 = X_1 + \beta_0 \xi \approx \sigma_{\max}\xi $, and 
we can set the  initial distribution to be  $\tg_0\sim \normal(0,\sigma_{\max}^2 I)$, which has much larger variance than $\tg_1$. 
Hence, VE ODE can not be applied to (and not shown in) the toys in Figure~\ref{fig:gauss2dots} and Figure~\ref{fig:speedtoy}. 
As the case of (sub-)VP ODE, the restriction 
on $\xi$ is in fact unnecessary and requirement that $\sigma_{\max}$ is unnatural viewed from our framework. 
On the other hand, the trajectories of $X_t$ in VE ODE are indeed straight lines, 
because the direction of $\dot X_t = \dot \beta_t \xi$ is always the same as $\xi$. However, the choice of $\beta_t$ causes a  non-uniform speed issue similar to that of (sub-)VP ODE.  

Following \cite{song2020score, ho2020denoising}, a line of works have been proposed to improve the choices of $\alpha_t,\beta_t$, 
but remain to be constrained 
by the basic design space from the SDE-to-ODE derivation; see for example \cite{nichol2021improved, elucidating, zhang2022gddim}.

To summarize, the simple nonlinear rectified flow framework in \eqref{equ:Lgvphi} 
both simplifies and extends the existing framework, and sheds a number of importance insights: %

$\bullet$~Learning ODEs can be considered directly and independently without resorting to diffusion/SDE methods; 

$\bullet$~The paths of the learned ODEs can be specified by any smooth interpolation curve $X_t$ of $X_0$ and $X_1$;  

$\bullet$~The initial distribution $\tg_0$ can be chosen arbitrarily, 
independent with the choice of 
the interpolation 
$X_t$.  

 $\bullet$~The canonical linear interpolation  $X_t = t X_1 + (1-t) X_0$ should be recommended as a default choice. 
 
On the other hand, non-linear choices of $X_t$ can be useful %
when we want to incorporate certain non-Euclidan geometry structure of the variable, 
or want to place certain constraints on the  trajectories of the ODEs.  
We leave this for future works.

\section{Theoretical Analysis} 
\label{sec:theory}
We present the theoretical analysis for rectified flow. The results are summarized as follows.

$\bullet$~[Section~\ref{sec:marginal}] %
All nonlinear rectified flows with any interpolation $X_t$ preserve the marginal laws.  

$\bullet$~[Section~\ref{sec:cost}] The rectified flow (with the canonical linear interpolation) reduces convex transport costs.  %

$\bullet$~[Section~\ref{sec:straight}] %
Reflow guarantees to straighten the (linear) rectified flows. 

$\bullet$~[Section~\ref{sec:stcouplings}] %
We clarify the relation between 
straight couplings and $c$-optimal couplings. 

$\bullet$~[Section~\ref{sec:diffusion}] We %
establish  PF-ODEs as instances of nonlinear rectified flows.

\subsection{The Marginal Preserving Property}
\label{sec:marginal} 
The marginal preserving property that $\law(Z_t)=\law(X_t)$ for $\forall t$ is a general property of the nonlinear rectified flows in  \eqref{equ:Lgvphi},
regardless whether the interpolation $X_t$ is straight or not. 
\begin{mydef}
For a path-wise continuously differentiable random process  $\X = \{X_t\colon t\in[0,1]\}$, its expected velocity $v^\X$ is defined as 
$$v^\X(x, t) = 
\E[\dot X_t ~|~ X_t = x],~~~~~ \forall x\in \supp(X_t).
$$
For $x\not\in \supp(X_t)$, the conditional expectation is not defined and we set 
$v^\X$ arbitrarily, say $v^\X(x,t) = 0$. 
\end{mydef}

\begin{mydef}
We call that $\vv X$ is rectifiable if $v^\X$ is locally bounded and 
the solution of the integral equation below exists and  is unique: %
\bbb \label{equ:znonlinear} 
 Z_t = Z_0 + \int_0^t v^\X(Z_t, t) \dt,~~~~\forall t\in[0,1],~~~~ Z_0 = X_0.
\eee 
In this case, $\vv Z = \{Z_t\colon t\in[0,1]\}$ is called the rectified flow induced from $\vv X$. 
 \end{mydef}

\begin{thm}\label{thm:marginal}
Assume $\vv X$ is rectifiable and $\vv Z$ is its rectified flow. 
 Then  $\law(Z_t) =\law(X_t)$ for $\forall t\in[0,1]$. 
\end{thm}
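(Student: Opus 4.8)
The plan is to show that $\law(X_t)$ and $\law(Z_t)$ satisfy the same continuity equation with the same initial condition, and then invoke uniqueness of the solution of that PDE (in the weak/distributional sense). First I would fix an arbitrary test function $\phi \in C_c^\infty(\RR^d)$ and compute $\ddt \E[\phi(X_t)]$. Since $\vv X$ is path-wise continuously differentiable, the chain rule gives $\ddt \phi(X_t) = \nabla \phi(X_t) \cdot \dot X_t$, so
\bb
\ddt \E[\phi(X_t)] = \E[\nabla \phi(X_t)\cdot \dot X_t] = \E\big[\nabla\phi(X_t) \cdot \E[\dot X_t \mid X_t]\big] = \E[\nabla \phi(X_t)\cdot v^\X(X_t,t)],
\ee
using the tower property and the definition of $v^\X$. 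In distributional terms this says $\partial_t \law(X_t) + \nabla\cdot(v^\X(\cdot,t)\,\law(X_t)) = 0$.

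Next I would do the analogous computation for $Z_t$. Because $\vv X$ is assumed rectifiable, $Z_t$ solves the integral equation \eqref{equ:znonlinear}, so $Z_t$ is (absolutely) continuous in $t$ with $\dot Z_t = v^\X(Z_t,t)$, and the chain rule gives
\bb
\ddt \E[\phi(Z_t)] = \E[\nabla\phi(Z_t)\cdot \dot Z_t] = \E[\nabla\phi(Z_t)\cdot v^\X(Z_t,t)],
\ee
i.e. $\law(Z_t)$ solves the same continuity equation $\partial_t \mu_t + \nabla\cdot(v^\X(\cdot,t)\mu_t)=0$, and moreover $\law(Z_0) = \law(X_0)$ since $Z_0 = X_0$. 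So both $t\mapsto \law(X_t)$ and $t\mapsto\law(Z_t)$ are weak solutions of the same linear transport (continuity) equation with the same initial data.

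The final step is to conclude $\law(Z_t)=\law(X_t)$ for all $t$ from uniqueness. The cleanest route avoids citing a general uniqueness theorem for continuity equations (which would need regularity/growth hypotheses on $v^\X$ that the theorem does not explicitly assume) and instead argues directly: subtract the two integrated identities to get that $g(t) \defeq \E[\phi(X_t)] - \E[\phi(Z_t)]$ satisfies $g(0)=0$ and $g(t) = \int_0^t \big(\E[\nabla\phi(X_s)\cdot v^\X(X_s,s)] - \E[\nabla\phi(Z_s)\cdot v^\X(Z_s,s)]\big)\d s$; since both $X_s$ and $Z_s$ have the same law at every $s$ where we have already established equality, a continuity/bootstrap argument in $t$ closes it --- but this is circular unless done carefully. \textbf{The main obstacle} is exactly this: turning ``both laws solve the same continuity equation'' into ``the laws are equal'' without over-assuming regularity of $v^\X$. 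I expect the paper handles it by the test-function/martingale argument above combined with an appeal to the assumed \emph{uniqueness} of the ODE solution (which propagates to uniqueness of the one-particle marginal flow via a coupling/flow-map argument), or alternatively by directly verifying that pushing $X_0$ along the flow map of $v^\X$ reproduces $\law(X_t)$ because, for each fixed $x$ in the support, the deterministic curve $t\mapsto$ (solution started at $x$) agrees with the conditional dynamics of $X_t$ in the averaged sense encoded by the continuity equation. I would present the clean test-function computation as the heart of the proof and flag the uniqueness step as the place where the rectifiability hypothesis is essential.
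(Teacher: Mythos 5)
Your computation of $\frac{\d}{\d t}\E[\phi(X_t)] = \E[\nabla\phi(X_t)\cdot v^{\X}(X_t,t)]$ and the parallel computation for $Z_t$, leading to the conclusion that both laws solve the same continuity equation with the same initial data, is exactly the paper's argument. The first two steps of your proposal match the paper line by line.

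Where your proposal falls short is the uniqueness step, which you yourself flag. Your attempted ``direct bootstrap'' ($g(t) = \int_0^t(\cdots)\,\d s$ and argue by continuity) does not work and you correctly note it is circular: the integrands involve the laws of $X_s$ and $Z_s$ composed with $v^{\X}$, and nothing forces them to agree before you know the laws agree. More importantly, you treat the hypothesis ``rectifiable'' as if it only gives you existence and uniqueness for the ODE trajectory starting at $X_0$, and worry that transferring this to uniqueness of the \emph{measure-valued} continuity-equation solution requires extra regularity on $v^{\X}$ that the theorem doesn't assume. The paper resolves this precisely by citing a result that makes that transfer automatic: Corollary~1.3 of Kurtz (2011) (equivalently Theorem~4.1 of Ambrosio (2008), the superposition principle) states that uniqueness of solutions of the ODE $\d Z_t = v^{\X}(Z_t,t)\,\d t$ is \emph{equivalent} to uniqueness of distributional solutions of the associated continuity equation $\dot\mu_t + \nabla\cdot(v^{\X}_t\mu_t)=0$. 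Since rectifiability is defined to include uniqueness of the integral equation, this reference closes the gap with no additional smoothness hypotheses. Your second speculation (``appeal to the assumed uniqueness of the ODE solution, which propagates to uniqueness of the marginal flow'') is the right mechanism; you just need to know it has a name and a citation rather than re-derive it.
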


\begin{proof} 
For any compactly supported continuously differentiable test function $h\colon \RR^d\to \RR$, we have 
\bbb  \label{equ:ehxt}
\ddt \E[h(X_t)]
 = \E[\dd h(X_t)\tt \dot X_t ] 
 = \E[\dd h(X_t)\tt v^\X(X_t,t)],  %
\eee  
where we used  $v^\X(X_t,t) = \E[\dot X_t |X_t]$.  
By definition, 
this %
is equivalent to that 
$\tg_t \defeq \law(X_t)$ solves 
in the sense of distributions the continuity equation with drift $v_t^\X \defeq v^\X(\cdot,t)$: 
\bbb \label{equ:contieq}
\dot \tg_t + \div (v_t^\X \tg_t) = 0.  
\eee 
To see the equivalence of \eqref{equ:ehxt} and \eqref{equ:contieq}, %
we can 
multiply \eqref{equ:contieq} with $h$ and integrate both sides:  
\bb 
0 = 
\int h(\dot \tg_t + \div (v_t^\X \tg_t)) 
= \int h \dot \tg_t - \dd h\tt v_t^\X \tg_t =  \ddt  \E[h(X_t) ]  - \E[\dd h(X_t)\tt v^\X(X_t,t)], 
\ee 
where we use integration by parts that $\int h\div (v^\X_t \tg_t)= - \int \dd h\tt (v^\X_t \tg_t)$. 

Because $Z_t$ is driven by the same velocity field $v^\X$, its  marginal law $\law(Z_t)$ solves the very same equation 
with the same initial condition ($Z_0=X_0$).  
Hence,  the equivalence of $\law(Z_t)$ and $\law(X_t)$ follows 
if the solution of \eqref{equ:contieq} is unique, 
which is equivalent to the uniqueness of the solution of $\d Z_t = \vofX(Z_t,t)$ following Corollary 1.3 of \citet{kurtz2011equivalence} (see also Theorem 4.1 of \citet{ambrosio2008existence}). 
\end{proof}

\subsection{Reducing Convex Transport Costs}
\label{sec:cost}
The fact that $(Z_0,Z_1)$ yields no larger convex transport costs than $(X_0,X_1)$ 
is a consequence of using the special linear interpolation $X_t = t X_1 + (1-t)X_0$ as the geodesic of Euclidean space. 

\begin{mydef}
A coupling $(X_0,X_1)$ is called rectifiable if its linear interpolation process $\vv X = \{tX_1+(1-t)X_0\colon t\in[0,1]\}$ is rectifiable. 
In this case, the 
$\vv Z=\{Z_t\colon t \in[0,1]\}$ in 
\eqref{equ:znonlinear} is called the 
rectified flow of coupling $(X_0,X_1)$, denoted as $\vv Z = \rectifyflow((X_0,X_1))$, 
and $(Z_0,Z_1)$ is called the rectified coupling of $(X_0,X_1)$, denoted as $(Z_0,Z_1) = \rectify((X_0,X_1)).$
\end{mydef}

\begin{thm} \label{thm:cost}
Assume $(X_0,X_1)$ is rectifiable and $(Z_0,Z_1) = \rectify((X_0,X_1))$. 
Then for any convex function $c\colon \RR^d\to \RR$, we have 
$$\E[c(Z_1 - Z_0)] \leq \E[c(X_1 - X_0)].$$ 
\end{thm}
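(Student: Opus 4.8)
The plan is to express $Z_1 - Z_0$ as a time-average of the flow velocity and to apply Jensen's inequality twice --- first over the time variable, then in conditional form --- using Theorem~\ref{thm:marginal} to pass between the laws of $Z_t$ and $X_t$. The linearity of the interpolation enters precisely because it makes $\dot X_t = X_1 - X_0$ constant along the path, so the endpoint displacement $Z_1 - Z_0$ is an average of velocities that are themselves conditional averages of the \emph{same} displacement $X_1 - X_0$.

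First, since $\vv Z$ solves $\d Z_t = v^X(Z_t,t)\dt$ with $Z_0 = X_0$, integrating gives $Z_1 - Z_0 = \int_0^1 v^X(Z_t,t)\,\dt$. Treating $\dt$ as a probability measure on $[0,1]$ and using convexity of $c$,
\begin{equation}
c(Z_1 - Z_0) \;\le\; \int_0^1 c\big(v^X(Z_t,t)\big)\,\dt .
\end{equation}
Taking expectations and exchanging $\E$ with the $t$-integral (Tonelli, applied after subtracting an affine minorant of $c$ to make the integrand nonnegative) yields
\begin{equation}
\E[c(Z_1 - Z_0)] \;\le\; \int_0^1 \E\big[c(v^X(Z_t,t))\big]\,\dt .
\end{equation}
Next I invoke Theorem~\ref{thm:marginal}: $\law(Z_t) = \law(X_t)$ for every $t$, so $\E[c(v^X(Z_t,t))] = \E[c(v^X(X_t,t))]$. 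Because the interpolation is linear, $v^X(X_t,t) = \E[X_1 - X_0 \mid X_t]$, and the conditional Jensen inequality gives $c(v^X(X_t,t)) \le \E[c(X_1 - X_0)\mid X_t]$ pointwise; taking expectations, $\E[c(v^X(X_t,t))] \le \E[c(X_1-X_0)]$ for each $t$. Substituting into the previous display and integrating this constant bound over $t \in [0,1]$ gives $\E[c(Z_1 - Z_0)] \le \E[c(X_1 - X_0)]$, the claim.

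There is no deep obstacle here; the only care needed is measure-theoretic bookkeeping. A finite convex $c$ on $\RR^d$ is automatically continuous and admits an affine minorant, which legitimizes the two Jensen steps and the Tonelli exchange; if $\E[c(X_1-X_0)] = +\infty$ the inequality is trivial, so we may assume it is finite, which also licenses the conditional Jensen step. The one place the hypotheses genuinely bite is that the \emph{linear} interpolation makes $\dot X_t$ constant in $t$, so the same function $c$, evaluated at the endpoint displacement, controls the path uniformly --- this is exactly why the argument breaks for the nonlinear interpolations of Section~\ref{sec:nonlinear} and why Theorem~\ref{thm:cost} is stated only for the canonical case.
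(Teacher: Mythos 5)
Your proof is correct and follows essentially the same route as the paper's: integrate the ODE to write $Z_1 - Z_0$ as a time-average of $v^X$, apply Jensen over $t$, switch from $Z_t$ to $X_t$ via the marginal-preservation theorem, and apply conditional Jensen to $v^X(X_t,t) = \E[X_1-X_0 \mid X_t]$. Your added remarks on the affine minorant, Tonelli, and the role of constant $\dot X_t$ are sound clarifications of the same argument rather than a different approach.
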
 
\begin{proof} 
The proof is based on elementary applications of Jensen's inequality. 
\bb
\E\left [c(Z_1 - Z_0)\right]
& = \E \left[ c\left(\int_0^1 \vofX(Z_t, t) \dt\right) \right ]  \ant{as $\d Z_t = \vofX(Z_t,t)\dt$}\\
&\leq \E \left[ \int_0^1 c \left (\vofX(Z_t, t)  \right) \dt \right ]  \ant{convexity of $c$, Jensen's inequality}\\
& = \E \left[ \int_0^1 c \left( \vofX(X_t, t)  \right) \dt \right ]  \ant{$X_t$ and $Z_t$ shares the same marginals}\\
& =  \E \left[ \int_0^1 c\left ( \E\left [(X_1-X_0)~|~X_t \right ] \right) \dt \right ] \ant{definition of $\vofX$}\\
& \leq  \E \left[ \int_0^1  \E\left [ c\left(X_1-X_0 \right) |~X_t \right ]   \dt \right ] \ant{convexity of $c$, Jensen's inequality}\\
& = \int_0^1\E\left [c\left(X_1 - X_0\right)\right] \dt  \ant{$\E[\E[(X_1-X_0) | X_t]] = \E[(X_1-X_0)]$}\\
& = \E\left [c\left (X_1 - X_0 \right) \right]. 
\ee 
\end{proof} 
 If $X_t$ is straight but with positive non-constant speed, that is, 
$X_t = \alpha_t X_1 + \beta_t X_0$ with $\beta_t = 1-\alpha_t$ and $\dot \alpha_t\geq0$, 
then we still have $\E[c(Z_1-Z_0)] \leq \E[c(X_1-X_0)]$ 
if $c$ is convex and $m$-homogeneous in that $c(a x) = \abs{a}^m c(x)$ 
for $\forall a\in \RR, x\in \RR^d$, with 
some constant $m\in(0, 1]$.

\subsection{The Straightening Effect}
\label{sec:straight}

A coupling $(X_0,X_1)$ 
is said to be straight 
(or fully rectified)  if 
it is a fixed point of the $\rectify(\cdot)$ mapping.
It is desirable to obtain a straight coupling because its rectified flow is straight and 
hence can be simulated exactly with one step using numerical solvers. 
In this section, 
we first characterize the basic properties of straight couplings, showing that a coupling is straight iff its linear interpolation paths do not intersect with each other. Then, we prove that recursive rectification 
straightens the coupling and its related flow 
with a $\bigO{1/k}$ rate, where $k$ is the number of rectification steps. %

\begin{thm}\label{thm:straightness} 
Assume $(X_0,X_1)$ is rectifiable. Let $X_t =  tX_1 +(1-t)X_0$ and 
$\vv Z =\rectifyflow((X_0,X_1))$. 
Then $(X_0,X_1)$ is a straight coupling iff the following equivalent statements hold. %
\begin{enumerate} 
\item There exists a strictly convex function $c\colon \RR^d\to \RR$, such that $\E[c(Z_1-Z_0)] = \E[c(X_1-X_0)]$. 
\item $(X_0,X_1)$ is a fixed point of $\rectify(\cdot)$, that is, $(X_0,X_1) = (Z_0,Z_1)$. %
 
\item The rectified flow coincides with the linear interpolation process: $\vv X = \vv Z$. 

\item The paths of the linear interpolation $\vv X$ do not intersect: %
\bbb 
\hspace{-.05\textwidth}
V((X_0,X_1)):= 
\int_0^1 \e{\norm{X_1-X_0-\e{X_1-X_0~|~X_t}}^2} \dt = 0, 
\eee 
where %
 $V((X_0,X_1))=0$ 
 indicates 
 that $X_1-X_0=\E[X_1-X_0|X_t]$ almost surely when $t\sim \uniform([0,1])$, 
 meaning that the lines passing through each $X_t$ is unique, and hence no linear interpolation paths intersect. 
\end{enumerate}
\end{thm}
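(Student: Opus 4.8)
The plan is to establish Theorem~\ref{thm:straightness} by proving a cycle of implications among the four statements, together with the baseline fact that each is equivalent to $(X_0,X_1)$ being a straight coupling. The natural cycle is $\text{(4)}\Rightarrow\text{(3)}\Rightarrow\text{(2)}\Rightarrow\text{(1)}\Rightarrow\text{(4)}$, since (3), (2) and the definition of ``straight coupling'' are essentially tautological reformulations of one another, and the genuine mathematical content lives in $\text{(4)}\Rightarrow\text{(3)}$ and $\text{(1)}\Rightarrow\text{(4)}$.

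First I would dispatch the easy links. For $\text{(3)}\Rightarrow\text{(2)}$: if $\vv X=\vv Z$ as processes, then in particular their endpoints agree, so $(X_0,X_1)=(Z_0,Z_1)=\rectify((X_0,X_1))$, i.e.\ $(X_0,X_1)$ is a fixed point; and ``fixed point of $\rectify(\cdot)$'' is exactly the definition of a straight coupling. For $\text{(2)}\Rightarrow\text{(1)}$: a fixed point trivially satisfies $\E[c(Z_1-Z_0)]=\E[c(X_1-X_0)]$ for \emph{every} $c$, in particular for some strictly convex $c$ (e.g.\ $c(\cdot)=\norm{\cdot}^2$). The substantive direction $\text{(1)}\Rightarrow\text{(4)}$ is where strict convexity does the work: trace back through the chain of inequalities in the proof of Theorem~\ref{thm:cost}. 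Equality $\E[c(Z_1-Z_0)]=\E[c(X_1-X_0)]$ forces equality at the second Jensen step, namely $\E[c(\E[X_1-X_0\mid X_t])]=\E[\E[c(X_1-X_0)\mid X_t]]$ (after integrating over $t$); by strict convexity of $c$, equality in Jensen's inequality holds iff the random variable is a.s.\ constant given the conditioning, i.e.\ $X_1-X_0=\E[X_1-X_0\mid X_t]$ almost surely for a.e.\ $t$, which is precisely $V((X_0,X_1))=0$. (I should also note the first Jensen step, applied to $\int_0^1 v^\X(Z_t,t)\dt$, becomes an equality automatically once the velocity is path-wise constant, so there is no loss there.) Finally $\text{(4)}\Rightarrow\text{(3)}$: when $V((X_0,X_1))=0$ we have $v^\X(X_t,t)=X_1-X_0$ a.s., so along the interpolation the velocity field is constant in $t$; this means $\vv X$ itself solves the rectified-flow integral equation \eqref{equ:znonlinear} with the right initial condition, and by the assumed uniqueness of that solution (rectifiability) it must coincide with $\vv Z$, giving $\vv X=\vv Z$.

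I expect the main obstacle to be the careful handling of measure-theoretic subtleties in $\text{(1)}\Rightarrow\text{(4)}$: extracting pointwise (in $t$ and $\omega$) consequences from an equality of integrals requires knowing the integrand inequality $c(\E[\,\cdot\mid X_t])\le \E[c(\cdot)\mid X_t]$ holds pointwise and that its deficit integrates to zero, hence vanishes a.e.; then invoking the equality case of Jensen for strictly convex $c$ to conclude the conditional law of $X_1-X_0$ given $X_t$ is a.s.\ a point mass. A second delicate point is the forward direction $\text{(4)}\Rightarrow\text{(3)}$, where one must argue that ``$v^\X$ equals $X_1-X_0$ along $\vv X$'' genuinely makes $\vv X$ an admissible solution of \eqref{equ:znonlinear} — this is immediate once one observes $X_t = X_0 + \int_0^t (X_1-X_0)\,\d s$ and $v^\X(X_s,s)=X_1-X_0$ a.s., so the only real input is the uniqueness clause built into rectifiability. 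Everything else is bookkeeping.
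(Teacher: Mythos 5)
Your proposal is correct and follows essentially the same route as the paper: a cycle of implications in which $3\Rightarrow 2\Rightarrow 1$ are immediate, $1\Rightarrow 4$ extracts the equality case from the two Jensen steps in the proof of Theorem~\ref{thm:cost} using strict convexity, and $4\Rightarrow 3$ observes that $v^\X(X_t,t)=X_1-X_0$ a.s.\ makes $\vv X$ a solution of the rectified-flow integral equation, hence equal to $\vv Z$ by the uniqueness built into rectifiability. Your remarks on the measure-theoretic bookkeeping and on why the first Jensen step poses no obstruction are accurate and match the argument the paper leaves implicit.
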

\begin{proof}

$3 \to 2\to 1$: Obvious.

$1\to 4$: 
If $\E[c(Z_1-Z_0)] = \E[c(X_1-X_0)]$, the two applications of 
Jensen's inequality in the proof of Theorem~\ref{thm:cost}
are tight. Because $c$ is strictly convex,
the second Jensen's inequality in the proof implies that 
$X_1-X_0 = \E[X_1-X_0~|~X_t]$ almost surely w.r.t. $X$ and $t\sim \uniform([0,1]),$ which implies that $V(\vv X)=0.$

$4\to 3$:   
If $V(\vv X) =0$, we have $\int_0^s( X_1 -X_0) \dt = \int_0^s\E[X_1-X_0|X_t]\dt = \int_0^sv^X(X_t,t)\dt $ for $s\in(0,1]$.
Hence 
\bb 
X_t = X_0 +\int_0^t (X_1-X_0) \dt 
= X_0 + \int_0^t v^X(X_t, t) \dt. 
\ee 
Because $\vv Z$ satisfies the same equation 
\eqref{equ:znonlinear}, we have $\vv X=\vv Z$ by the uniqueness of the solution.

\end{proof}

\paragraph{$\bigO{1/K}$ convergence rate}  
We now show that as we apply rectification recursively, 
the rectified flows become increasingly straight and the linear interpolation of the couplings becomes increasingly 
non-intersecting. 
\begin{thm} \label{thm:convergence} 
 Let $\vv Z^k$ the $k$-th rectified flow    
 of $(X_0,X_1)$, 
that is, $\vv Z^{k+1} = \rectifyflow((Z_0^k, Z_1^k))$ and $(Z_0^0,Z_1^0) = (X_0,X_1)$. 
Assume each $(Z_0^k,Z_1^k)$ is rectifiable for $k=0,\ldots, K$.  

Then %
\bb 
\sum_{k=0}^K 
S(\vv Z^{k+1}) + V( (Z^k_0, Z^k_1))
\leq  \e{\norm{X_1-X_0}^2}.
\ee 
 \end{thm}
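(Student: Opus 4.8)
The plan is to prove a sharp \emph{per-step energy identity} and then telescope. Fix one rectification step and write $(X_0,X_1)$ for the input coupling, $X_t = tX_1+(1-t)X_0$ for its linear interpolation, and $\vv Z = \rectifyflow((X_0,X_1))$ for the induced rectified flow with velocity $v^X(x,t)=\E[X_1-X_0\mid X_t=x]$. The first ingredient is a conditional Pythagorean decomposition: because $v^X(X_t,t)$ is the $L^2$ projection of $X_1-X_0$ onto $\sigma(X_t)$, we have $\E[\norm{X_1-X_0}^2\mid X_t] = \norm{v^X(X_t,t)}^2 + \E[\norm{X_1-X_0-v^X(X_t,t)}^2\mid X_t]$. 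Taking total expectations and integrating $t$ over $[0,1]$ (the left-hand side being $t$-independent) gives
$$\E[\norm{X_1-X_0}^2] = \int_0^1 \E[\norm{v^X(X_t,t)}^2] \dt + V((X_0,X_1)).$$

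The second ingredient rewrites the straightness functional of $\vv Z$. Since $(X_0,X_1)$ is rectifiable, $t\mapsto Z_t$ is absolutely continuous with $Z_1-Z_0 = \int_0^1 \dot Z_t \dt$ and $\dot Z_t = v^X(Z_t,t)$. Expanding the square in $S(\vv Z)=\int_0^1\E[\norm{(Z_1-Z_0)-\dot Z_t}^2] \dt$ and using Fubini to pull $\int_0^1\dot Z_t\dt$ out of the cross term, that term collapses to $-2\E[\norm{Z_1-Z_0}^2]$, leaving $S(\vv Z) = \int_0^1 \E[\norm{v^X(Z_t,t)}^2] \dt - \E[\norm{Z_1-Z_0}^2]$. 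Now invoke the marginal-preserving property (Theorem~\ref{thm:marginal}), $\law(Z_t)=\law(X_t)$, to replace $Z_t$ by $X_t$ inside the integral, and substitute the first identity. This yields the exact per-step relation
$$S(\vv Z) + V((X_0,X_1)) = \E[\norm{X_1-X_0}^2] - \E[\norm{Z_1-Z_0}^2].$$

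Applying this with $(X_0,X_1)$ replaced by $(Z_0^k,Z_1^k)$ and $\vv Z$ by $\vv Z^{k+1}=\rectifyflow((Z_0^k,Z_1^k))$ gives, for each $k=0,\dots,K$ (legitimate since $(Z_0^k,Z_1^k)$ is assumed rectifiable), $S(\vv Z^{k+1}) + V((Z_0^k,Z_1^k)) = \E[\norm{Z_1^k-Z_0^k}^2] - \E[\norm{Z_1^{k+1}-Z_0^{k+1}}^2]$. Summing over $k$ telescopes to $\E[\norm{X_1-X_0}^2] - \E[\norm{Z_1^{K+1}-Z_0^{K+1}}^2]$, and dropping the nonnegative last term gives the claim. (As a side effect, the per-step identity with $S\geq0$ re-derives that the energies $\E[\norm{Z_1^k-Z_0^k}^2]$ are non-increasing, matching Theorem~\ref{thm:cost}.)

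I expect the only delicate points to be measure-theoretic bookkeeping rather than anything conceptual: justifying Fubini when exchanging $\int_0^1$ and $\E$ (covered by the local boundedness of $v^X$ built into rectifiability), the absolute continuity of $t\mapsto Z_t$ needed for $Z_1-Z_0=\int_0^1\dot Z_t\dt$, and that $v^X(X_t,t)$ is well defined $\law(X_t)$-a.e. None of these is a genuine obstacle; the whole substance of the argument is the exact cancellation of the cross term in $S(\vv Z)$ combined with marginal preservation, after which the telescoping is immediate.
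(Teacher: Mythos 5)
Your proof is correct and is essentially the paper's argument, written out in full. The paper's proof is terse — it says ``take $c(x)=\norm{x}^2$ in the proof of Theorem~\ref{thm:cost}'' and quotes the resulting identity $\E[\norm{X_1-X_0}^2]-\E[\norm{Z_1-Z_0}^2]=S(\traj Z)+V((X_0,X_1))$ — while you unpack exactly what that entails: your conditional Pythagorean step is precisely the computation of the second Jensen gap for quadratic cost, and your Fubini/cross-term expansion of $S(\traj Z)$ is precisely the computation of the first Jensen gap; combining them via marginal preservation and telescoping is identical to the paper.
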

 Hence, $\E[\norm{X_1-X_0}^2]<+\infty$,  we have 
 $\min_{k\leq K} (S(\vv Z^k) + V((Z_0^k,Z_1^k)) = \bigO{1/K}$.

\begin{proof}
Taking  $c(x) = \norm{x}^2$ in 
the proof of Theorem 3.5, we can obtain that 
\bbb  \label{diff}
\e{\norm{X_1-X_0}} - \e{\norm{Z_1-Z_0}} = S(\vv Z) + V((X_0,X_1)).
\eee  
Applying it to each rectification step yields
$$
\E\left [\norm{Z^k_1-Z^k_0}^2\right ] - \E\left [\norm{Z^{k+1}_1-Z^{k+1}_0}^2\right ] = S(\vv Z^{k+1}) + V((Z_0^k,Z_1^k)). 
$$
A telescoping sum on $k=0,\ldots, K$ gives the result. 

\end{proof}

\subsection{Straight vs. Optimal Couplings} 
\label{sec:stcouplings}

A coupling $(X_0,X_1)$ is called $c$-optimal if it achieves the minimum of $\E[c(X_1-X_0)]$ among all couplings that share the same marginals.   
Understanding and computing the optimal couplings 
have been the main focus of optimal transport \cite[e.g.,][]{villani2009optimal,ambrosio2021lectures, figalli2021invitation, peyre2019computational}. 
Straight couplings is a different desirable property. %
In the following, 
we show that straightness is a necessary but not sufficient condition of being $c$-optimal for a strictly convex function $c$, except in the one dimensional case when the two concepts coincides. 
Hence, it is ``easier'' to find a straight coupling than a $c$-optimal couplings. 

\begin{thm}\label{thm:straightOpt}
If a rectifiable  coupling $(X_0,X_1)$ 
is $c$-optimal for some strictly convex cost function $c$, then $(X_0,X_1)$ is a straight coupling. 
\end{thm}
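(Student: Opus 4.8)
The plan is to derive this as an almost immediate corollary of Theorem~\ref{thm:cost} and Theorem~\ref{thm:straightness}, using $c$-optimality only to pin down an equality case. First I would form the rectified coupling $(Z_0,Z_1) = \rectify((X_0,X_1))$, which is well-defined since $(X_0,X_1)$ is assumed rectifiable, and recall from Theorem~\ref{thm:marginal} (via the marginal-preserving property applied to the linear interpolation) that $(Z_0,Z_1)$ is again a coupling of $\tg_0$ and $\tg_1$.

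Next I would chain two inequalities. On one hand, Theorem~\ref{thm:cost} gives $\E[c(Z_1-Z_0)] \le \E[c(X_1-X_0)]$ because $c$ is convex. On the other hand, since $(X_0,X_1)$ is $c$-optimal and $(Z_0,Z_1)$ is a competitor coupling with the same marginals, the definition of $c$-optimality gives $\E[c(X_1-X_0)] \le \E[c(Z_1-Z_0)]$. Combining the two forces $\E[c(Z_1-Z_0)] = \E[c(X_1-X_0)]$.

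Finally I would invoke Theorem~\ref{thm:straightness}: the equality just established is exactly statement~1 of that theorem (existence of a strictly convex $c$ with $\E[c(Z_1-Z_0)] = \E[c(X_1-X_0)]$, with our hypothesized $c$ serving as the witness), so statement~2 holds, i.e.\ $(X_0,X_1)$ is a fixed point of $\rectify(\cdot)$, which by definition means $(X_0,X_1)$ is a straight coupling.

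There is no serious obstacle here; the only point requiring a little care is making sure all the hypotheses of the cited theorems are met — namely that $(X_0,X_1)$ being rectifiable is enough to invoke both Theorem~\ref{thm:cost} and Theorem~\ref{thm:straightness}, and that the rectified coupling $(Z_0,Z_1)$ legitimately qualifies as an admissible competitor in the definition of $c$-optimality (which it does, precisely because Theorem~\ref{thm:marginal} guarantees $\law(Z_i)=\law(X_i)$ for $i=0,1$). One should also note that the strict convexity of $c$ is used only in the appeal to Theorem~\ref{thm:straightness}; mere convexity suffices for the cost inequality, but strictness is what upgrades the equality case to genuine straightness.
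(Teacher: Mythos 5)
Your argument is correct and is essentially the paper's own proof, just spelled out in more detail: form $(Z_0,Z_1)=\rectify((X_0,X_1))$, use Theorem~\ref{thm:cost} plus $c$-optimality to force $\E[c(Z_1-Z_0)]=\E[c(X_1-X_0)]$, and then invoke statement~1 of Theorem~\ref{thm:straightness}. The supporting observations you added (that $(Z_0,Z_1)$ is an admissible competitor by the marginal-preserving property, and that strict convexity is only needed in the last step) are both correct and implicit in the paper's terser version.
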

\begin{proof}
Let $(Z_0,Z_1) = \rectify((X_0,X_1))$. 
If $(X_0,X_1)$ is $c$-optimal, we must have $\E[c(Z_1-Z_0)] = \E[c(X_1-X_0)]$. 
This implies Statement 1 in Theorem \ref{thm:straightness} and hence that $(X_0,X_1)$ is straight. 
\end{proof}

\paragraph{1D Case}  
For any $\tg_0,\tg_1$ on $\RR$, there exists an unique coupling $(X_0^*,X_1^*)$
that is simultaneously optimal for all non-negative convex cost functions $c$. This coupling is uniquely characterized by a monotonic property: for every $(x_0,x_1)$ and $(x_0',x_1')$  in the support of $(X_0^*, X_1^*)$, 
if  $x_0< x_0'$, then $x_1 \leq x_1'$. 
Furthermore, if $\tg_0$ is absolutely continuously w.r.t. the Lebesgue measure, 
then $(X_0^*, X_1^*)$ must be deterministic in that there exists a mapping $T\colon \RR\to\RR$ such that $X_1^* = T(X_0^*)$. 
See \cite{santambrogio2015optimal}.   

In the following, we show that straight couplings on $\RR$ coincides with the deterministic monotonic coupling $(X_0^*, X_1^*)$ and hence is unique and simultaneously optimal for all convex $c$ when $\tg_0$ is absolutely continuous.
The idea is that, in $\RR$, a coupling is monotonic 
iff 
its linear interpolation paths do not intersect, a characteristic feature of straight couplings. 
\begin{lem}\label{thm:monostraightlemma}
A coupling on $\RR$ is  straight iff it is deterministic and monotonic. 
\end{lem}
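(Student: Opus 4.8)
The plan is to prove both directions of Lemma~\ref{thm:monostraightlemma} by connecting monotonicity of a coupling on $\RR$ to the non-intersection of its linear interpolation paths, which by Theorem~\ref{thm:straightness} (statement 4) is equivalent to straightness. First I would set up the geometric observation that for two sample pairs $(x_0,x_1)$ and $(x_0',x_1')$ in the support, the line segments $\{t x_1 + (1-t)x_0 : t\in[0,1]\}$ and $\{t x_1' + (1-t)x_0' : t\in[0,1]\}$ are segments in the space-time plane $(t, x)\in[0,1]\times\RR$ joining $(0,x_0)$ to $(1,x_1)$ and $(0,x_0')$ to $(1,x_1')$. Two such segments, having distinct endpoints at $t=0$, ``intersect'' in the relevant sense (two paths crossing $x$ at the same time $t\in[0,1)$ moving in different directions) precisely when they cross strictly inside the time interval, i.e. when the ordering of $x_0,x_0'$ is opposite to the ordering of $x_1,x_1'$. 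This is the elementary fact that two line segments in the plane sharing no endpoint and spanning the full width $[0,1]$ cross iff their left endpoints and right endpoints are in opposite order.

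For the direction \emph{deterministic and monotonic $\Rightarrow$ straight}: if the coupling is monotonic, then for any two support points the orderings of $x_0,x_0'$ and $x_1,x_1'$ agree (weakly), so no two interpolation segments cross transversally in $(0,1)$; the only possible coincidences are degenerate overlaps, which under the deterministic (function $T$) assumption and monotonicity happen on a measure-zero set of times or correspond to identical paths. Hence $V((X_0,X_1)) = \int_0^1 \E[\,\|X_1-X_0 - \E[X_1-X_0\,|\,X_t]\|^2\,]\dt = 0$: for almost every $t$ and almost every realization, the value $X_t$ determines the line through it, so $X_1 - X_0$ equals its conditional expectation given $X_t$. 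By Theorem~\ref{thm:straightness} this gives straightness. For the converse \emph{straight $\Rightarrow$ deterministic and monotonic}: if the coupling is straight then $V((X_0,X_1))=0$, which forces $X_1-X_0 = \E[X_1-X_0\,|\,X_t]$ a.s.\ for a.e.\ $t$; I would argue this means that through Lebesgue-a.e.\ point $(t,x)$ attained by the interpolation there passes (essentially) only one segment direction. If the coupling were not monotonic, there would be a positive-probability pair of regions with $x_0<x_0'$ but $x_1>x_1'$; by continuity/openness the corresponding families of segments would genuinely cross over a set of $(t,x)$ of positive measure with two distinct directions, contradicting $V=0$. Determinism then follows: if $X_1$ were not a.s.\ a function of $X_0$, one could find, with positive probability, two pairs with the same $x_0$ but different $x_1$, whose segments emanate from the same point $(0,x_0)$ and immediately separate, again creating a positive-measure crossing set (the two segments agree at $t=0$ but near $t=0^+$ the point $x_0$ is ``approached'' along two directions — more carefully, pick a third generic segment and use it as a reference, or note the two segments themselves give non-unique velocity at points just after $0$).

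The key steps in order: (1) record the planar geometry lemma that segments over $[0,1]$ with distinct endpoints cross in $(0,1)$ iff endpoint orders reverse; (2) translate ``no crossing'' into $V((X_0,X_1))=0$ and invoke Theorem~\ref{thm:straightness} to identify this with straightness; (3) monotonic $+$ deterministic $\Rightarrow$ no reversal of orders $\Rightarrow$ $V=0$; (4) $V=0 \Rightarrow$ no positive-measure crossing $\Rightarrow$ monotonicity (by contradiction) and determinism (by contradiction, using that shared $x_0$ with distinct $x_1$ forces ambiguous velocity). The main obstacle is step~(4), the direction straight $\Rightarrow$ deterministic: going from the ``a.e.\ $t$, a.e.\ path'' statement $X_1-X_0 = \E[X_1-X_0\,|\,X_t]$ to a genuine pointwise/monotone structural statement requires a careful measure-theoretic argument that a positive-probability violation of monotonicity or of functional dependence produces a genuinely non-unique velocity field on a set of positive measure (not just a null set). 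I would handle this by a Fubini/continuity argument: if two ``bundles'' of segments with opposite endpoint orders each have positive mass, the set of times where a fixed pair crosses is a single interior point, but varying over the bundles sweeps out an open set of $(t,x)$, on which the disintegration of $\law(X_1-X_0\,|\,X_t=x)$ is non-degenerate, contradicting $V=0$. Determinism is the subtle sub-case and may need the absolute-continuity-of-$\tg_0$ hypothesis to rule out atoms; alternatively one can cite that a straight coupling's flow $\vv Z$ equals $\vv X$ and is induced by an ODE, hence $(Z_0,Z_1)=(X_0,X_1)$ is automatically a deterministic coupling, so only monotonicity needs the crossing argument.
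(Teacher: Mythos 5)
Your forward direction (deterministic and monotonic $\Rightarrow$ straight) is essentially the paper's argument: monotonicity plus determinism rule out interpolation crossings on $[0,1)$, so $X_1-X_0$ is determined by $X_t$, i.e.\ $X_1-X_0 = v^{\vv X}(X_t,t)$, and one invokes Theorem~\ref{thm:straightness} (you use Statement~4 via $V=0$; the paper uses Statement~3 by noting $X_t$ then solves the rectified-flow ODE, so $\vv X=\vv Z$). Either route is fine.

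For the converse (straight $\Rightarrow$ deterministic and monotonic), your primary plan has a genuine gap that you yourself flag. The condition $V((X_0,X_1))=0$ only controls behavior for \emph{a.e.}\ $(t,X_t)$, so to derive a contradiction you need a non-monotonic coupling to produce a \emph{positive-measure} set of $(t,x)$ through which two distinct directions pass. But a single non-monotone pair of lines crosses at exactly one space-time point, and turning ``a positive-probability set of reversed pairs'' into a positive-measure crossing region requires a delicate Fubini/disintegration argument that your sketch does not actually carry out (and that is awkward for couplings with atoms or low-dimensional supports). Determinism via $V=0$ is even more delicate, as you note. The paper avoids all of this by never touching $V$ in this direction: a straight coupling satisfies $(X_0,X_1)=(Z_0,Z_1)=\rectify((X_0,X_1))$, and $(Z_0,Z_1)$ comes from the deterministic flow $\d Z_t=v^{\vv X}(Z_t,t)\dt$, so it is automatically a deterministic coupling; monotonicity then follows from the elementary ODE non-crossing argument --- if $z_0<z_0'$ but $z_1>z_1'$, the intermediate value theorem forces $z_{t_0}=z_{t_0}'$ at some $t_0\in(0,1)$, and uniqueness of ODE solutions would then force $z_t=z_t'$ for all $t\ge t_0$, contradiction. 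You gesture at the determinism half of this (``alternatively one can cite that a straight coupling's flow $\vv Z$ equals $\vv X$ \dots''), but you should use the ODE-uniqueness/IVT argument for monotonicity too, rather than the measure-theoretic crossing-density argument; it is both shorter and actually closes the gap.
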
 
\begin{thm}\label{thm:1dstraight} 
For any $\tg_0,\tg_1$ on $\RR$, 
there exists either no straight coupling,
or a unique straight coupling. 
Further, if exists, the unique straight coupling is  deterministic and monotonic,  and jointly optimal w.r.t. all  convex cost functions $c\colon \RR^d\to [0,+\infty)$ 
for which the minimum value of $\e{c(X_1-X_0)}$ exists and is finite.  
\end{thm}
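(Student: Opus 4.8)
The plan is to deduce the theorem from Lemma~\ref{thm:monostraightlemma} together with the classical structure of monotonic couplings on $\RR$ recalled at the start of this subsection. By Lemma~\ref{thm:monostraightlemma}, a coupling of $\tg_0$ and $\tg_1$ is straight if and only if it is both deterministic and monotonic; in particular every straight coupling is monotonic, so the whole problem reduces to understanding monotonic couplings of $\tg_0,\tg_1$.

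The key ingredient I would invoke is that $\tg_0,\tg_1$ admit a \emph{unique} monotonic coupling $(X_0^*,X_1^*)$, namely the comonotone coupling $X_0^* = F_{\tg_0}^{-1}(U)$, $X_1^* = F_{\tg_1}^{-1}(U)$ with $U\sim\uniform([0,1])$ and $F^{-1}$ the quantile function; this is exactly the coupling recalled in the ``1D Case'' paragraph, so it may simply be cited. If a self-contained argument is preferred, it runs through the joint CDF: for a monotonic coupling $\pi$ the two ``bad rectangles'' $\{X_0>a,\,X_1\le b\}$ and $\{X_0\le a,\,X_1>b\}$ cannot both carry mass, which forces $\pi((-\infty,a]\times(-\infty,b]) = \min(F_{\tg_0}(a),F_{\tg_1}(b))$ for all $a,b$ and hence pins $\pi$ down uniquely.

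Given uniqueness of the monotonic coupling, the dichotomy is immediate: every straight coupling is monotonic, and there is only one monotonic coupling $(X_0^*,X_1^*)$, so there is at most one straight coupling, which gives ``either no straight coupling, or a unique one.'' Two mutually exclusive cases remain. If $(X_0^*,X_1^*)$ is deterministic, then by Lemma~\ref{thm:monostraightlemma} it is itself the unique straight coupling, deterministic and monotonic by construction; and by the recalled 1D fact it attains $\min\E[c(X_1-X_0)]$ over all couplings of $\tg_0,\tg_1$ for every non-negative convex $c$ whose minimum is finite (and is trivially optimal when that minimum is $+\infty$), which is the joint optimality claim. If instead $(X_0^*,X_1^*)$ is not deterministic, then no deterministic monotonic coupling exists, so no straight coupling exists. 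As a remark, absolute continuity of $\tg_0$ w.r.t.\ Lebesgue measure forces $(X_0^*,X_1^*)$ to be deterministic and is thus a clean sufficient condition for existence.

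The only step needing real care is the uniqueness of the monotonic coupling when $\tg_0$ (or $\tg_1$) has atoms, since the entire dichotomy rests on ``there is at most one monotonic coupling'' rather than on the weaker uniqueness of the optimal cost; once that is nailed down, everything else is bookkeeping built on top of Lemma~\ref{thm:monostraightlemma}.
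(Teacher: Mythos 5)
Your proof is correct and takes essentially the same route as the paper's: invoke Lemma~\ref{thm:monostraightlemma} to reduce straightness to ``deterministic and monotonic,'' then appeal to the classical 1D facts (uniqueness of the monotonic coupling and its joint optimality for all non-negative convex costs, which the paper cites from Santambrogio). The only difference is that you make the dichotomy (monotonic coupling deterministic vs.\ not) explicit and sketch a self-contained joint-CDF argument for uniqueness of the monotonic coupling, whereas the paper simply cites the reference; both are sound and amount to the same proof.
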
 
\begin{proof}[Proof of Lemma~\ref{thm:monostraightlemma}] 
If $(X_0,X_1)$ on $\RR$ is straight, 
then it coincides with its rectified coupling $(Z_0,Z_1) = \rectify((X_0,X_1)).$ 
But because $(Z_0,Z_1)$ is induced from the rectified flow $\d Z_t = v^X(Z_t, t) \dt $, it is obviously deterministic. It is also monotonic due to the non-crossing property of flows.  Specifically, 
if $(Z_0,Z_1)$ is not monotonic, there exists $(z_0,z_1)$ and $(z_0',z_1')$ in the support of $(Z_0,Z_1)$ such that $z_0<z_0'$ and $z_1 > z_1'$. If this happens, there must exists $t_0\in(0,1)$, such that $z_{t_0} = z_{t_0}'$. But by the uniqueness of the solution, we have $z_t=z_t$ for $t\geq t_0$, which is conflicting with $z_1 > z_1'$. 

Assume $(X_0,X_1)$ is deterministic and monotonic. %
Due to the monotonicity, there exists no $x_0$ and $x_0'$ in the support of $\tg_0$, such that $x_0 \neq x_0'$ and $x_{t_0} = x_{t_0}'$ for some $t_0<1$.  This suggests that $X_1-X_0 = \E[X_1-X_0~|~X_t] = v^X(X_t,t)$ for $t\in(0,1)$, and hence  $\d  X_t = (X_1-X_0)\dt = v^X(X_t) \dt $, which is the ODE of the rectified flow. In addition, $X_t$ is obviously the unique solution of this ODE.  Hence $(X_0,X_1)$ is rectifiable and straight following Statement 3 of Theorem~\ref{thm:straightness}.  
\end{proof} 

\begin{proof}[Proof of Theorem~\ref{thm:1dstraight}]
This is the result of Lemma~\ref{thm:monostraightlemma} combined with the fact that the monotonic coupling is unique and jointly optimal for all convex $c$ for which the optimal coupling exists, following 
Lemma 2.8 and 
Theorem 2.9 of \cite{santambrogio2015optimal}. 
\end{proof}

\paragraph{Multi-dimensional cases} 
On the other hand, 
on $\RR^d$ with $d\geq 2$, 
the different cost functions $c$  do not share a common optimal coupling in general, 
and a straight coupling is not guaranteed to optimize 
a specific $c$; this is expected because the $\rectify(\cdot)$ procedure does not depend on a particular choice of $c$.
Hence, one must modify the $\rectify(\cdot)$ procedure 
to tailor it to a specific $c$ of interest.

In a recent work 
\cite{khrulkov2022understanding}, 
it was  conjectured that the couplings $(Z_0,Z_1)$ induced from VP ODE (equivalently DDIM) yields an optimal coupling w.r.t. the quadratic loss, which was proved to be false in  \cite{lavenant2022flow, tanana2021comparison}. 
Here we show that 
even straight couplings 
are not guaranteed to be optimal, not to mention that VP ODE does not follow straight paths by design. 

We explore this in a separate work  \cite{rectifyOT} that is devoted to
modifying rectified flow to  find 
$c$-optimal couplings;
a result 
from \cite{rectifyOT} 
that can be easily stated 
is that 
the optimal coupling 
w.r.t. the quadratic cost $c(\cdot)=\norm{\cdot}^2$ 
can be achieved as the fixed point of $\rectify(\cdot)$ 
if $v$ is restricted to be a gradient field of form $v(x,t) = \dd f(x,t)$ 
when solving \eqref{equ:mainf}. Restricting $v$ to be a gradient field removes the rotational component of the velocity field $v^\X$ that causes sub-optimal transport cost.

\subsection{Denoising Diffusion Models and Probability Flow ODEs}  
\label{sec:diffusion}
We prove that the probability flow ODEs (PF-ODEs) of \cite{song2020score} can be viewed as  nonlinear rectified flows in \eqref{equ:Lgvphi} with $X_t = \alpha_t X_1 + \beta_t \xi.$ 
We start with introducing the algorithmic procedures of the denoising diffusion models and PF-ODEs, and refer the readers to the original works \cite{song2020score, ho2020denoising, song2020denoising} for the theoretical derivations. 

The 
denoising diffusion methods %
learn to %
generative models by constructing  
an SDE model driven by a standard Brownian motion $W_t$: 
\bbb \label{equ:ztvdw} 
\d U_t = b(U_t, t) \dt + \sigma_t \d W_t, ~~~ U_0 \sim \tg_0,
\eee  
where $\sigma_t\colon [0,1]\to [0,+\infty)$ 
is a (typically) fixed diffusion
coefficient, $b$ is a trainable neural network, 
and the initial distribution $\tg_0$ 
is restricted to a spherical Gaussian distribution determined by hyper-parameter setting of the algorithm.  
The idea is to first collapse the data into an (approximate) Gaussian distribution using a diffusion process, mostly an Ornstein-Uhlenbeck (OU) process, and then estimate 
the generative diffusion process \eqref{equ:ztvdw} as the time reversal \citep[e.g.,][]{anderson1982reverse} of the collapsing process. 

Without diving into the derivations, 
the training loss of the VE, VP, sub-VP SDEs 
for $b$ in \cite{song2020score} can be summarized as follows:  
\bbb \label{equ:oud} 
\min_{v}\int_0^1 \e{w_t \norm{v(V_t,t) -Y_t}^2_2} \dt,
&& V_t = \alpha_t X_1 + \betatogamma_t \xi_t, && 
Y_t = - \eta_{t} V_t  
- \frac{\sigma_t^2 }{\betatogamma_t}\xi_t,  
\eee 
where $\xi_t$
is a diffusion process satisfying $\xi_t \sim \normal(0,I)$, 
and $\eta_t, \sigma_t$ are the hyper-parameter sequences
of the algorithm, and $\alpha_t, \betatogamma_t$ are determined by $\eta_t, \sigma_t$ via %
\bbb \label{equ:alphadiff}
 \alpha_t = \exp\left(\int_t^1 \eta_s \d s \right), ~~~~~ \betatogamma_t^2 = \int_t^1 \exp\left (2\int_t^s \eta_r \d r \right)  \sigma_{s}^2 \d s. 
\eee
The relation in \eqref{equ:alphadiff} 
is derived to make $\tilde V_t = V_{1-t}=\alpha_{1-t} X_1 + \beta_{1-t} \xi_t$ follow the Ornstein-Uhlenbeck (OU) processes $\d \tilde V_t = \eta_{1-t} \tilde V_t \dt + \sigma_{1-t} \d W_t$. 

VE SDE, which is equivalent to SMLD in \cite{song2019generative, song2020improved}, takes $\eta_t=0$ and hence has $\alpha_t = 1$. 
(sub-)VP SDE takes $\eta_s$ to be a linear function of $s$, yielding the exponential $\alpha_t$ in \eqref{equ:vpode}. 
VP SDE (which is equivalent to DDPM \cite{ho2020denoising})  
takes $\eta_t = - \frac{1}{2} \sigma_t^2$ %
which yields that 
$
\alpha_t^2 + 
\betatogamma_t^2 =1$ as shown in \eqref{equ:vpodebeta}. 
In DDPM, it was suggested to write $b(x,t) = -\eta_t x - \frac{\sigma_t^2}{\betatogamma_t} \epsilon(x,t)$ 
, and estimate $\epsilon$ as a neural network that predicts 
$\xi_t$  from $(V_t,t)$.

Theoretically, the SDE in \eqref{equ:ztvdw}  with $b$ solving \eqref{equ:oud} 
is ensured to yield $\law(U_1) = \law(X_1) = \tg_1$  
when  initialized from $U_0 = \alpha_0X_1 + \betatogamma_0 \xi_0$, which can be approximated by $U_0 \approx \betatogamma\xi_0$ when $\alpha_0 X_1 \ll \betatogamma_0 \xi_0$. 

By using the properties of Fokker-Planck equations, 
it was observed in \cite{song2020score, song2020denoising}
that the SDE in \eqref{equ:ztvdw} with $b$ trained in \eqref{equ:oud} can be converted into an ODE that share the same marginal laws:
\bbb \label{equ:xtddpmbb} 
\d Z_t = \tilde b(Z_t, t)
\dt,~~~~\text{with}~~~~
\tilde b(z,t) = \frac{1}{2} (b(z,t) - \eta_t z), 
~~~~\text{starting from~~} Z_0 = U_0 = \alpha_0 X_1 + \beta_0 \xi_0. 
\eee 
Equivalently, we can regard $\tilde b$  as the solution of 
\bbb \label{equ:odeobjgg}
\min_v \int_0^1 \e{w_t \norm{v(V_t, t) - \tilde Y_t}^2_2} \dt, 
&& V_t = \alpha_t X_1 + \betatogamma_t \xi_t, && 
\tilde Y_t = - \eta_{t} V_t  
- \frac{\sigma_t^2 }{2\betatogamma_t}\xi_t,
\eee 
which defers from \eqref{equ:ztvdw} only by a factor of $1/2$ in the second term of $Y_t$. 
This simple equivalence holds only when  \eqref{equ:ztvdw} and \eqref{equ:xtddpmbb} use the special initialization of $
Z_0 = U_0 = \alpha_0 X_1 + \beta_0 \xi_0$.

In the following, we are ready to prove that %
\eqref{equ:odeobjgg} is can be viewed as %
the nonlinear rectified flow objective in  \eqref{equ:Lgvphi} 
using $X_t = \alpha_t X_1 +
\betatogamma_t \xi$ with 
$\xi \sim \normal(0, I)$. 
We mainly need to show that $\tilde Y_t$ 
is equivalent to $\dot X_t$ by eliminating $\eta_t$ and $\sigma_t$ using the relation in \eqref{equ:alphadiff}.

\begin{pro}\label{pro:ddim}
Assume \eqref{equ:alphadiff} hold. %
Then \eqref{equ:odeobjgg} is equivalent to \eqref{equ:Lgvphi} with $X_t = \alpha_t X_1 + \betatogamma_t \xi$.  
\end{pro} 
\begin{proof} 
First, note that we can take $\xi_t = \xi$ for all time $t$, as the correlation structure of $\xi_t$ does not impact the result. 
Hence, we have $V_t = X_t = \alpha_t X_1 + \beta_t \xi$. 
To show the equivalence of \eqref{equ:odeobjgg} and \eqref{equ:Lgvphi}, 
we just need to verify that $\dot X_t = \tilde Y_t$. %
\bb 
\tilde Y_t 
& =
 -\eta_t X_t + \frac{\sigma_t^2}{2\betatogamma_t^2} (
\alpha_t X_1 - X_t) \\ 
& = -\dot \eta_t \left ( \alpha_t X_1 + \betatogamma_t \xi \right) + \frac{\sigma_t^2}{2\betatogamma_t}  \xi \\ 
& = - \dot \eta \alpha_t X_1 
+ %
\left ( - \dot \eta_t \betatogamma_t + \frac{\sigma_t^2}{2\betatogamma_t} \right )
\xi \\
& \overset{(*)}{=} \dot \alpha_t X_1 +  \dot \betatogamma_t \xi \\
& = \dot X_t. 
\ee 
where in $\overset{(*)}{=}$ we used that $\eta_t = - \frac{\dot \alpha_t}{\alpha_t}$ and $ {\sigma_t^2} = 2\betatogamma_t^2  \left (\frac{\dot\alpha_t}{\alpha_t} -\frac{\dot \betatogamma_t}{\betatogamma_t} \right )$ which can be derived from \eqref{equ:alphadiff}. 
\end{proof}

\section{Related Works and Discussion}

\paragraph{Learning one-step models}
GANs \citep{goodfellow2014generative, arjovsky2017wasserstein, liu2021fusedream}, VAEs \citep{kingma2013auto}, and (discrete-time) normalizing flows  \citep{rezende2015variational, dinh2014nice,dinh2016density}
have been three classical approaches for learning deep generative models. %
GANs have been most successful in terms of  generation qualities (for images in particular), but suffer from the notorious training instability and mode collapse issues due to use of minimax updates. 
  VAEs and normalizing flows are both trained based on the principle of maximum likelihood estimation (MLE) and  need to introduce constraints on the model architecture and/or special approximation techniques to ensure tractable likelihood computation: 
  VAEs typically use a conditional Gaussian distribution in addition to the variational approximation of the likelihood; 
  normalizing flows require to use specially designed invertible architectures and need to copy with calculating expensive Jacobian matrices.   
 
 The reflow+distillation approach in this work provides another promising approach to  training one-step models,  avoiding the minimax issues of GANs and the intractability issues of the likelihood-based methods. 

\paragraph{Learning ODEs: MLE and PF-ODEs} %
There are two major approaches for learning neural ODEs: 
the PF-ODEs/DDIM approach discussed in Section~\ref{sec:nonlinear}, and the more classical MLE based approach
 of \cite{chen2018neural}. 

\emph{$\bullet$~The MLE approach.}  
In \cite{chen2018neural}, 
neural ODEs are trained 
for learning generative models by maximizing the likelihood of the distribution of the ODE outcome $Z_1$ at time $t=1$ under the data distribution $\tg_1$. Specifically, 
with observations from $\tg_1$, it estimates a neural drift $v$ of an ODE  $\d Z_t = v(Z_t,t)\dt $ by 
\bbb \label{equ:neuralode}
\max_{v}  \mathbb{D}(\tg_1;~~\rho^{v,\tg_0}), 
\eee 
where $\mathbb{D}(\cdot;~\cdot)$ denotes 
KL divergence (or other discrepancy measures),  and $\rho^{v,\tg_0}$ is the density of $Z_1$ following $\d Z_t = v(Z_t,t)\dt $ from $Z_0\sim \tg_0$; the density of $\tg_0$ should be known and tractable to calculate. 

By using  
an instantaneous change of variables formula, 
it was observed in \cite{chen2018neural} that the likelihood of neural ODEs
are easier to compute than the discrete-time normalizing flow without constraints on the model structures. 
However, this MLE approach is still computationally expensive for large scale models as it requires repeated simulation of the ODE during each training step.
In addition, 
as the optimization procedure of MLE requires to  backpropagate through time,
it can easily suffer the gradient vanishing/exploding problem unless proper regularization is added. 

Another fundamental problem is that 
the MLE \eqref{equ:neuralode}  of neural ODEs is theoretically under-specified, 
because MLE only concerns matching the law of the final outcome $Z_1$ with the data distribution $\tg_1$, and  there are infinitely many ODEs to achieve the same output law of $Z_1$ while traveling through different paths. 
A number of works have been proposed to remedy  this by adding 
regularization terms, such as these based on transport costs, to favor shorter paths; see  \cite[][]{nichol2021improved, onken2021ot}. 
With a regularization term, the ODE learned by MLE would be implicitly determined by the initialization and other hyper-parameters of the optimizer used to solve \eqref{equ:neuralode}.  

\emph{$\bullet$~Probability Flow ODEs.}  
The method of PF-ODEs \cite{song2020score} and DDIM \cite{song2020denoising} provides a different approach to learning ODEs that avoids the main disadvantages of the MLE approach, including %
the expensive likelihood calculation, training-time simulation of the ODE models, and the need of backpropagation through time.  %
However,
because PF-ODEs and DDIM 
were derived as the side product of learning the mathematically more involved diffusion/SDE models, 
their theories and algorithm forms were made 
unnecessarily restrictive and complicated. 
The nonlinear rectified flow framework shows that the 
learning of ODEs 
can be approached directly in a very simple way, allowing us to identify the canonical case of linear rectified flow  and open the door of further improvements with flexible and decoupled choices of the interpolation curves $X_t$ and initial distributions $\tg_0.$

 Viewed through the general non-linear rectified flow  framework, 
 the computational and theoretical drawbacks of MLE can be avoided because 
we can simply pre-determines the ``roads'' 
that the ODEs should travel through 
by specifying the interpolation curve $X_t$,  
rather than leaving it for the algorithm to figure out implicitly.  
It is theoretically valid to pre-specify any interpolation  $X_t$ 
because the neural ODE is highly over-parameterized as a generative model: when $v$ is a universal approximator and $\tg_0$ is absolutely continuous, 
the distribution of $Z_1$ can  approximate any distribution given any fixed interpolation curve $X_t$. The idea of rectified flow is to the 
simplest geodesic paths for $X_t$. %

\paragraph{Learning SDEs with denoising diffusion} 
Although the scope of this work 
is limited to learning ODEs, the  score-based generative models \citep{song2019generative, song2020improved, song2020score, song2021maximum}
and denoising diffusion probability models (DDPM) \citep{ho2020denoising} 
are of high relevance 
as the basis of PF-ODEs and DDIM. 
The diffusion/SDE models trained with these methods have been found outperforming GANs in image synthesis 
in both quality and diversity \cite{dhariwal2021diffusion}. 
Notably,  
thanks to the stable and scalable optimization-based training procedure, 
the diffusion models have successfully used in  huge text-to-image  generation models with astonishing results 
\citep[e.g.,][]{glide, dalle2, imagegen}.  
It has been quickly popularized in other domains, such as
video \citep[e.g.,][]{ho2022video, yang2022diffusion, harvey2022flexible}, 
music \citep{mittal2021symbolic}, audio \citep[e.g.,][]{kong2020diffwave, lee2021nu, popov2021grad}, and text \citep{li2022diffusion, wang2022language}, 
and more tasks such as image editing  \citep{zhao2022egsde, meng2021sdedit}. 
A growing literature has been developed for improving the inference speed of denoising diffusion models, 
an example of which is 
the PF-ODEs/DDIM approach which gains speedup by turning SDEs into ODEs. 
We provide  below some examples of recent works,
which is by no mean  exhaustive. 

\emph{$\bullet$~Improved training and inference.} 
A line of works focus on  improving the inference and sampling procedure of denoising diffusion models. 
For example, 
\cite{nichol2021improved} presents 
a few simple modifications of DDPM to improve the likelihood, sampling speed, and generation quality. 
\cite{elucidating}  systematic exams  the design space of diffusion generative models 
with empirical studies and identifies a number of training and inference recipes 
for better generative quality with fewer sampling steps. 
\cite{zhang2022fast} proposes a diffusion exponential integrator sampler for fast sampling of diffusion models. \cite{lu2022dpm} provides a customized high order solver for PF-ODEs. 
\citep{bao2022analytic} provides an analytic estimate of the optimal diffusion coefficient.

\emph{$\bullet$~Combination with other methods.}
 Another direction is to speed up diffusion models by combining them with GANs and other generative models. 
DDPM Distillation~\citep{luhman2021knowledge}
accelerates the inference speed 
by distilling the trajectories of a diffusion model into a series of conditional GANs. 
The truncated diffusion probabilistic model (TDPM) of \citep{zheng2022truncated}
trains a GAN model as $\pi_0$ so that the diffusion process can be truncated to improve the speed; 
the similar idea was explored in \cite{lyu2022accelerating, franzese2022much}, 
and \citep{franzese2022much} provides an analysis on the optimal truncation time.  
\citep{sinha2021d2c, wehenkel2021diffusion, vahdat2021score} learns a denoising diffusion model in the latent spaces and combines it with variational auto-encoders. 
These methods can be potentially applied to rectified flow to gain similar speedups for learning neural ODEs.   

\emph{$\bullet$~Unpaired Image-to-Image translation.} 
The standard denoising diffusion and PF-ODEs methods focus on the generative task of transferring a Gaussian noise ($\tg_0$) to the data ($\tg_1$). A number of works have been proposed to 
adapt it to transferring data between arbitrary pairs of source-target domains. 
For example, 
SDEdit \cite{meng2021sdedit} 
synthesizes realistic images 
guided by an input image by 
first adding noising to the input and then denoising the resulting image through 
a pre-trained SDE model. 
\cite{choi2021ilvr} proposes 
a method to guide the generative process of DDPM to generate realistic images based on a given reference image.  
\cite{su2022dual} leverages two 
two PF-ODEs for image translation, 
one translating source images to a latent variable, and the other constructing  the target images from the latent variable.  
\cite{zhao2022egsde} proposes 
an energy-guided approach that employs an energy function pre-trained on the source and target domains to guide the inference process of a pretrained SDE for better image translation. 
In comparison, 
our framework shows that domain transfer 
can be achieved by essentially the same algorithm as generative modeling, by simply setting $\tg_0$ to be the source domain. %

\emph{$\bullet$~Diffusion bridges.}
Some recent works \cite{peluchetti2021non, bridge} show that the design space of denoising  diffusion models can be made highly  flexible with the assistant of diffusion bridge processes that are pinned to a fixed data point at the end time. This reduces the design of denoising diffusion methods to 
constructing a proper bridge processes. 
The bridges in 
\citet{song2020score} are  
constructed by a time-reversal technique, which can be equivalently achieved by Doob's $h$-transform as shown in \cite{peluchetti2021non, bridge}, 
and more general construction techniques are discussed in \cite{bridge, geobridge}.  
Despite the significantly extended  design spaces, an unanswered question %
is 
what type of diffusion bridge processes should be preferred. 
This question is made challenging because the presence of diffusion noise and the need of advanced stochastic calculus tools  make it hard to intuit  how the methods work.
By removing the diffusion noise, 
our work makes it clear that straight paths should be preferred. We expect that the idea can be extended to provide guidance on designing optimal bridge processes for learning SDEs. 

\emph{$\bullet$~Schrodinger bridges.} 
Another body of works  \citep{wang2021deep, de2021diffusion, chen2021likelihood, vargas2021solving}  
leverages  
Schrodinger bridges (SB) as an alternative approach to learning diffusion generative models. These approaches are attractive theoretically, but casts significant  
computational challenges for solving the Schrodinger bridge problem.

\paragraph{Re-thinking the role of diffusion noise}
The introduction of diffusion noise 
was consider essential due to the key role  
it plays in the derivations of the successful methods \citep{song2020score, ho2020denoising}. 
However, 
as rectified flow 
can achieve better or comparable results with a ODE-only framework, 
the role of diffusion mechanisms 
should be re-examed and clearly decoupled from the other merits of denoising diffusion models. 
The success of  
the denoising diffusion models may be 
 mainly attributed to the simple and stable  optimization-based training procedure 
that allows us to avoid the instability issues 
and the need of case-by-case tuning of GANs, rather than the presence of diffusion noises.

Because our work shows that there is no need to invoke SDE tools if the goal is to learn ODEs,
the remaining question is 
whether we should learn an ODE or an SDE 
for a given problem. 
As already argued by a number of works 
\citep{song2020score,song2020denoising, elucidating},  
ODEs should be preferred over SDEs in general. 
Below is a detailed comparison between ODEs and SDEs.

 $\bullet$~\emph{Conceptual simplicity and numerical speed.} %
 SDEs are more mathematically involved and  are more difficult to understand.  Numerical simulation of ODEs are simpler and faster than SDEs.

$\bullet$~\emph{Time reversibility.} 
It is equally easy to 
solve the ODEs 
forwardly and backwardly.  
In comparison, the time reversal of 
SDEs  \citep[e.g.,][]{anderson1982reverse, haussmann1986time, follmer1985entropy} 
is more involved theoretically and may not be computationally tractable. %

 $\bullet$~\emph{Latent spaces.} 
 The couplings $(Z_0,Z_1)$ 
 of ODEs are deterministic and yield low transport cost in the case of rectified flows, 
 hence providing  a good latent space for representing and manipulating outputs. 
 Introducing diffusion noises 
 make $(Z_0,Z_1)$ more stochastic and hence less useful. In fact, the $(Z_0,Z_1)$ given by DDPM ~\cite{ho2020denoising} and the SDEs of \cite{song2020score} and hence useless for latent presentation.

 $\bullet$~\emph{Training difficulty.}  
 There is no reason to believe 
that training an ODE is harder, if not easier, 
than training an SDE sharing the same marginal laws: the training loss of both cases would  share the distributions of covariant and differ only on the targets. In the setting of \citep{song2020score}, the two loss functions \eqref{equ:oud} and \eqref{equ:odeobjgg} are equivalent upto a linear reparameterization.

$\bullet$~\emph{Expressive power.}   
As every SDE can be converted into an ODE that has the same marginal distribution using the techniques in \cite{song2020denoising, song2020score} (see also \cite{villani2009optimal}), ODEs are as powerful as SDEs for representing marginal distributions,
which is what needed for the transport mapping problems considered in this work. 
On the other hand, SDEs may be preferred if we need to capture richer time-correlation structures. 

 $\bullet$~\emph{Manifold data.}
 When equipped with neural network drifts, 
 the outputs of ODEs tend to fall into a smooth low dimensional manifold,  a key inductive for 
  structured data in AI such as images and text. %
 In comparison, 
 when using SDEs to model manifold data,
 one has to carefully anneal the diffusion noise 
 to obtain smooth outcomes, which  causes slow computation and a burden of hyperparameter tuning. 
 SDEs might be more useful in 
 for modeling highly noisy data in areas like finance and economics, and in 
areas  that involve diffusion processes physically, such as molecule simulation.

\paragraph{Optimal vs. straight transport}
Optimal transport has been extensively explored in machine learning 
as a powerful way to 
 compare and transfer between probability measures. 
 For the transport mapping problem considered in this work,
 a natural approach is to finding the optimal coupling $(Z_0,Z_1)$ that minimizes a transport cost $\E[c(Z_1-Z_0)]$ for a given $c$. 
 The most common choice of $c$ is the quadratic cost $c(\cdot) = \norm{\cdot}^2$. 
 
 However, 
 finding the optimal couplings, 
 especially for high dimensional continuous measures, is highly challenging computationally and is the subject of active research; 
 see for example \citep{seguy2017large, korotin2021neural, korotin2022neural, makkuva2020optimal, rout2021generative, daniels2021score}.    
 In addition, 
although the optimal couplings 
are known to have nice smoothness and other regularity properties, 
it is not necessary to accurately find the optimal coupling because the transport cost
do not exactly align with
the learning  performance of individual problems; see e.g., \cite{korotin2021neural}.  %

 In comparison, 
 our reflow procedure 
 finds a straight coupling, 
 which is not optimal w.r.t. a given $c$ (see Section~\ref{sec:stcouplings}).  
 From the perspective of fast inference, 
 all straight couplings are equally good because they all yield straight rectified flows and hence can be simulated with one Euler step.

\section{Experiments}
\label{sec:empirical} 
We start by studying the impact of reflow on toy examples. 
After that, 
we demonstrate that with multiple times of reflow, rectified flow achieves state-of-the-art performance on CIFAR-10. 
Moreover, it can also generate high-quality images on high-resolution image datasets. 
Going beyond unconditioned image generation, we apply our method to unpaired image-to-image translation tasks to generate visually high-quality image pairs.

\paragraph{Algorithm} 
We follow the procedure in Algorithm~\ref{alg:cap}. 
We start with drawing $(X_0,X_1)\sim \tg_0 \times \tg_1$ and use it to get the first rectified flow $\vv Z^1$ by minimizing \eqref{equ:mainf}.
The second rectified flow $\vv Z^2$ is obtained by the same procedure except with the data replaced by the draws from $(Z_0^1,Z_1^1)$, 
obtained by simulating the first rectified flow $\vv Z^1$. This process is repeated for $k$ times to get the \emph{$k$-rectified flow} $\vv Z^k$.
Finally, we can further distill the  \emph{$k$-rectified flow} $\vv Z^k$  into a one step model $z_1= z_0 +  v(z_0, 0)$ by fitting it on draws from $(Z_0^k, Z_1^k)$. 

By default, the ODEs 
are simulated using the vanilla Euler method with constant step size $1/N$ for $N$ steps, that is,  $\hat Z_{t+1/N} = \hat Z_{t} + v(\hat Z_{t}, t)/N$ for $t \in\{0,\ldots, N\}/N$. 
We use the Runge-Kutta method of order 5(4) from Scipy 
\citep{2020SciPy-NMeth}, denoted as RK45, 
which adaptively decide the step size and number of steps $N$ based on  user-specified relative and absolute tolerances.  
In our experiments, we stick to the same parameters as~\cite{song2020score}.

\subsection{Toy Examples}
\label{sec:toy}
To accurately illustrate the theoretical properties,
we use the non-parametric estimator $v^{X,h}(z,t)$ in \eqref{equ:npfunc} in the toy examples in Figure~\ref{fig:twodotstoy}, \ref{fig:toystar}, \ref{fig:gauss2dots}, \ref{fig:speedtoy}.  
In practice, we approximate the expectation in \eqref{equ:npfunc} an nearest neighbor estimator: given a sample 
$\{x_0\datai, x_1\datai\}_i$ 
drawn from $(X_0,X_1)$, 
we estimate $v^X$ by 
\bb 
v^{X,h}(z,t) \approx \!\!\! \sum_{i\in \mathrm{knn}(z, m)}\!\!\!  \frac{x_1\datai - z}{1-t} \omega_h(x_t\datai, z)~~/\!\!\!
\sum_{i\in \mathrm{knn}(z, m)} \!\!\!\!\! \omega_h(x_t\datai, z), 
&&
x_t\datai =t x_1\datai + (1-t) x_0\datai, 
\ee 
where $\mathrm{knn}(z, m)$ 
denotes the top $m$ nearest neighbors of $z$ in $\{x_t\datai\}_i$.  
We find that the results are not sensitive to the choice of $m$ and the bandwidth $h$ (see Figure~\ref{fig:smoothness}). 
We use $h = 1$ and $m = 100$ by default.  
The flows are simulated using Euler method
with a constant step size of $1/N$ for $N$ steps.
We use $N=100$ steps unless otherwise specified. %

Alternatively, $v^X$ can be parameterized as a neural network and trained with stochastic gradient descent or Adam. %
Figure \ref{fig:smoothness} shows an example of when $v^X$ is parameterized as an 2-hidden-layer fully connected neural network with 64 neurons in both hidden layers. %
We see that the neural networks fit less perfectly with the linear interpolation trajectories 
(which should be piece-wise linear in this toy example).  
As shown in Figure~\ref{fig:smoothness}, 
we find that enhancing the smoothness of the  neural networks (by increasing the L2 regularization coefficient during training) can help  straighten the flow, in addition to the rectification effect. %

\newcommand{\tmpfsize}{\scriptsize}
\begin{figure}[tbh]
\centering
\scalebox{1}{
\begin{tabular}{cccccc}
\tmpfsize  1-Rectified Flow & \tmpfsize  2-Rectified Flow  & \tmpfsize  3-Rectified Flow  & 
\tmpfsize  1-Rectified Flow  & \tmpfsize  2-Rectified Flow   &\tmpfsize   3-Rectified Flow   \\
\raisebox{1em}{\rotatebox{90}{\tmpfsize{L2 Penalty}=0}}
\includegraphics[width=.14\textwidth]{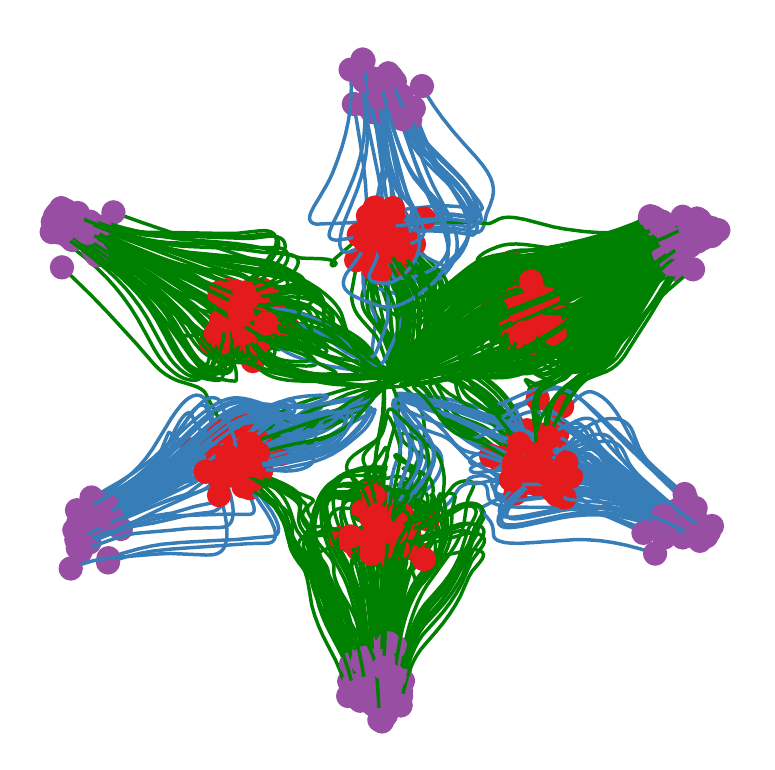}
& \includegraphics[width=.14\textwidth]{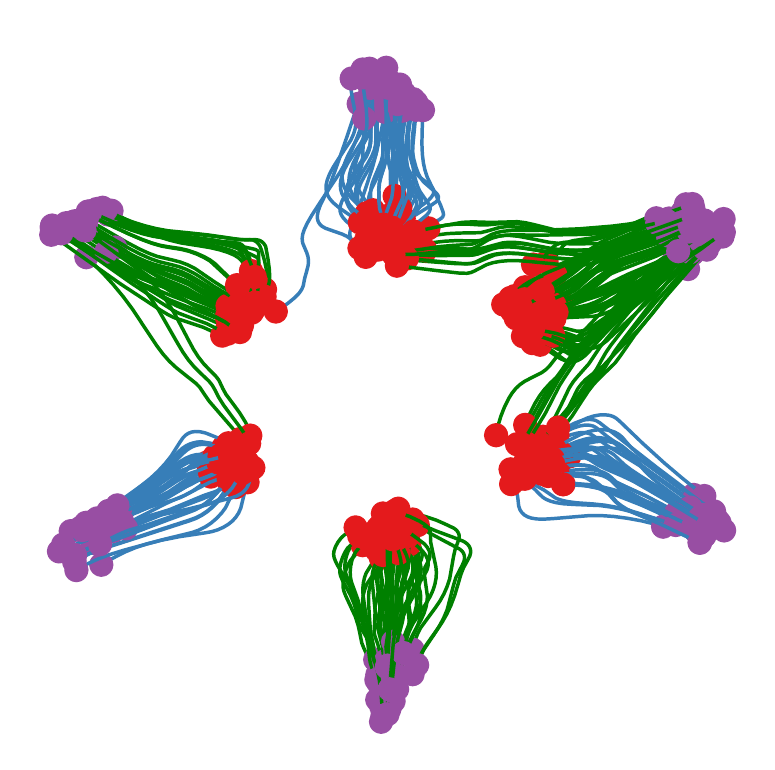} & \includegraphics[width=.14\textwidth]{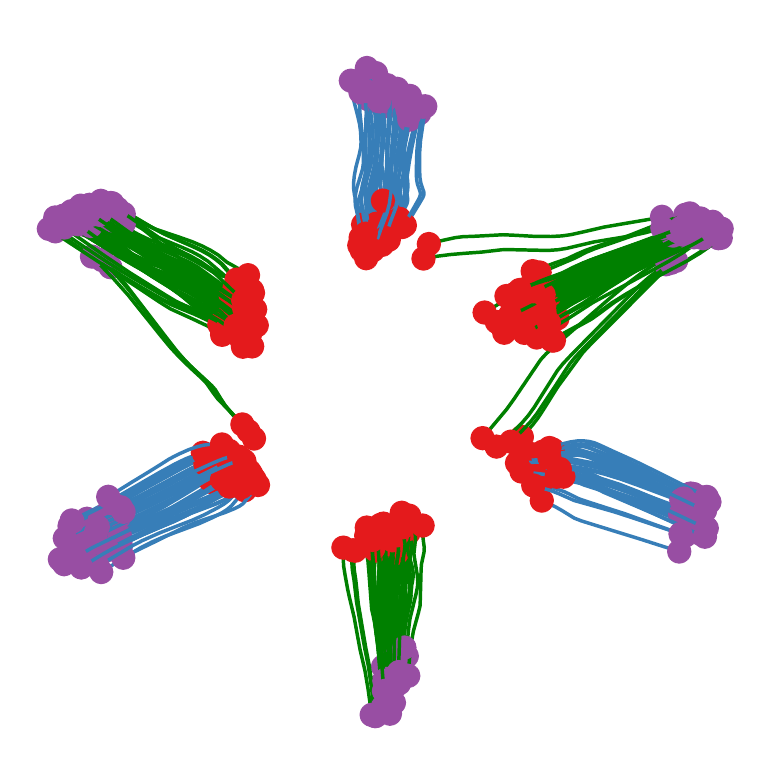} & 
\raisebox{1.8em}{\rotatebox{90}{\tmpfsize{$h=0.01$}}}
\includegraphics[width=.14\textwidth]{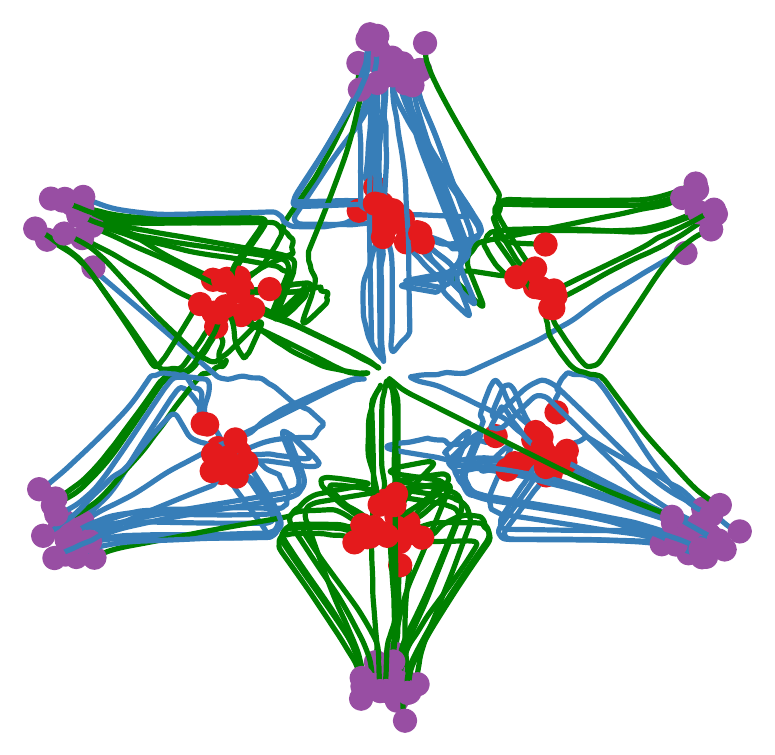}& \includegraphics[width=.14\textwidth]{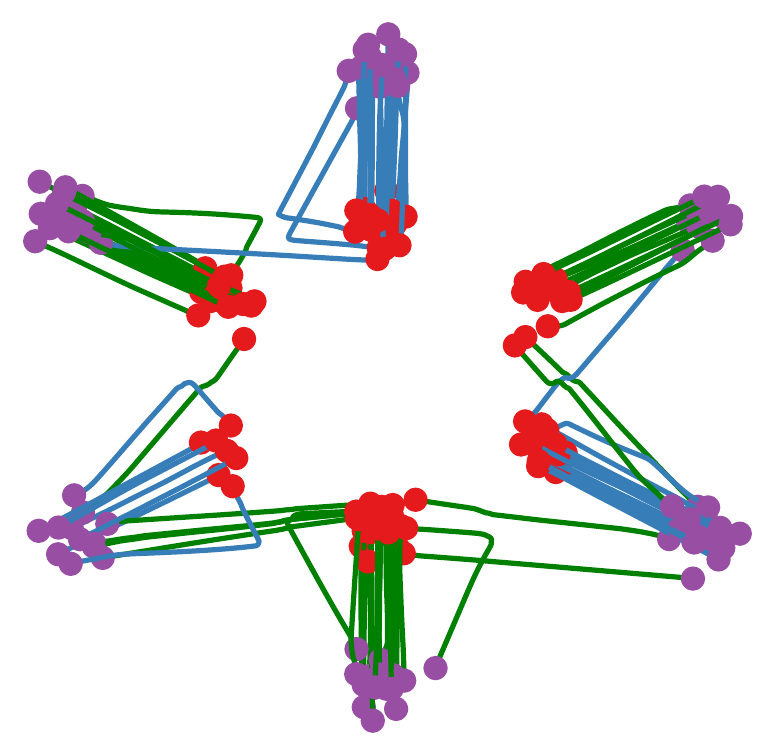} & \includegraphics[width=.14\textwidth]{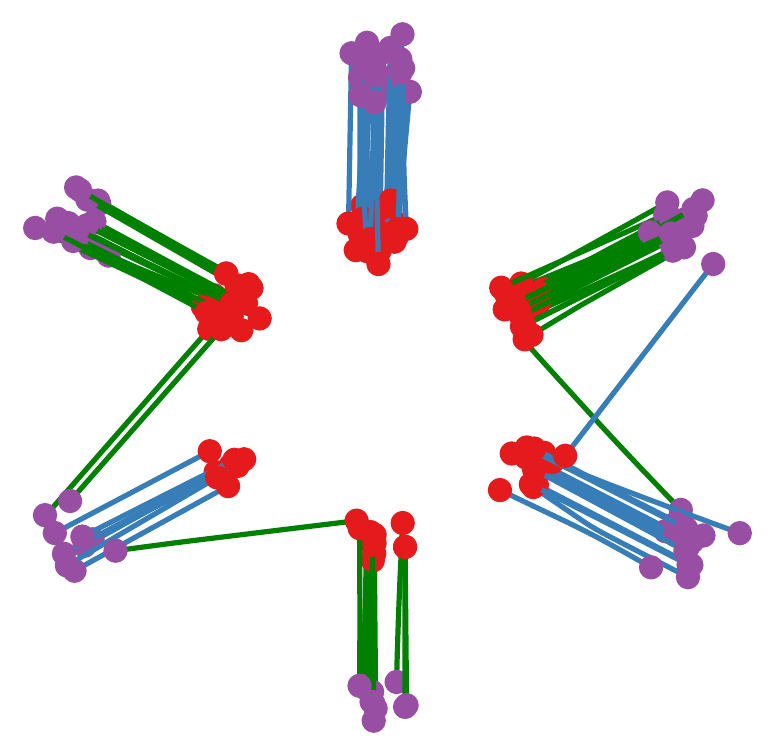} \\
\raisebox{.5em}{\rotatebox{90}{\tmpfsize{L2 Penalty$=0.01$}}}
\includegraphics[width=.14\textwidth]{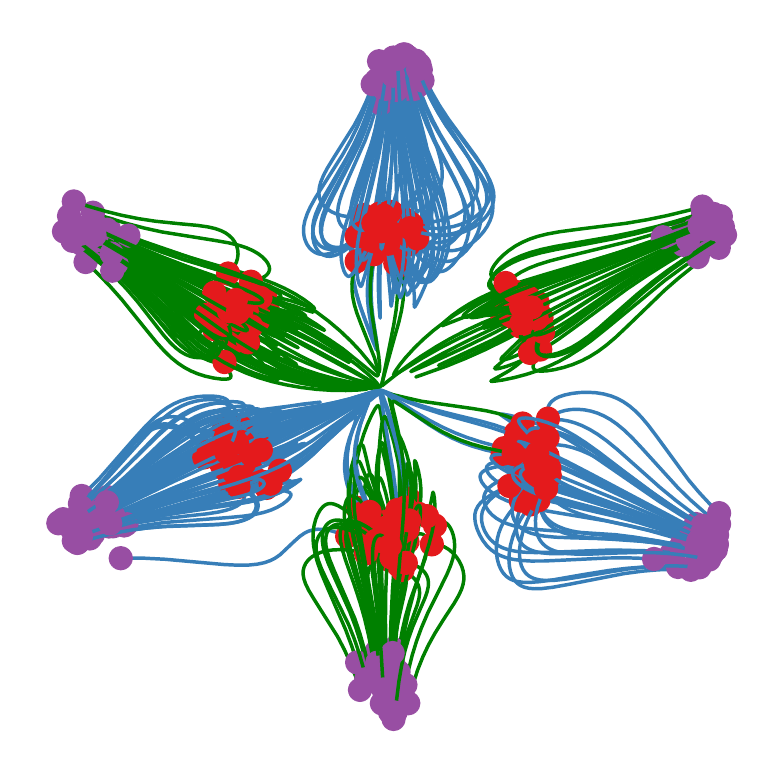}
& \includegraphics[width=.14\textwidth]{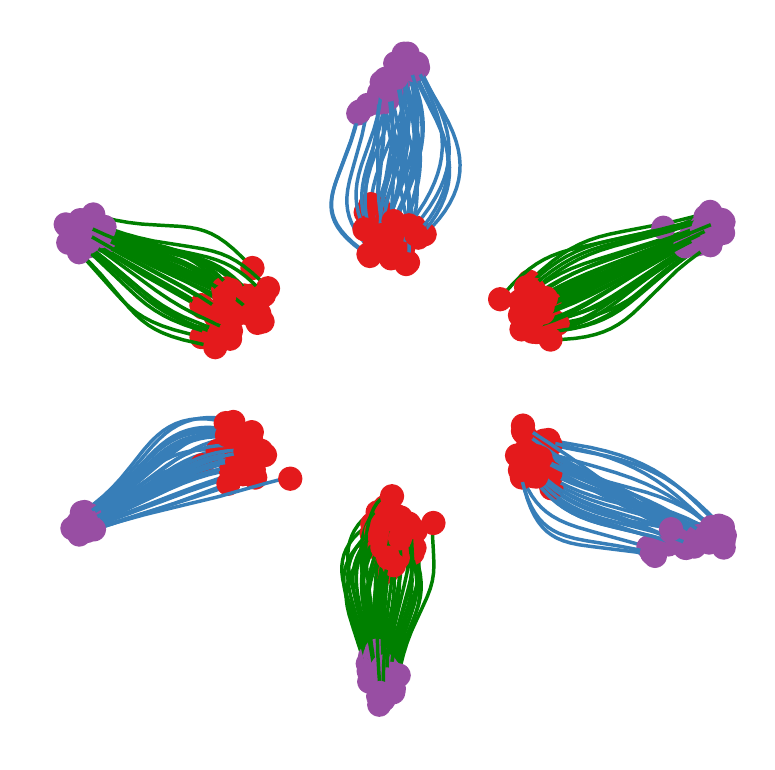} & \includegraphics[width=.14\textwidth]{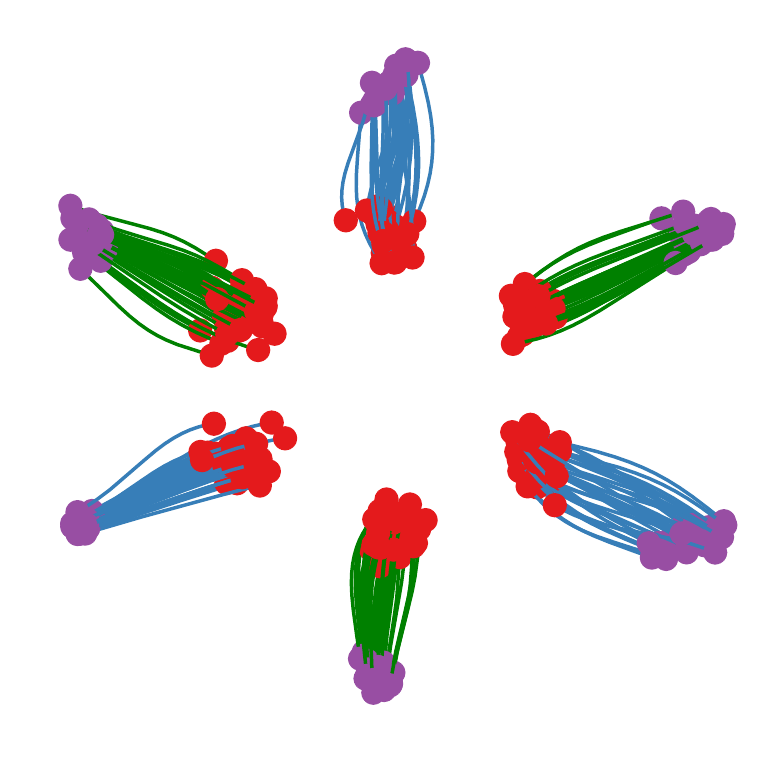} &
\raisebox{1.2em}{\rotatebox{90}{\tmpfsize{$h=1$}}}
\includegraphics[width=.14\textwidth]{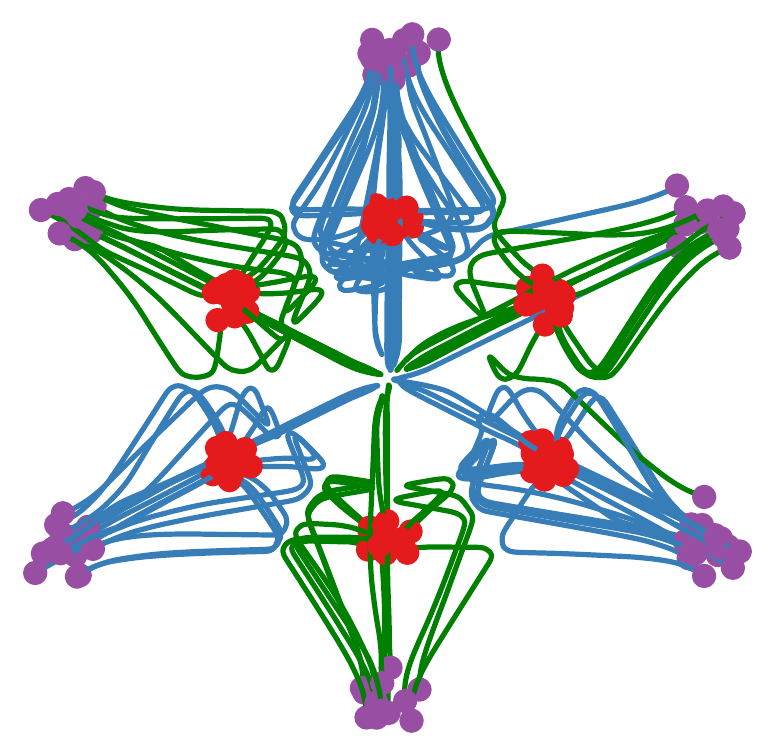}
& \includegraphics[width=.14\textwidth]{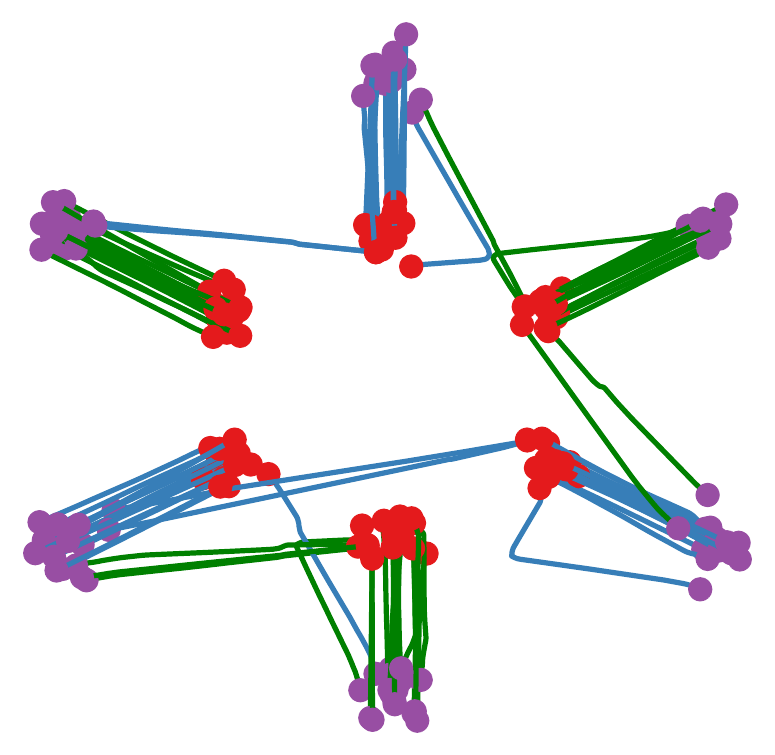} & \includegraphics[width=.14\textwidth]{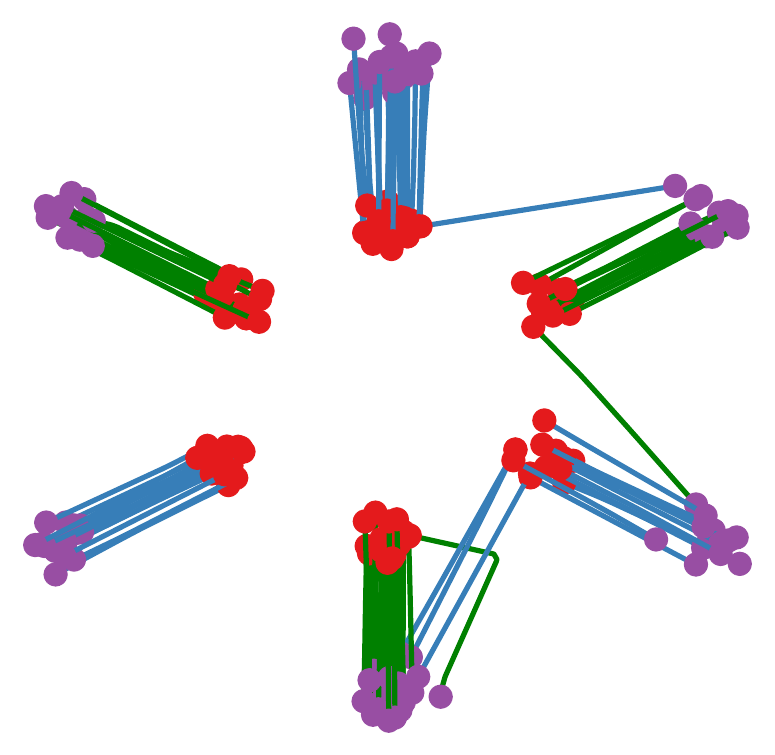} \\

\vspace{-15pt}
\end{tabular}}
\caption{
Rectified flows 
fitted with 
neural networks trained with different L2 penalty (left), and kernel estimator with different bandwidth $h$ (right).  
$\tg_0$: red dots; $\tg_1$: purple dots. 
}
\label{fig:smoothness}
\vspace{-5pt}
\end{figure}

In Figure~\ref{fig:toystar} of Section~\ref{sec:secondintro},  
the straightness is calculated as the empirical estimation of \eqref{equ:straight} based on the simulated trajectories.
The relative transport cost is calculated based on  $\{z_0\datai,z_1\datai\}_{i=1}^n$ drawn from $(Z_0,Z_1)$ by simulating the flow, as $\frac{1}{n}\sum_{i=1}^n \norm{z_1\datai-z_0\datai}^2 - \norm{z_1\dataidx{i^*}-z_0\datai}^2$, where $z_1\dataidx{i^*}$ is the optimal L2 assignment of $z_0\datai$ obtained by solving the discrete L2 optimal transport problem between $\{z_0\datai\}$ and $\{z_1\datai\}$. 
We should 
note that this metric is only useful in low dimensions, as it tends to be identically zero in high dimensional cases 
even $v^X$ is set to be a random neural network.
This misleading phenomenon is what causes  \cite{khrulkov2022understanding} to make the false hypothesis that DDIM yields L2 optimal transport.

\subsection{Unconditioned Image Generation}
\label{sec:exp:cifar}

We test rectified flow for unconditioned image generation on CIAFR-10 and a number of high resolution datasets. 
The methods are evaluated by the quality of generated images by Fréchet inception distance (FID) and inception score (IS), 
 and the diversity of the generated images by the recall score  following \citep{kynkaanniemi2019improved}.

\paragraph{Experiment settings}
For the purpose of generative modeling, we set $\tg_0$ to be the standard Gaussian distribution and $\tg_1$ the data distribution. 
Our implementation of rectified flow is modified upon the open-source code of ~\citep{song2020score}. %
We adopt the U-Net architecture of DDPM++~\cite{song2020score} for representing the drift $v^X$, 
and report in Table~\ref{tab:cifar10} (a)  and Figure~\ref{fig:cifar} the results of our method and the (sub)-VP ODE from \cite{song2020score} using the same architecture.  
Other recent results using different network architectures are shown in Table~\ref{tab:cifar10} (b) for reference. 
More detailed settings can be found in the Appendix.

\begin{table}[h]
    \centering
    \begin{tabular}{cc} 
    \resizebox{.5\textwidth}{!}{
    \begin{tabular}{lcccc}
        \hline \hline
        Method  & NFE($\downarrow$) & IS ($\uparrow$) & FID ($\downarrow$) & Recall ($\uparrow$) \\  \hline \hline 
        \emph{ODE} &
        \multicolumn{4}{l}{\emph{One-Step Generation (Euler solver, N=1)}} \\
        \hline 
         \textbf{1-Rectified Flow (\emph{+Distill}) } & 1 & 1.13 (\emph{\textbf{9.08}}) & 378 (\emph{6.18}) & 0.0 (\emph{0.45}) \\
        \textbf{2-Rectified Flow (\emph{+Distill}) } & 1 & 8.08 (\emph{9.01}) & 12.21 (\emph{\textbf{4.85}}) & 0.34 (\emph{0.50}) \\
         \textbf{3-Rectified Flow (\emph{+Distill}) } & 1 & 8.47 (\emph{8.79}) & 8.15 (\emph{5.21}) & 0.41 (\emph{\textbf{0.51}}) \\
        VP ODE~\citep{song2020score} (\emph{+Distill}) & 1 & 1.20 (\emph{8.73}) & 451 (\emph{16.23}) & 0.0 (\emph{0.29}) \\
         sub-VP ODE~\citep{song2020score} (\emph{+Distill}) & 1 & 1.21 (\emph{8.80}) & 451 (\emph{14.32}) & 0.0 (\emph{0.35}) \\
         \hline \hline  
         \emph{ODE} & 
          \multicolumn{4}{l}{\emph{Full Simulation (Runge–Kutta (RK45), Adaptive $N$)}} \\ 
        \hline           
         \textbf{1-Rectified Flow} & 127 & \textbf{9.60} & \textbf{2.58} & \textbf{0.57} \\
         \textbf{2-Rectified Flow} & 110 & 9.24 & 3.36 & 0.54 \\
         \textbf{3-Rectified Flow} & 104 & 9.01 & 3.96 & 0.53 \\
        VP ODE~\citep{song2020score} & 140 & 9.37 & 3.93 & 0.51 \\
         sub-VP ODE~\citep{song2020score} & 146 & 9.46 & 3.16 & 0.55 \\         
        \hline \hline 
        \emph{SDE} & 
        \multicolumn{4}{l}{\emph{ Full Simulation (Euler solver, N=2000)}} \\ 
        \hline            
         VP SDE~\citep{song2020score} & 2000 & 9.58 & 2.55 & 0.58  \\
         sub-VP SDE~\citep{song2020score} & 2000 & 9.56 & 2.61 & 0.58  \\
        \hline \hline
    \end{tabular}
    }
    &     
    \resizebox{.46\textwidth}{!}{
    \begin{tabular}{lcccc}
        \hline \hline
        Method & NFE($\downarrow$) & IS ($\uparrow$) & FID ($\downarrow$) & Recall ($\uparrow$) \\ \hline  \hline 
        \emph{GAN} & 
        \multicolumn{4}{l}{\emph{One-Step Generation}} \\ 
        \hline            
        SNGAN~\citep{miyato2018spectral} & 1 & 8.22 & 21.7 & 0.44  \\
        StyleGAN2~\citep{karras2020training} & 1 & 9.18 & 8.32 & 0.41 \\
         StyleGAN-XL~\citep{sauer2022stylegan} & 1 & - & 1.85 & 0.47 \\
         StyleGAN2 + ADA~\citep{karras2020training} & 1 & 9.40 & 2.92 & 0.49 \\
         StyleGAN2 + DiffAug~\citep{zhao2020differentiable} & 1 & 9.40 & 5.79 & 0.42 \\
         TransGAN + DiffAug~\citep{jiang2021transgan} & 1 & 9.02 & 9.26 & 0.41 \\
         \hline \hline 
         \emph{GAN with U-Net} & \multicolumn{4}{l}{\emph{One-step Generation}} \\  
         \hline
         TDPM (T=1)~\citep{zheng2022truncated} & 1 & 8.65 & 8.91 & 0.46\\
         Denoising Diffusion GAN (T=1)~\citep{xiao2021tackling} & 1 & 8.93 & 14.6 & 0.19 \\
         \hline \hline
        \emph{ODE} & \multicolumn{4}{l}{\emph{One Step Generation (Euler solver, N=1)}} \\ 
        \hline 
         DDIM Distillation~\citep{luhman2021knowledge} & 1 & 8.36 & 9.36 & 0.51 \\
         NCSN++ (VE ODE) \citep{song2020score} (\emph{+Distill}) & 1 & 1.18 (\emph{2.57}) & 461 (\emph{254}) & 0.0 (\emph{0.0}) \\\hline \hline 
        \emph{ODE} & 
        \multicolumn{4}{l}{\emph{Full Simulation (Runge–Kutta (RK45), Adaptive $N$)}}
        \\\hline  %
        NCSN++ (VE ODE)~\citep{song2020score} & 176 & 9.35 & 5.38 & 0.56   \\
        \hline \hline
        \emph{SDE} & 
        \multicolumn{4}{l}{\emph{Full Simulation (Euler solver)}} 
        \\\hline  %
        DDPM~\citep{ho2020denoising} & 1000 & 9.46 & 3.21 & 0.57 \\
        NCSN++ (VE SDE)~\citep{song2020score} & 2000 & 9.83 & 2.38 & 0.59  \\
\hline         \hline 
         
    \end{tabular}
    } \\
    \scriptsize{ (a) Results using the DDPM++ architecture.}
    & 
    \scriptsize{ (b) Recent results with different architectures reported in literature.}
    \end{tabular} 
    \caption{Results on CIFAR10 unconditioned image generation. 
    Fréchet Inception Distance (FID) and Inception Score (IS) measure the  quality of the generated images, 
    and recall score~\citep{kynkaanniemi2019improved} measures diversity. 
    The number of function evaluation (NFE)  denotes the number of times we need to call the main neural network during inference. 
    It coincides with the number of discretization steps $N$ for ODE and SDE models.
    }
    \label{tab:cifar10}
\end{table}

\begin{figure}[tbh]
\centering
\scalebox{1}{
\begin{tabular}{cc}
\includegraphics[width=.47\textwidth]{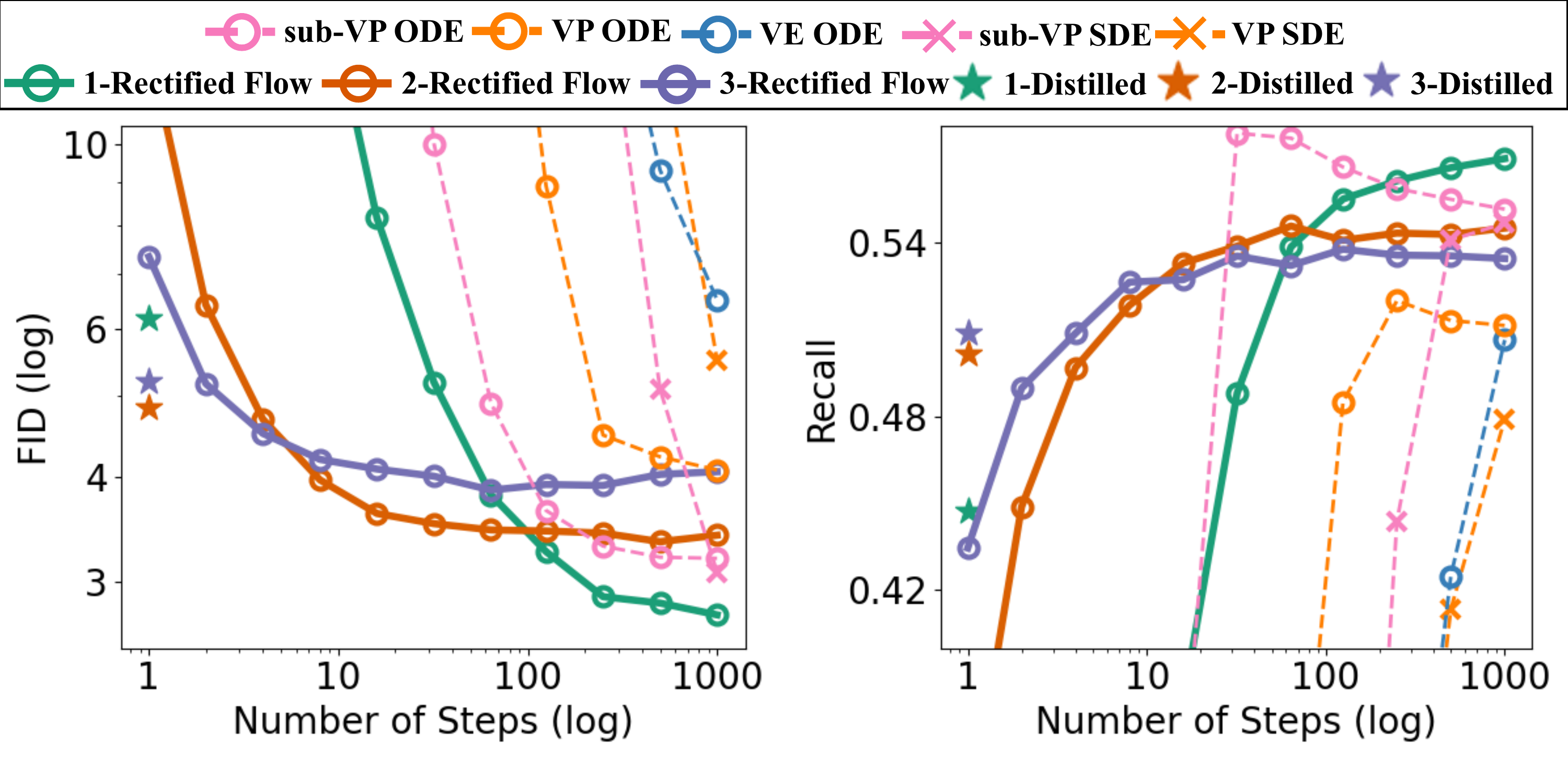} & \includegraphics[width=.47\textwidth]{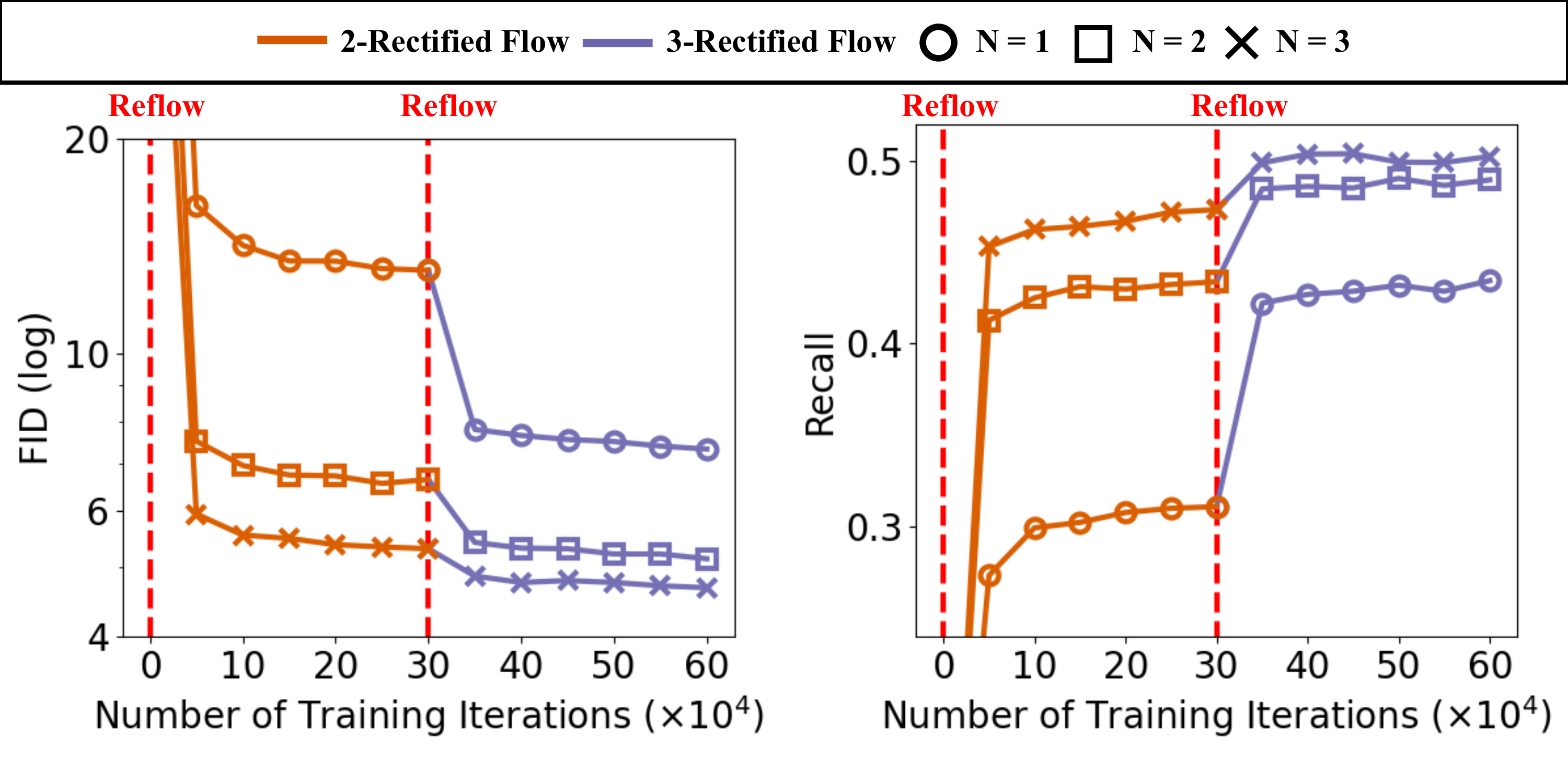}
\\
\footnotesize{(a) FID and Recall vs. Number of Euler discretization steps $N$} & \footnotesize{(b) FID and Recall vs. Training Iterations}
 \\
\vspace{-15pt}
\end{tabular}}
\caption{
(a) Results of rectified flows and (sub-)VP ODE on CIFAR10  
with different number $N$ of Euler discretization steps. 
(b) The FID and recall during different reflow and training steps. 
In (a), \emph{$k$-Distilled} refers to the one-step model distilled from \emph{$k$-Rectified Flow} for $k=1,2,3$. 
}
\label{fig:cifar}
\vspace{-5pt}
\end{figure}

\paragraph{Results} 
\emph{$\bullet$~Results of fully solved ODEs.} 
As shown in Table~\ref{tab:cifar10} (a), 
the 1-rectified flow 
trained on the DDPM++ architecture, solved with RK45, 
yields the lowest FID ($2.58$) and highest recall ($0.57$) among all the ODE-based methods.  
In particular, the recall of 0.57 yields a substantial improvement over existing ODE and GAN methods. 
Using the same RK45 ODE solver, rectified flows require fewer steps to generate the images compared with VE, VP, sub-VP ODEs. 
The results are comparable to the fully simulated (sub-)VP SDE, which yields simulation cost. 

\begin{wrapfigure}{r}{0.45\textwidth}
    \vspace{-1\baselineskip}
    \includegraphics[width=0.48\textwidth]{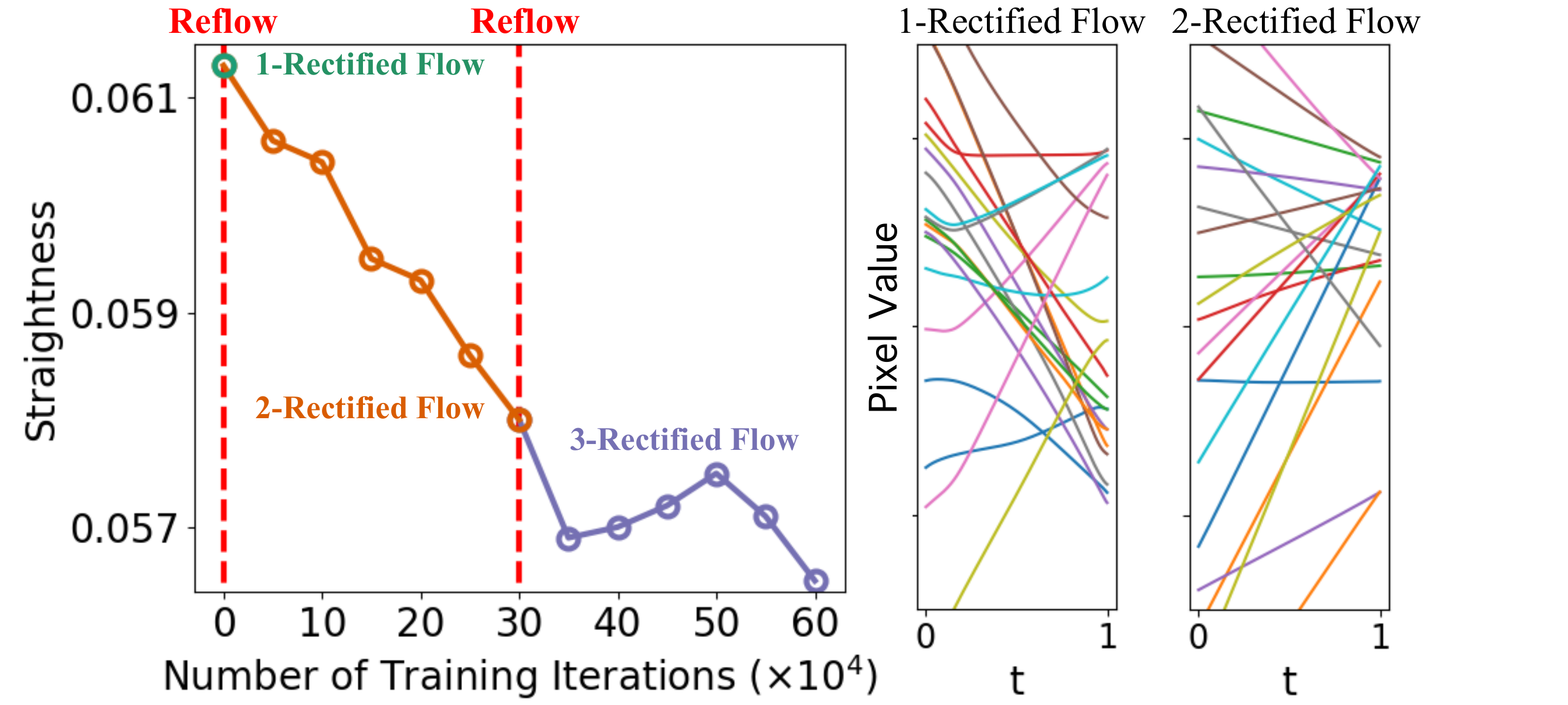}
    \caption{
    The straightening effect on CIFAR10. 
    Left: %
    the straightness measure on different reflow steps and training iterations. 
    Right: trajectories of randomly sampled pixels following 1- and 2-rectified flow. 
    }
    \vspace{-1\baselineskip}
    \label{fig:cifar_straight}
\end{wrapfigure}
\emph{$\bullet$~Results on few and single step generation.} 
As shown in Figure~\ref{fig:cifar}, 
the reflow procedure substantially improves both FID and recall in the small step regime (e.g., $N{\scriptsize\lessapprox}80$), even though it worsens the results 
in the large step regime due to the accumulation of error on estimating $v^x$. 
Figure~\ref{fig:cifar} (b) show that each reflow leads to a noticeable improvement in FID and recall.   
For one-step generation $(N=1)$, 
the results are further boosted by distillation (see the stars in Figure~\ref{fig:cifar} (a)).  
Overall, the distilled $k$-Rectified Flow  with $k=1,2,3$  
yield one-step generative models 
beating all previous ODEs with distillation; 
they also beat the reported results of one-step models with similar U-net type architectures 
trained using GANs (see the \emph{GAN with U-Net} in Table~\ref{tab:cifar10} (b)). 

In particular, 
the distilled 2-rectified flow achieves an FID of $4.85$, beating the best known one-step generative model with U-net architecture, $8.91$ (TDPM, Table~\ref{tab:cifar10} (b)).
The recalls of 
both 2-rectified flow ($0.50$) and 3-rectified flow ($0.51$) outperform the best known results of GANs ($0.49$ from StyleGAN2+ADA)
showing an advantage in diversity. 
We should note that the reported results of GANs have been carefully optimized 
with special techniques such as adaptive discriminator augmentation (ADA)~\citep{karras2020training}, while our results are
based on the vanilla implementation of rectified flow. 
It is likely to further improve rectified flow with proper data augmentation techniques, or the combination of GANs such as 
those proposed by TDPM~\citep{zheng2022truncated} and denoising diffusion GAN~\citep{xiao2021tackling}.

\emph{$\bullet$~Reflow straightens the flow.} 
Figure~\ref{fig:cifar_straight} shows the reflow procedure decreases improves the straightness of the flow on CIFAR10.  
In Figure~\ref{fig:cat_target} visualizes the trajectories of 1-rectified flow and 2-rectified flow on the AFHQ cat dataset: 
at each point $z_t$, we extrapolate the terminal value at $t=1$ by 
$\hat{z}_1^t = z_t + (1-t) v(z_t, t)$; 
if the trajectory of ODE follows a straight line, 
$\hat{z}_1^t$ should not change  as we vary $t$ when following the same path.  
We observe that $\hat{z}_1^t$ is almost independent with $t$ for 2-rectified flow, 
showing the path is almost straight.  
Moreover, even though 1-rectified flow is not straight with $\hat z_1^t$ over time, 
it still yields recognizable and clear images very early ($t\approx 0.1$). 
In comparison,  
it is need $t\approx 0.6$ to get a clear image
from the extrapolation of sub-VP ODE.

\paragraph{High-resolution image generation}
Figure~\ref{fig:high_res} shows the result of 1-rectified flow on image generation  
on high-resolution ($256 \times 256$) datasets, including  
 LSUN Bedroom~\citep{yu2015lsun}, LSUN Church~\citep{yu2015lsun}, CelebA HQ~\citep{karras2018progressive} to AFHQ Cat~\citep{choi2020stargan}. We can see that it can generate high quality results across the different datasets.  
 Figure~\ref{fig:cat_k} \& \ref{fig:cat_target} show that 
1-(2-)rectified flow yields good results within one or few Euler steps.  

Figure~\ref{fig:interp_editing} shows a simple example of image editing using 1-rectified flow: 
We first obtain an unnatural image $z_1$ by stitching the upper and lower parts of two natural images, 
and then run 1-rectified flow backwards to get a latent code $z_0$. 
We then modify $z_0$ to increase its likelihood under $\tg_0$ (which is $\normal(0,I)$) 
to get more naturally looking variants of the stitched image. %

\begin{figure}
    \centering
    \includegraphics[width=0.9\textwidth]{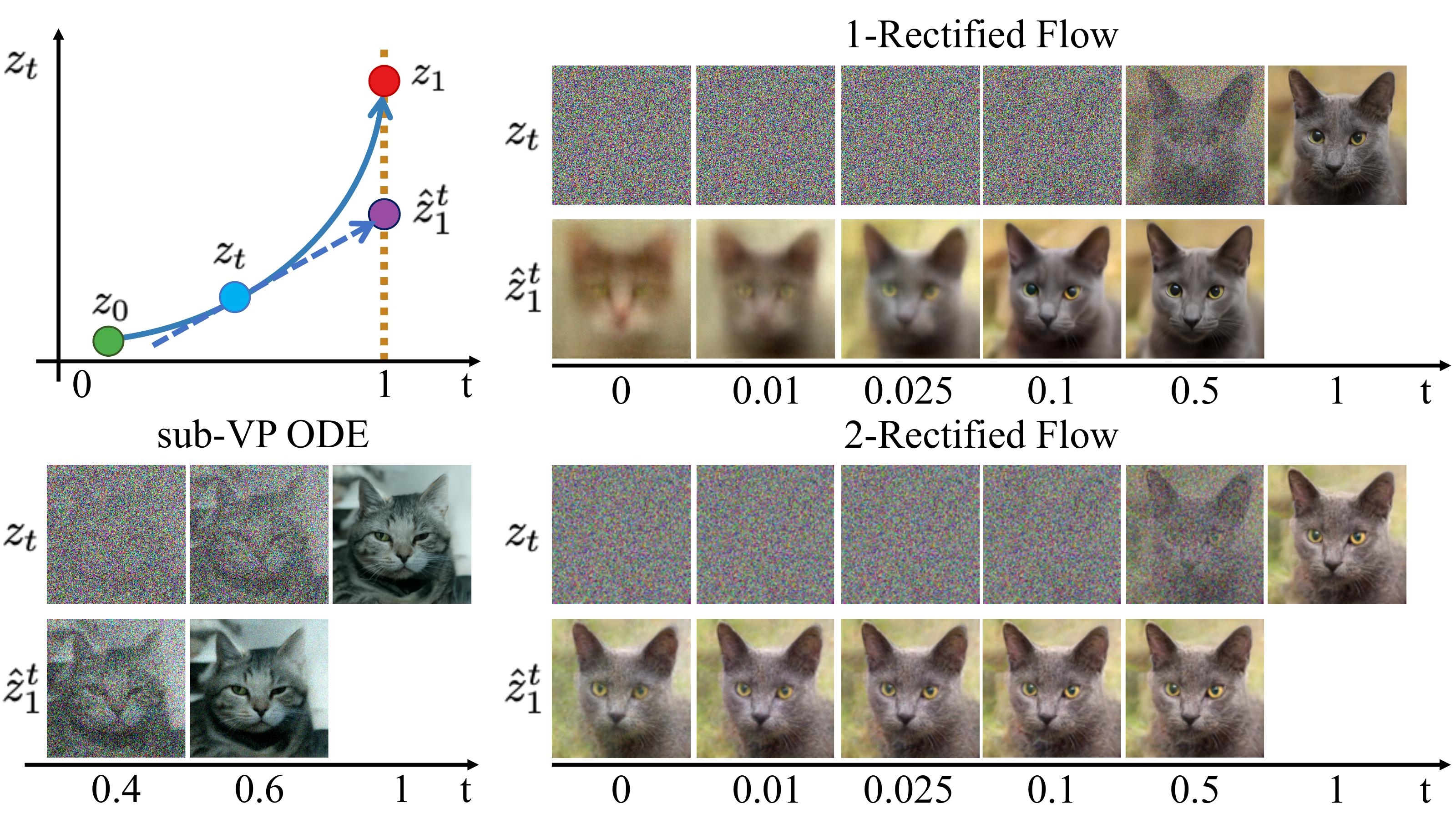}
    \caption{
    Sample trajectories $z_t$ of different flows on the AFHQ Cat dataset,  %
    and the extrapolation $\hat{z}_1^t =z_t + (1-t) v(z_t, t)$ from different $z_t$. The same random seed is adopted for all three methods. The $\hat z_1^t$ of 2-rectified flow is almost independent with $t$, indicating that its trajectory is almost straight. 
    }
    \label{fig:cat_target}
\end{figure}

\begin{figure}
    \centering
    \includegraphics[width=0.95\textwidth]{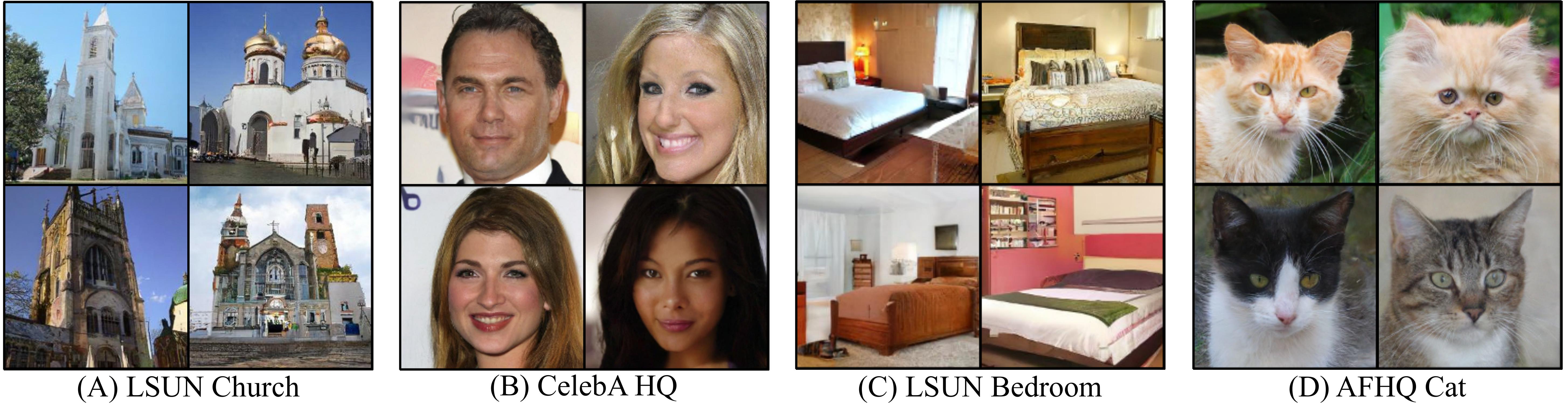}
    \caption{Examples of $256\times 256$ images generated by 1-rectified flow.}
    \label{fig:high_res}
\end{figure}

\begin{figure}
    \centering
    \includegraphics[width=0.95\textwidth]{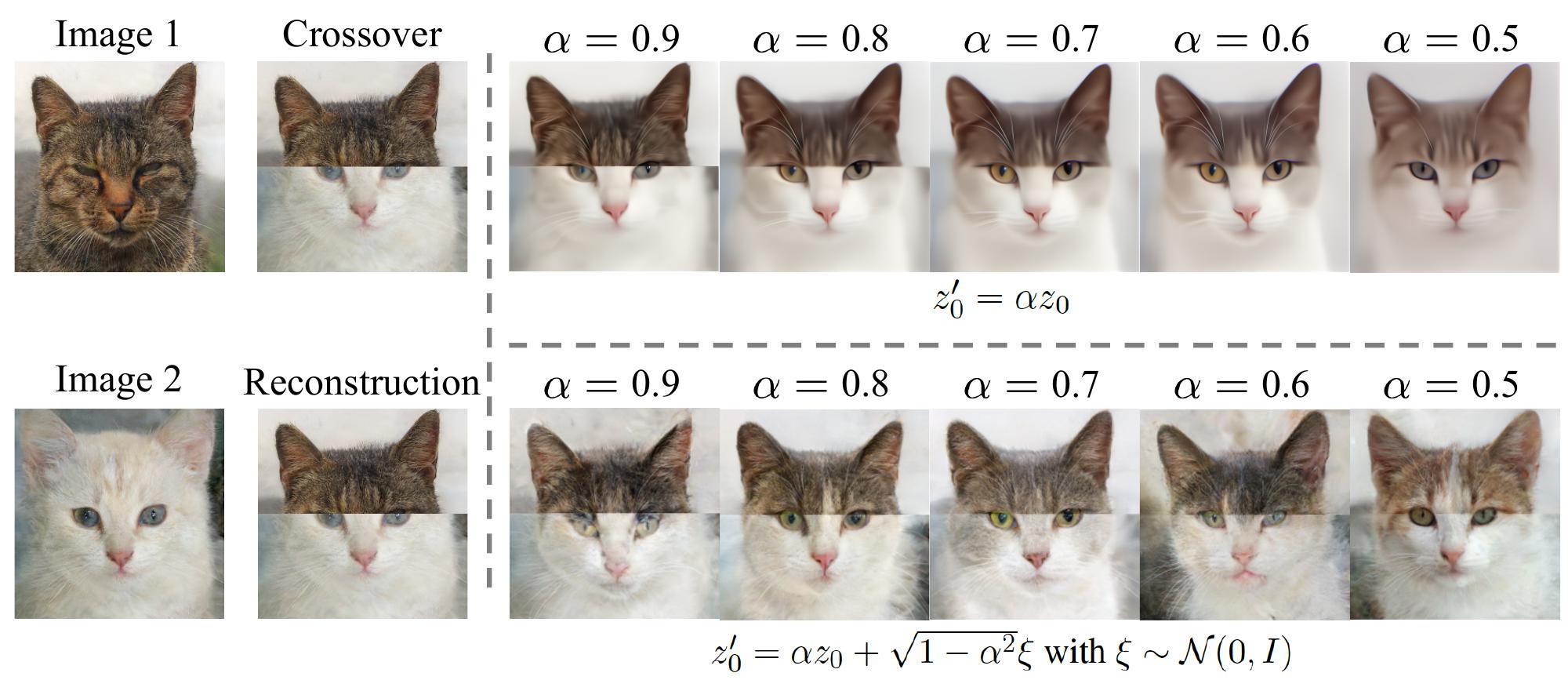}
    \caption{An example of image editing using 1-rectified flow. 
    Here, we stitch the images of a white cat and a black cat into an unnatural image (denoted as $z_1$).  
    We simulate the  ODE reversely from $z_1$ to get the latent code $z_0$. %
    Because $z_1$ is not a natural image, $z_0$ should have low likelihood under $\tg_0 = \normal(0,I)$.  
    Hence, we move $z_0$ towards the high probability region of $\tg_0$ to get $z_0'$ and solve the ODE forwardly to get a more realistically looking image $z_1'$. 
    The modification can be done deterministically by improving the $\tg_0$-likelihood via $z_0' =\alpha z_0$ with $\alpha\in(0,1)$, or 
    stochastically by  Langevin dynamics, 
    which yields a  formula of 
    $z'_0 = \alpha z_0 + \sqrt{1-\alpha^2}\xi$ with $\xi \sim \normal(0, I)$. 
    }
    \label{fig:interp_editing}
\end{figure}

\begin{figure}[h]
\resizebox{1\textwidth}{!}{
\begin{tabular}{cccc}
     \includegraphics[width=0.23\textwidth]{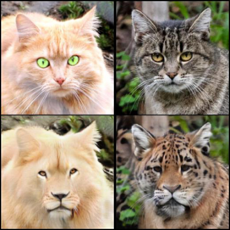} & \includegraphics[width=0.23\textwidth]{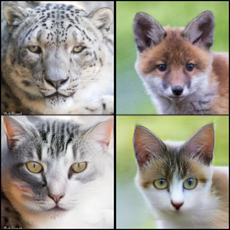} & %
     \includegraphics[width=0.23\textwidth]{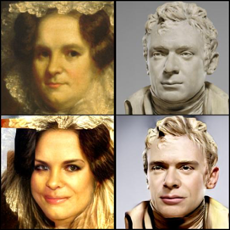} &%
     \includegraphics[width=0.23\textwidth]{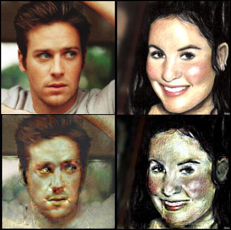}\\
     \small (A) Cat $\rightarrow$ Wild Animals & \small (B) Wild Animals $\rightarrow$ Cat & \small (C) MetFace $\rightarrow$ CelebA Face & \small (D) CelebA Face $\rightarrow$ MetFace 
\end{tabular}
}
    \caption{
Samples of 1-rectified flow simulated with $N=100$ Euler steps between different domains. 
    }
    \label{fig:met2cat}
\end{figure}

\begin{figure}[h]
\centering
\begin{tabular}{ccccc}
     Initialization & 1-Rectified Flow & 2-Rectified Flow & 1-Rectified Flow  & 2-Rectified Flow\\
     & $N=100$ & $N=100$ & $N=1$  & $N=1$\\
     \includegraphics[width=0.16\textwidth]{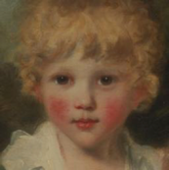} & 
     \includegraphics[width=0.16\textwidth]{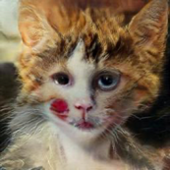} &
     \includegraphics[width=0.16\textwidth]{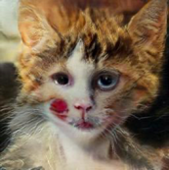} &
     \includegraphics[width=0.16\textwidth]{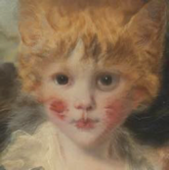} &
     \includegraphics[width=0.16\textwidth]{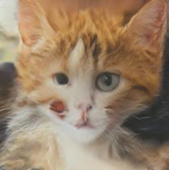}\\
     \includegraphics[width=0.16\textwidth]{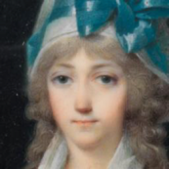} & 
     \includegraphics[width=0.16\textwidth]{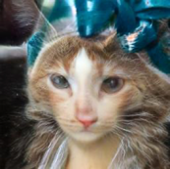} &
     \includegraphics[width=0.16\textwidth]{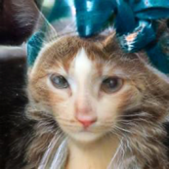} &
     \includegraphics[width=0.16\textwidth]{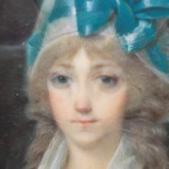} &
     \includegraphics[width=0.16\textwidth]{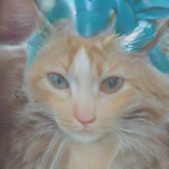}\\
\end{tabular}
    \caption{Samples of results of 1- and 2-rectified flow simulated with $N=1$ and $N=100$ Euler steps.
    }
    \label{fig:met2cat_reflow}
\end{figure}

\subsection{Image-to-Image Translation}

Assume we are given two sets of images of different styles (a.k.a. domains),  
whose distributions are denoted by $\tg_0,\tg_1$, respectively. 
We are interested in transferring the style (or other key characteristics) of the images in one domain to the other domain, in the absence of paired examples. 
A classical approach to achieving this is  
 cycle-consistent adversarial networks (a.k.a. CycleGAN) \citep{cyclegan, isola2017image}, 
 which jointly learns a forward and backward mapping $F,G$ 
 by minimizing the sum of adversarial losses on the two domains, regularized by a cycle consistency loss to enforce $F(G(x))\approx x$ for all image $x$.

By constructing the rectified flow of $\tg_0$ and $\tg_1$,  
we obtain a simple approach to image translation that requires no adversarial optimization and cycle-consistency regularization:  
training the rectified flow requires a simple optimization procedure and the cycle consistency is automatically in  flow  models satisfied due to reversibility of ODEs.  

As the main goal here is to obtain good visual results,  
we are not interested in faithfully transferring $X_0\sim \tg_0$  
to an $X_1$ that exactly follows $\tg_1$. 
Rather, we are interested in 
transferring the image styles while preserving the identity of the main object in the image. For example, 
when transferring a human face image to a cat face, 
we are interested in getting a unrealistic face of human-cat hybrid that still ``looks like" the original human face.

To achieve this, 
let $h(x)$ be a feature mapping of image $x$ representing the styles that we are interested in transferring. 
Let $X_t = t X_1 + (1-t) X_0$. Then $H_t = h(X_t)$ follows an ODE of $\d H_t = \dd h(X_t)\tt (X_1-X_0)\dt .$ Hence, to ensure that the style is transferred correctly, 
we propose to %
learn $v$ such that $H_t' = h(Z_t)$ with $\d Z_t  = v(Z_t,t)\dt $ approximates $H_t$ as much as possible. 
Because $\d H_t' = \dd h(Z_t) \tt v(Z_t, t)\dt$, we propose to minimize the following loss: 
\begin{align}\label{equ:hloss}
    \min_v \int_0^1\mathbb{E} \left[ \norm{ \nabla h(X_t)\tt (X_1 - X_0 - v (X_t, t)) }_2^2 \right ] \dt , && X_t = t X_1 + (1 - t) X_0. 
\end{align}
In practice, we set $h(x)$ to be latent representation of a classifier trained 
to distinguish the images from the two domains $\tg_0,\tg_1$, fine-tuned from a pre-trained ImageNet \citep{tan2019efficientnet} model. %
Intuitively, $\nabla_x h(x)$ serves as a saliency score and re-weights coordinates so that the loss in \eqref{equ:hloss} focuses on penalizing the error that causes significant changes on $h$.

\begin{figure}[h]
\centering
\setlength{\tabcolsep}{1pt}
\renewcommand\arraystretch{0.4}
\begin{tabular}{cc}
     & \scriptsize{1-Rectified Flow} \\
     \includegraphics[width=0.48\textwidth]{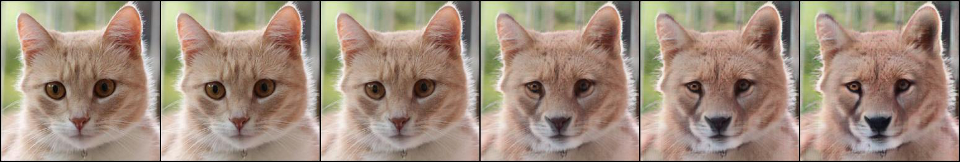} &
     \includegraphics[width=0.48\textwidth]{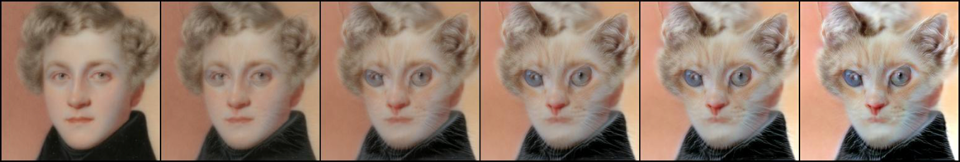}
     \\
     & \scriptsize{2-Rectified Flow} \\
     \includegraphics[width=0.48\textwidth]{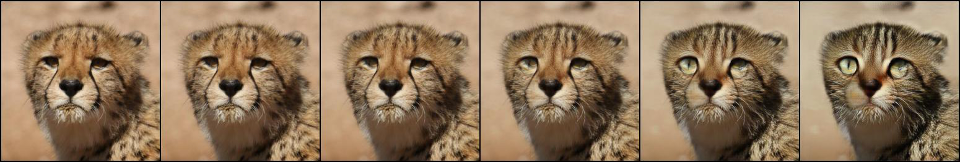} & 
     \includegraphics[width=0.48\textwidth]{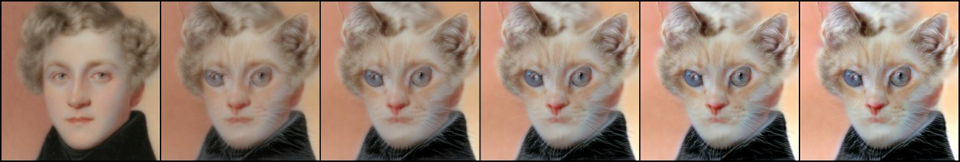}
     \\
     & \scriptsize{1-Rectified Flow} \\
     \includegraphics[width=0.48\textwidth]{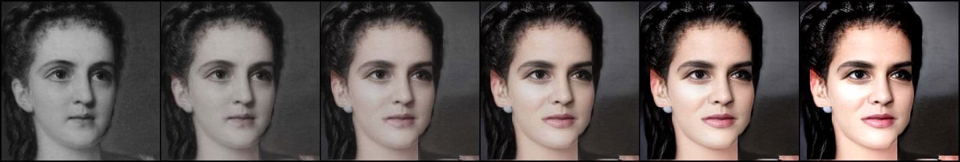} & 
     \includegraphics[width=0.48\textwidth]{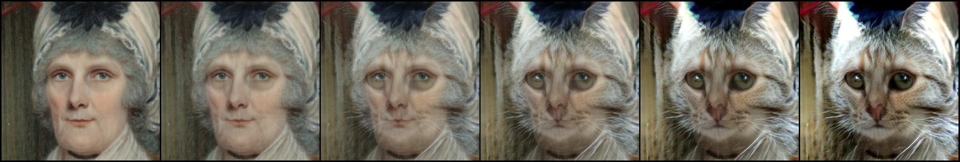}
     \\
     & \scriptsize{2-Rectified Flow} \\
     \includegraphics[width=0.48\textwidth]{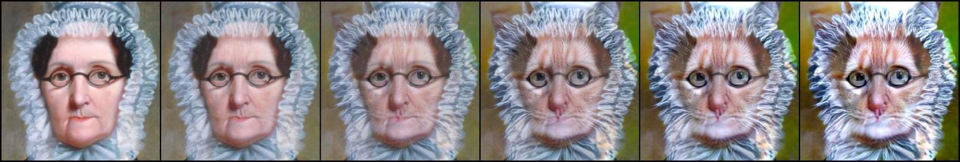} & 
     \includegraphics[width=0.48\textwidth]{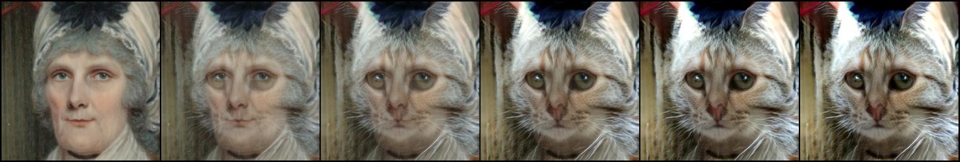}
     \\
     \hspace{3pt}\includegraphics[width=0.485\textwidth]{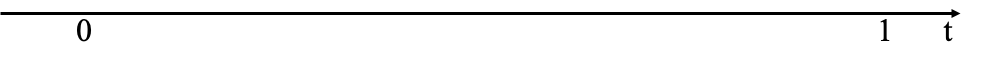} & 
     \hspace{3pt}\includegraphics[width=0.485\textwidth]{arxiv_figures/trajectory/axis.png} \\
     (a) 1-rectified flow between different domains & (b) 1- and 2-rectified flow for MetFace $\rightarrow$ Cat.  \\
     
\end{tabular}
    \caption{(a) Samples of  trajectories $z_t$ of 1- and 2-rectified flow for transferring between different domains. }
    \label{fig:met2cat_traj}
\end{figure}

\paragraph{Experiment settings}
We set the domains $\tg_0,\tg_1$ to be pairs of 
the AFHQ \citep{choi2020stargan}, MetFace \citep{karras2020training} and CelebA-HQ \citep{karras2018progressive} dataset. 
For each dataset, we randomly select $80\%$ as the training data and regard the rest as the test data; and the results are shown by initializing the trained flows from the test data.
We resize the image to $512 \times 512$.
The training and network configurations  
generally follow the experiment settings in Section~\ref{sec:exp:cifar}. See the appendix for detailed descriptions. 

\paragraph{Results}
Figure~\ref{fig:cat_k}, \ref{fig:met2cat}, \ref{fig:met2cat_reflow}, \ref{fig:met2cat_traj} 
show examples of results of 1- and 2-rectified flow simulated with Euler method with different number of steps $N$.  
We can see that rectified flows can successfully transfer the styles and generate high quality images. 
For example, when transferring cats to wild animals,  
we can generate diverse images with different animal faces, {e.g.}, fox, lion, tiger and cheetah. 
Moreover, with one step of reflow, 2-rectified flow returns good results with a single Euler step ($N=1$).  
See more examples in Appendix.

\subsection{Domain Adaptation} 
A key challenge of applying machine learning  to  real-world problems is the domain shift between the training and test datasets: the performance of machine learning models may degrade significantly 
when tested on a novel domain different from the training set.  
Rectified flow can be applied to transfer the novel domain ($\tg_0$) to the training domain ($\tg_1$) to mitigate the impact of domain shift. 

\paragraph{Experiment settings} We test the rectified flow for domain adaptation on 
a number of datasets. 
DomainNet \citep{peng2019domainnet} is a dataset of common objects in six different domain taken from DomainBed  \citep{gulrajani2020search}. %
All domains  from DomainNet  include 345 categories (classes) of objects such as Bracelet, plane, bird and cello. 
Office-Home \citep{venkateswara2017officehome} is a benchmark dataset for domain adaptation which contains 4 domains where each domain consists of 65 categories. 
To apply our method, 
first we map both the training and testing data 
to the latent representation from final hidden layer of the pre-trained model, and construct the rectified flow on the latent representation. 
We use the same DDPM++ model architecture %
for training. %
For inference, we set the number of steps of our flow model as $100$ using uniform discretization.
The methods are evaluated by the prediction accuracy of the transferred testing data 
on the classification model trained on the training data. 

\paragraph{Results}
As demonstrated in Table \ref{tab:domainbed}, 
the 1-rectified flow shows  
state-of-the-art performance 
on both DomainNet and OfficeHome. 
It is better or on  par with 
the previous best approach (Deep CORAL \citep{sun2016coral}), 
while sustainably improve over all other methods.

\begin{table}[]
    \centering
    \scalebox{0.88}{
    \begin{tabular}{l|cccccc|c}
    \hline
        Method & ERM & IRM & ARM & Mixup & MLDG & CORAL & Ours\\
        \hline
        OfficeHome & $66.5\pm0.3$ & $64.3\pm2.2$ & $64.8\pm0.3$ & $68.1\pm0.3$ & $66.8\pm0.6$  & $68.7\pm0.3$  & $\bf 69.2\pm0.5$ \\
        DomainNet & $40.9\pm0.1$ & $33.9\pm2.8$ & $35.5\pm0.2$ & $39.2\pm0.1$ & $41.2\pm0.1$ & $\bf 41.5\pm0.2$ & $\bf 41.4\pm0.1$ \\
    \hline
    \end{tabular}}
    \caption{The accuracy of the transferred testing data using different methods, on the OfficeHome and DomainNet dataset. 
    Higher accuracy means the better performance. }
    \label{tab:domainbed}
\end{table}

\newpage\clearpage

\newpage
\bibliography{z_diffusion_models} 

\begin{thebibliography}{100}
\providecommand{\natexlab}[1]{#1}
\providecommand{\url}[1]{\texttt{#1}}
\expandafter\ifx\csname urlstyle\endcsname\relax
  \providecommand{\doi}[1]{doi: #1}\else
  \providecommand{\doi}{doi: \begingroup \urlstyle{rm}\Url}\fi

\bibitem[Ambrosio and Crippa(2008)]{ambrosio2008existence}
Luigi Ambrosio and Gianluca Crippa.
\newblock Existence, uniqueness, stability and differentiability properties of
  the flow associated to weakly differentiable vector fields.
\newblock In \emph{Transport equations and multi-D hyperbolic conservation
  laws}, pages 3--57. Springer, 2008.

\bibitem[Ambrosio et~al.(2021)Ambrosio, Bru{\'e}, and
  Semola]{ambrosio2021lectures}
Luigi Ambrosio, Elia Bru{\'e}, and Daniele Semola.
\newblock \emph{Lectures on optimal transport}.
\newblock Springer, 2021.

\bibitem[Anderson(1982)]{anderson1982reverse}
Brian~DO Anderson.
\newblock Reverse-time diffusion equation models.
\newblock \emph{Stochastic Processes and their Applications}, 12\penalty0
  (3):\penalty0 313--326, 1982.

\bibitem[Arjovsky et~al.(2017)Arjovsky, Chintala, and
  Bottou]{arjovsky2017wasserstein}
Martin Arjovsky, Soumith Chintala, and L{\'e}on Bottou.
\newblock Wasserstein generative adversarial networks.
\newblock In \emph{International conference on machine learning}, pages
  214--223. PMLR, 2017.

\bibitem[Bao et~al.(2022)Bao, Li, Zhu, and Zhang]{bao2022analytic}
Fan Bao, Chongxuan Li, Jun Zhu, and Bo~Zhang.
\newblock Analytic-{DPM}: an analytic estimate of the optimal reverse variance
  in diffusion probabilistic models.
\newblock \emph{arXiv preprint arXiv:2201.06503}, 2022.

\bibitem[Chen et~al.(2018)Chen, Rubanova, Bettencourt, and
  Duvenaud]{chen2018neural}
Ricky~TQ Chen, Yulia Rubanova, Jesse Bettencourt, and David~K Duvenaud.
\newblock Neural ordinary differential equations.
\newblock \emph{Advances in neural information processing systems}, 31, 2018.

\bibitem[Chen et~al.(2021)Chen, Liu, and Theodorou]{chen2021likelihood}
Tianrong Chen, Guan-Horng Liu, and Evangelos~A Theodorou.
\newblock Likelihood training of {Schr}{\"o}dinger bridge using
  forward-backward sdes theory.
\newblock \emph{arXiv preprint arXiv:2110.11291}, 2021.

\bibitem[Choi et~al.(2021)Choi, Kim, Jeong, Gwon, and Yoon]{choi2021ilvr}
Jooyoung Choi, Sungwon Kim, Yonghyun Jeong, Youngjune Gwon, and Sungroh Yoon.
\newblock Ilvr: Conditioning method for denoising diffusion probabilistic
  models.
\newblock \emph{arXiv preprint arXiv:2108.02938}, 2021.

\bibitem[Choi et~al.(2020)Choi, Uh, Yoo, and Ha]{choi2020stargan}
Yunjey Choi, Youngjung Uh, Jaejun Yoo, and Jung-Woo Ha.
\newblock Star{GAN} v2: Diverse image synthesis for multiple domains.
\newblock In \emph{Proceedings of the IEEE/CVF conference on computer vision
  and pattern recognition}, pages 8188--8197, 2020.

\bibitem[Daniels et~al.(2021)Daniels, Maunu, and Hand]{daniels2021score}
Max Daniels, Tyler Maunu, and Paul Hand.
\newblock Score-based generative neural networks for large-scale optimal
  transport.
\newblock \emph{Advances in neural information processing systems},
  34:\penalty0 12955--12965, 2021.

\bibitem[De~Bortoli et~al.(2021)De~Bortoli, Thornton, Heng, and
  Doucet]{de2021diffusion}
Valentin De~Bortoli, James Thornton, Jeremy Heng, and Arnaud Doucet.
\newblock Diffusion {Schr}{\"o}dinger bridge with applications to score-based
  generative modeling.
\newblock \emph{Advances in Neural Information Processing Systems}, 34, 2021.

\bibitem[Dhariwal and Nichol(2021)]{dhariwal2021diffusion}
Prafulla Dhariwal and Alexander Nichol.
\newblock Diffusion models beat {GAN}s on image synthesis.
\newblock \emph{Advances in Neural Information Processing Systems}, 34, 2021.

\bibitem[Dinh et~al.(2014)Dinh, Krueger, and Bengio]{dinh2014nice}
Laurent Dinh, David Krueger, and Yoshua Bengio.
\newblock Nice: Non-linear independent components estimation.
\newblock \emph{arXiv preprint arXiv:1410.8516}, 2014.

\bibitem[Dinh et~al.(2016)Dinh, Sohl-Dickstein, and Bengio]{dinh2016density}
Laurent Dinh, Jascha Sohl-Dickstein, and Samy Bengio.
\newblock Density estimation using real nvp.
\newblock \emph{arXiv preprint arXiv:1605.08803}, 2016.

\bibitem[Figalli and Glaudo(2021)]{figalli2021invitation}
Alessio Figalli and Federico Glaudo.
\newblock \emph{An Invitation to Optimal Transport, Wasserstein Distances, and
  Gradient Flows}.
\newblock 2021.

\bibitem[Flamary et~al.(2016)Flamary, Courty, Tuia, and
  Rakotomamonjy]{flamary2016optimal}
R~Flamary, N~Courty, D~Tuia, and A~Rakotomamonjy.
\newblock Optimal transport for domain adaptation.
\newblock \emph{IEEE Trans. Pattern Anal. Mach. Intell}, 1, 2016.

\bibitem[F{\"o}llmer(1985)]{follmer1985entropy}
Hans F{\"o}llmer.
\newblock An entropy approach to the time reversal of diffusion processes.
\newblock In \emph{Stochastic Differential Systems Filtering and Control},
  pages 156--163. Springer, 1985.

\bibitem[Franzese et~al.(2022)Franzese, Rossi, Yang, Finamore, Rossi,
  Filippone, and Michiardi]{franzese2022much}
Giulio Franzese, Simone Rossi, Lixuan Yang, Alessandro Finamore, Dario Rossi,
  Maurizio Filippone, and Pietro Michiardi.
\newblock How much is enough? a study on diffusion times in score-based
  generative models.
\newblock \emph{arXiv preprint arXiv:2206.05173}, 2022.

\bibitem[Goodfellow et~al.(2014)Goodfellow, Pouget-Abadie, Mirza, Xu,
  Warde-Farley, Ozair, Courville, and Bengio]{goodfellow2014generative}
Ian Goodfellow, Jean Pouget-Abadie, Mehdi Mirza, Bing Xu, David Warde-Farley,
  Sherjil Ozair, Aaron Courville, and Yoshua Bengio.
\newblock Generative adversarial nets.
\newblock \emph{Advances in neural information processing systems}, 27, 2014.

\bibitem[Gulrajani and Lopez-Paz(2020)]{gulrajani2020search}
Ishaan Gulrajani and David Lopez-Paz.
\newblock In search of lost domain generalization.
\newblock \emph{arXiv preprint arXiv:2007.01434}, 2020.

\bibitem[Harvey et~al.(2022)Harvey, Naderiparizi, Masrani, Weilbach, and
  Wood]{harvey2022flexible}
William Harvey, Saeid Naderiparizi, Vaden Masrani, Christian Weilbach, and
  Frank Wood.
\newblock Flexible diffusion modeling of long videos.
\newblock \emph{arXiv preprint arXiv:2205.11495}, 2022.

\bibitem[Haussmann and Pardoux(1986)]{haussmann1986time}
Ulrich~G Haussmann and Etienne Pardoux.
\newblock Time reversal of diffusions.
\newblock \emph{The Annals of Probability}, pages 1188--1205, 1986.

\bibitem[Ho et~al.(2020)Ho, Jain, and Abbeel]{ho2020denoising}
Jonathan Ho, Ajay Jain, and Pieter Abbeel.
\newblock Denoising diffusion probabilistic models.
\newblock \emph{Advances in Neural Information Processing Systems},
  33:\penalty0 6840--6851, 2020.

\bibitem[Ho et~al.(2022)Ho, Salimans, Gritsenko, Chan, Norouzi, and
  Fleet]{ho2022video}
Jonathan Ho, Tim Salimans, Alexey Gritsenko, William Chan, Mohammad Norouzi,
  and David~J Fleet.
\newblock Video diffusion models.
\newblock \emph{arXiv preprint arXiv:2204.03458}, 2022.

\bibitem[Isola et~al.(2017)Isola, Zhu, Zhou, and Efros]{isola2017image}
Phillip Isola, Jun-Yan Zhu, Tinghui Zhou, and Alexei~A Efros.
\newblock Image-to-image translation with conditional adversarial networks.
\newblock In \emph{Proceedings of the IEEE conference on computer vision and
  pattern recognition}, pages 1125--1134, 2017.

\bibitem[Jiang et~al.(2021)Jiang, Chang, and Wang]{jiang2021transgan}
Yifan Jiang, Shiyu Chang, and Zhangyang Wang.
\newblock Trans{GAN}: Two pure transformers can make one strong {GAN}, and that
  can scale up.
\newblock \emph{Advances in Neural Information Processing Systems}, 34, 2021.

\bibitem[Karras et~al.(2018)Karras, Aila, Laine, and
  Lehtinen]{karras2018progressive}
Tero Karras, Timo Aila, Samuli Laine, and Jaakko Lehtinen.
\newblock Progressive growing of {GAN}s for improved quality, stability, and
  variation.
\newblock In \emph{International Conference on Learning Representations}, 2018.

\bibitem[Karras et~al.(2020)Karras, Aittala, Hellsten, Laine, Lehtinen, and
  Aila]{karras2020training}
Tero Karras, Miika Aittala, Janne Hellsten, Samuli Laine, Jaakko Lehtinen, and
  Timo Aila.
\newblock Training generative adversarial networks with limited data.
\newblock \emph{Advances in Neural Information Processing Systems},
  33:\penalty0 12104--12114, 2020.

\bibitem[Karras et~al.(2022)Karras, Aittala, Aila, and Laine]{elucidating}
Tero Karras, Miika Aittala, Timo Aila, and Samuli Laine.
\newblock Elucidating the design space of diffusion-based generative models.
\newblock \emph{arXiv preprint arXiv:2206.00364}, 2022.

\bibitem[Khrulkov and Oseledets(2022)]{khrulkov2022understanding}
Valentin Khrulkov and Ivan Oseledets.
\newblock Understanding {DDPM} latent codes through optimal transport.
\newblock \emph{arXiv preprint arXiv:2202.07477}, 2022.

\bibitem[Kingma and Ba(2014)]{kingma2014adam}
Diederik~P Kingma and Jimmy Ba.
\newblock Adam: A method for stochastic optimization.
\newblock \emph{arXiv preprint arXiv:1412.6980}, 2014.

\bibitem[Kingma and Welling(2013)]{kingma2013auto}
Diederik~P Kingma and Max Welling.
\newblock Auto-encoding variational bayes.
\newblock \emph{arXiv preprint arXiv:1312.6114}, 2013.

\bibitem[Kong et~al.(2020)Kong, Ping, Huang, Zhao, and
  Catanzaro]{kong2020diffwave}
Zhifeng Kong, Wei Ping, Jiaji Huang, Kexin Zhao, and Bryan Catanzaro.
\newblock Diffwave: A versatile diffusion model for audio synthesis.
\newblock In \emph{International Conference on Learning Representations}, 2020.

\bibitem[Korotin et~al.(2021)Korotin, Li, Genevay, Solomon, Filippov, and
  Burnaev]{korotin2021neural}
Alexander Korotin, Lingxiao Li, Aude Genevay, Justin~M Solomon, Alexander
  Filippov, and Evgeny Burnaev.
\newblock Do neural optimal transport solvers work? a continuous wasserstein-2
  benchmark.
\newblock \emph{Advances in Neural Information Processing Systems},
  34:\penalty0 14593--14605, 2021.

\bibitem[Korotin et~al.(2022)Korotin, Selikhanovych, and
  Burnaev]{korotin2022neural}
Alexander Korotin, Daniil Selikhanovych, and Evgeny Burnaev.
\newblock Neural optimal transport.
\newblock \emph{arXiv preprint arXiv:2201.12220}, 2022.

\bibitem[Krizhevsky et~al.(2009)Krizhevsky, Hinton,
  et~al.]{krizhevsky2009learning}
Alex Krizhevsky, Geoffrey Hinton, et~al.
\newblock Learning multiple layers of features from tiny images.
\newblock 2009.

\bibitem[Kurtz(2011)]{kurtz2011equivalence}
Thomas~G Kurtz.
\newblock Equivalence of stochastic equations and martingale problems.
\newblock In \emph{Stochastic analysis 2010}, pages 113--130. Springer, 2011.

\bibitem[Kynk{\"a}{\"a}nniemi et~al.(2019)Kynk{\"a}{\"a}nniemi, Karras, Laine,
  Lehtinen, and Aila]{kynkaanniemi2019improved}
Tuomas Kynk{\"a}{\"a}nniemi, Tero Karras, Samuli Laine, Jaakko Lehtinen, and
  Timo Aila.
\newblock Improved precision and recall metric for assessing generative models.
\newblock \emph{Advances in Neural Information Processing Systems}, 32, 2019.

\bibitem[Lavenant and Santambrogio(2022)]{lavenant2022flow}
Hugo Lavenant and Filippo Santambrogio.
\newblock The flow map of the fokker--planck equation does not provide optimal
  transport.
\newblock \emph{Applied Mathematics Letters}, page 108225, 2022.

\bibitem[Lee and Han(2021)]{lee2021nu}
Junhyeok Lee and Seungu Han.
\newblock Nu-wave: A diffusion probabilistic model for neural audio upsampling.
\newblock \emph{arXiv preprint arXiv:2104.02321}, 2021.

\bibitem[Li et~al.(2022)Li, Thickstun, Gulrajani, Liang, and
  Hashimoto]{li2022diffusion}
Xiang~Lisa Li, John Thickstun, Ishaan Gulrajani, Percy Liang, and Tatsunori~B
  Hashimoto.
\newblock Diffusion-lm improves controllable text generation.
\newblock \emph{arXiv preprint arXiv:2205.14217}, 2022.

\bibitem[Liu(2022)]{rectifyOT}
Qiang Liu.
\newblock On rectified flow and optimal coupling.
\newblock \emph{preprint}, 2022.

\bibitem[Liu et~al.(2021)Liu, Gong, Wu, Zhang, Su, and Liu]{liu2021fusedream}
Xingchao Liu, Chengyue Gong, Lemeng Wu, Shujian Zhang, Hao Su, and Qiang Liu.
\newblock Fusedream: Training-free text-to-image generation with improved clip+
  gan space optimization.
\newblock \emph{arXiv preprint arXiv:2112.01573}, 2021.

\bibitem[Liu et~al.(2022)Liu, Wu, Ye, and Liu]{bridge}
Xingchao Liu, Lemeng Wu, Mao Ye, and Qiang Liu.
\newblock Let us build bridges: Understanding and extending diffusion
  generative models.
\newblock \emph{arXiv preprint arXiv:2208.14699}, 2022.

\bibitem[Loshchilov and Hutter(2017)]{loshchilov2017decoupled}
Ilya Loshchilov and Frank Hutter.
\newblock Decoupled weight decay regularization.
\newblock \emph{arXiv preprint arXiv:1711.05101}, 2017.

\bibitem[Lu et~al.(2022)Lu, Zhou, Bao, Chen, Li, and Zhu]{lu2022dpm}
Cheng Lu, Yuhao Zhou, Fan Bao, Jianfei Chen, Chongxuan Li, and Jun Zhu.
\newblock {DPM}-solver: A fast {ODE} solver for diffusion probabilistic model
  sampling in around 10 steps.
\newblock \emph{arXiv preprint arXiv:2206.00927}, 2022.

\bibitem[Luhman and Luhman(2021)]{luhman2021knowledge}
Eric Luhman and Troy Luhman.
\newblock Knowledge distillation in iterative generative models for improved
  sampling speed.
\newblock \emph{arXiv preprint arXiv:2101.02388}, 2021.

\bibitem[Lyu et~al.(2022)Lyu, Xu, Yang, Lin, and Dai]{lyu2022accelerating}
Zhaoyang Lyu, Xudong Xu, Ceyuan Yang, Dahua Lin, and Bo~Dai.
\newblock Accelerating diffusion models via early stop of the diffusion
  process.
\newblock \emph{arXiv preprint arXiv:2205.12524}, 2022.

\bibitem[Makkuva et~al.(2020)Makkuva, Taghvaei, Oh, and
  Lee]{makkuva2020optimal}
Ashok Makkuva, Amirhossein Taghvaei, Sewoong Oh, and Jason Lee.
\newblock Optimal transport mapping via input convex neural networks.
\newblock In \emph{International Conference on Machine Learning}, pages
  6672--6681. PMLR, 2020.

\bibitem[Meng et~al.(2021)Meng, Song, Song, Wu, Zhu, and Ermon]{meng2021sdedit}
Chenlin Meng, Yang Song, Jiaming Song, Jiajun Wu, Jun-Yan Zhu, and Stefano
  Ermon.
\newblock Sdedit: Image synthesis and editing with stochastic differential
  equations.
\newblock \emph{arXiv preprint arXiv:2108.01073}, 2021.

\bibitem[Mittal et~al.(2021)Mittal, Engel, Hawthorne, and
  Simon]{mittal2021symbolic}
Gautam Mittal, Jesse Engel, Curtis Hawthorne, and Ian Simon.
\newblock Symbolic music generation with diffusion models.
\newblock \emph{arXiv preprint arXiv:2103.16091}, 2021.

\bibitem[Miyato et~al.(2018)Miyato, Kataoka, Koyama, and
  Yoshida]{miyato2018spectral}
Takeru Miyato, Toshiki Kataoka, Masanori Koyama, and Yuichi Yoshida.
\newblock Spectral normalization for generative adversarial networks.
\newblock In \emph{International Conference on Learning Representations}, 2018.

\bibitem[Nichol et~al.(2021)Nichol, Dhariwal, Ramesh, Shyam, Mishkin, McGrew,
  Sutskever, and Chen]{glide}
Alex Nichol, Prafulla Dhariwal, Aditya Ramesh, Pranav Shyam, Pamela Mishkin,
  Bob McGrew, Ilya Sutskever, and Mark Chen.
\newblock Glide: Towards photorealistic image generation and editing with
  text-guided diffusion models.
\newblock \emph{arXiv preprint arXiv:2112.10741}, 2021.

\bibitem[Nichol and Dhariwal(2021)]{nichol2021improved}
Alexander~Quinn Nichol and Prafulla Dhariwal.
\newblock Improved denoising diffusion probabilistic models.
\newblock In \emph{International Conference on Machine Learning}, pages
  8162--8171. PMLR, 2021.

\bibitem[Onken et~al.(2021)Onken, Fung, Li, and Ruthotto]{onken2021ot}
Derek Onken, Samy~Wu Fung, Xingjian Li, and Lars Ruthotto.
\newblock Ot-flow: Fast and accurate continuous normalizing flows via optimal
  transport.
\newblock In \emph{Proceedings of the AAAI Conference on Artificial
  Intelligence}, volume~35, pages 9223--9232, 2021.

\bibitem[Papamakarios et~al.(2021)Papamakarios, Nalisnick, Rezende, Mohamed,
  and Lakshminarayanan]{papamakarios2021normalizing}
George Papamakarios, Eric~T Nalisnick, Danilo~Jimenez Rezende, Shakir Mohamed,
  and Balaji Lakshminarayanan.
\newblock Normalizing flows for probabilistic modeling and inference.
\newblock \emph{J. Mach. Learn. Res.}, 22\penalty0 (57):\penalty0 1--64, 2021.

\bibitem[Peluchetti(2021)]{peluchetti2021non}
Stefano Peluchetti.
\newblock Non-denoising forward-time diffusions.
\newblock 2021.

\bibitem[Peng et~al.(2019)Peng, Bai, Xia, Huang, Saenko, and
  Wang]{peng2019domainnet}
Xingchao Peng, Qinxun Bai, Xide Xia, Zijun Huang, Kate Saenko, and Bo~Wang.
\newblock Moment matching for multi-source domain adaptation.
\newblock In \emph{Proceedings of the IEEE/CVF international conference on
  computer vision}, pages 1406--1415, 2019.

\bibitem[Peyr{\'e} et~al.(2019)Peyr{\'e}, Cuturi,
  et~al.]{peyre2019computational}
Gabriel Peyr{\'e}, Marco Cuturi, et~al.
\newblock Computational optimal transport: With applications to data science.
\newblock \emph{Foundations and Trends{\textregistered} in Machine Learning},
  11\penalty0 (5-6):\penalty0 355--607, 2019.

\bibitem[Popov et~al.(2021)Popov, Vovk, Gogoryan, Sadekova, and
  Kudinov]{popov2021grad}
Vadim Popov, Ivan Vovk, Vladimir Gogoryan, Tasnima Sadekova, and Mikhail
  Kudinov.
\newblock Grad-tts: A diffusion probabilistic model for text-to-speech.
\newblock In \emph{International Conference on Machine Learning}, pages
  8599--8608. PMLR, 2021.

\bibitem[Ramesh et~al.(2022)Ramesh, Dhariwal, Nichol, Chu, and Chen]{dalle2}
Aditya Ramesh, Prafulla Dhariwal, Alex Nichol, Casey Chu, and Mark Chen.
\newblock Hierarchical text-conditional image generation with clip latents.
\newblock \emph{arXiv preprint arXiv:2204.06125}, 2022.

\bibitem[Rezende and Mohamed(2015)]{rezende2015variational}
Danilo Rezende and Shakir Mohamed.
\newblock Variational inference with normalizing flows.
\newblock In \emph{International conference on machine learning}, pages
  1530--1538. PMLR, 2015.

\bibitem[Rout et~al.(2021)Rout, Korotin, and Burnaev]{rout2021generative}
Litu Rout, Alexander Korotin, and Evgeny Burnaev.
\newblock Generative modeling with optimal transport maps.
\newblock \emph{arXiv preprint arXiv:2110.02999}, 2021.

\bibitem[Saharia et~al.(2022)Saharia, Chan, Saxena, Li, Whang, Denton,
  Ghasemipour, Ayan, Mahdavi, Lopes, et~al.]{imagegen}
Chitwan Saharia, William Chan, Saurabh Saxena, Lala Li, Jay Whang, Emily
  Denton, Seyed Kamyar~Seyed Ghasemipour, Burcu~Karagol Ayan, S~Sara Mahdavi,
  Rapha~Gontijo Lopes, et~al.
\newblock Photorealistic text-to-image diffusion models with deep language
  understanding.
\newblock \emph{arXiv preprint arXiv:2205.11487}, 2022.

\bibitem[Santambrogio(2015)]{santambrogio2015optimal}
Filippo Santambrogio.
\newblock Optimal transport for applied mathematicians.
\newblock \emph{Birk{\"a}user, NY}, 55\penalty0 (58-63):\penalty0 94, 2015.

\bibitem[Sauer et~al.(2022)Sauer, Schwarz, and Geiger]{sauer2022stylegan}
Axel Sauer, Katja Schwarz, and Andreas Geiger.
\newblock {StyleGAN-XL}: Scaling {StyleGAN} to large diverse datasets.
\newblock In \emph{Special Interest Group on Computer Graphics and Interactive
  Techniques Conference Proceedings}, pages 1--10, 2022.

\bibitem[Seguy et~al.(2017)Seguy, Damodaran, Flamary, Courty, Rolet, and
  Blondel]{seguy2017large}
Vivien Seguy, Bharath~Bhushan Damodaran, R{\'e}mi Flamary, Nicolas Courty,
  Antoine Rolet, and Mathieu Blondel.
\newblock Large-scale optimal transport and mapping estimation.
\newblock \emph{arXiv preprint arXiv:1711.02283}, 2017.

\bibitem[Sinha et~al.(2021)Sinha, Song, Meng, and Ermon]{sinha2021d2c}
Abhishek Sinha, Jiaming Song, Chenlin Meng, and Stefano Ermon.
\newblock {D2C}: Diffusion-decoding models for few-shot conditional generation.
\newblock \emph{Advances in Neural Information Processing Systems},
  34:\penalty0 12533--12548, 2021.

\bibitem[Smith and Topin(2019)]{smith2019onecycle}
Leslie~N Smith and Nicholay Topin.
\newblock Super-convergence: Very fast training of neural networks using large
  learning rates.
\newblock In \emph{Artificial intelligence and machine learning for
  multi-domain operations applications}, volume 11006, pages 369--386. SPIE,
  2019.

\bibitem[Song et~al.(2020{\natexlab{a}})Song, Meng, and
  Ermon]{song2020denoising}
Jiaming Song, Chenlin Meng, and Stefano Ermon.
\newblock Denoising diffusion implicit models.
\newblock In \emph{International Conference on Learning Representations},
  2020{\natexlab{a}}.

\bibitem[Song and Ermon(2019)]{song2019generative}
Yang Song and Stefano Ermon.
\newblock Generative modeling by estimating gradients of the data distribution.
\newblock \emph{Advances in Neural Information Processing Systems}, 32, 2019.

\bibitem[Song and Ermon(2020)]{song2020improved}
Yang Song and Stefano Ermon.
\newblock Improved techniques for training score-based generative models.
\newblock \emph{Advances in neural information processing systems},
  33:\penalty0 12438--12448, 2020.

\bibitem[Song et~al.(2020{\natexlab{b}})Song, Sohl-Dickstein, Kingma, Kumar,
  Ermon, and Poole]{song2020score}
Yang Song, Jascha Sohl-Dickstein, Diederik~P Kingma, Abhishek Kumar, Stefano
  Ermon, and Ben Poole.
\newblock Score-based generative modeling through stochastic differential
  equations.
\newblock In \emph{International Conference on Learning Representations},
  2020{\natexlab{b}}.

\bibitem[Song et~al.(2021)Song, Durkan, Murray, and Ermon]{song2021maximum}
Yang Song, Conor Durkan, Iain Murray, and Stefano Ermon.
\newblock Maximum likelihood training of score-based diffusion models.
\newblock \emph{Advances in Neural Information Processing Systems}, 34, 2021.

\bibitem[Su et~al.(2022)Su, Song, Meng, and Ermon]{su2022dual}
Xuan Su, Jiaming Song, Chenlin Meng, and Stefano Ermon.
\newblock Dual diffusion implicit bridges for image-to-image translation.
\newblock \emph{arXiv preprint arXiv:2203.08382}, 2022.

\bibitem[Sun and Saenko(2016)]{sun2016coral}
Baochen Sun and Kate Saenko.
\newblock Deep coral: Correlation alignment for deep domain adaptation.
\newblock In \emph{European conference on computer vision}, pages 443--450.
  Springer, 2016.

\bibitem[Tan and Le(2019)]{tan2019efficientnet}
Mingxing Tan and Quoc Le.
\newblock Efficientnet: Rethinking model scaling for convolutional neural
  networks.
\newblock In \emph{International conference on machine learning}, pages
  6105--6114. PMLR, 2019.

\bibitem[Tanana(2021)]{tanana2021comparison}
Anastasiya Tanana.
\newblock Comparison of transport map generated by heat flow interpolation and
  the optimal transport brenier map.
\newblock \emph{Communications in Contemporary Mathematics}, 23\penalty0
  (06):\penalty0 2050025, 2021.

\bibitem[Trigila and Tabak(2016)]{trigila2016data}
Giulio Trigila and Esteban~G Tabak.
\newblock Data-driven optimal transport.
\newblock \emph{Communications on Pure and Applied Mathematics}, 69\penalty0
  (4):\penalty0 613--648, 2016.

\bibitem[Tzen and Raginsky(2019)]{tzen2019theoretical}
Belinda Tzen and Maxim Raginsky.
\newblock Theoretical guarantees for sampling and inference in generative
  models with latent diffusions.
\newblock In \emph{Conference on Learning Theory}, pages 3084--3114. PMLR,
  2019.

\bibitem[Vahdat et~al.(2021)Vahdat, Kreis, and Kautz]{vahdat2021score}
Arash Vahdat, Karsten Kreis, and Jan Kautz.
\newblock Score-based generative modeling in latent space.
\newblock \emph{Advances in Neural Information Processing Systems},
  34:\penalty0 11287--11302, 2021.

\bibitem[Vargas et~al.(2021)Vargas, Thodoroff, Lamacraft, and
  Lawrence]{vargas2021solving}
Francisco Vargas, Pierre Thodoroff, Austen Lamacraft, and Neil Lawrence.
\newblock Solving {Schr}{\"o}dinger bridges via maximum likelihood.
\newblock \emph{Entropy}, 23\penalty0 (9):\penalty0 1134, 2021.

\bibitem[Venkateswara et~al.(2017)Venkateswara, Eusebio, Chakraborty, and
  Panchanathan]{venkateswara2017officehome}
Hemanth Venkateswara, Jose Eusebio, Shayok Chakraborty, and Sethuraman
  Panchanathan.
\newblock Deep hashing network for unsupervised domain adaptation.
\newblock In \emph{Proceedings of the IEEE conference on computer vision and
  pattern recognition}, pages 5018--5027, 2017.

\bibitem[Villani(2009)]{villani2009optimal}
C{\'e}dric Villani.
\newblock \emph{Optimal transport: old and new}, volume 338.
\newblock Springer, 2009.

\bibitem[Villani(2021)]{villani2021topics}
C{\'e}dric Villani.
\newblock \emph{Topics in optimal transportation}, volume~58.
\newblock American Mathematical Soc., 2021.

\bibitem[Virtanen et~al.(2020)Virtanen, Gommers, Oliphant, Haberland, Reddy,
  Cournapeau, Burovski, Peterson, Weckesser, Bright, {van der Walt}, Brett,
  Wilson, Millman, Mayorov, Nelson, Jones, Kern, Larson, Carey, Polat, Feng,
  Moore, {VanderPlas}, Laxalde, Perktold, Cimrman, Henriksen, Quintero, Harris,
  Archibald, Ribeiro, Pedregosa, {van Mulbregt}, and {SciPy 1.0
  Contributors}]{2020SciPy-NMeth}
Pauli Virtanen, Ralf Gommers, Travis~E. Oliphant, Matt Haberland, Tyler Reddy,
  David Cournapeau, Evgeni Burovski, Pearu Peterson, Warren Weckesser, Jonathan
  Bright, St{\'e}fan~J. {van der Walt}, Matthew Brett, Joshua Wilson, K.~Jarrod
  Millman, Nikolay Mayorov, Andrew R.~J. Nelson, Eric Jones, Robert Kern, Eric
  Larson, C~J Carey, {\.I}lhan Polat, Yu~Feng, Eric~W. Moore, Jake
  {VanderPlas}, Denis Laxalde, Josef Perktold, Robert Cimrman, Ian Henriksen,
  E.~A. Quintero, Charles~R. Harris, Anne~M. Archibald, Ant{\^o}nio~H. Ribeiro,
  Fabian Pedregosa, Paul {van Mulbregt}, and {SciPy 1.0 Contributors}.
\newblock {{SciPy} 1.0: Fundamental Algorithms for Scientific Computing in
  Python}.
\newblock \emph{Nature Methods}, 17:\penalty0 261--272, 2020.
\newblock \doi{10.1038/s41592-019-0686-2}.

\bibitem[Wang et~al.(2021)Wang, Jiao, Xu, Wang, and Yang]{wang2021deep}
Gefei Wang, Yuling Jiao, Qian Xu, Yang Wang, and Can Yang.
\newblock Deep generative learning via {Schr}{\"o}dinger bridge.
\newblock In \emph{International Conference on Machine Learning}, pages
  10794--10804. PMLR, 2021.

\bibitem[Wang et~al.(2022)Wang, Durmus, Goodman, and
  Hashimoto]{wang2022language}
Rose~E Wang, Esin Durmus, Noah Goodman, and Tatsunori Hashimoto.
\newblock Language modeling via stochastic processes.
\newblock \emph{arXiv preprint arXiv:2203.11370}, 2022.

\bibitem[Wehenkel and Louppe(2021)]{wehenkel2021diffusion}
Antoine Wehenkel and Gilles Louppe.
\newblock Diffusion priors in variational autoencoders.
\newblock \emph{arXiv preprint arXiv:2106.15671}, 2021.

\bibitem[Wu et~al.(2022)Wu, Gong, Liu, Ye, and Liu]{geobridge}
Lemeng Wu, Chengyue Gong, Xingchao Liu, Mao Ye, and Qiang Liu.
\newblock Diffusion-based molecule generation with informative prior bridges.
\newblock \emph{arXiv preprint}, 2022.

\bibitem[Xiao et~al.(2021)Xiao, Kreis, and Vahdat]{xiao2021tackling}
Zhisheng Xiao, Karsten Kreis, and Arash Vahdat.
\newblock Tackling the generative learning trilemma with denoising diffusion
  {GAN}s.
\newblock \emph{arXiv preprint arXiv:2112.07804}, 2021.

\bibitem[Yang et~al.(2022)Yang, Srivastava, and Mandt]{yang2022diffusion}
Ruihan Yang, Prakhar Srivastava, and Stephan Mandt.
\newblock Diffusion probabilistic modeling for video generation.
\newblock \emph{arXiv preprint arXiv:2203.09481}, 2022.

\bibitem[Yu et~al.(2015)Yu, Seff, Zhang, Song, Funkhouser, and
  Xiao]{yu2015lsun}
Fisher Yu, Ari Seff, Yinda Zhang, Shuran Song, Thomas Funkhouser, and Jianxiong
  Xiao.
\newblock Lsun: Construction of a large-scale image dataset using deep learning
  with humans in the loop.
\newblock \emph{arXiv preprint arXiv:1506.03365}, 2015.

\bibitem[Zhang and Chen(2022)]{zhang2022fast}
Qinsheng Zhang and Yongxin Chen.
\newblock Fast sampling of diffusion models with exponential integrator.
\newblock \emph{arXiv preprint arXiv:2204.13902}, 2022.

\bibitem[Zhang et~al.(2022)Zhang, Tao, and Chen]{zhang2022gddim}
Qinsheng Zhang, Molei Tao, and Yongxin Chen.
\newblock g{DDIM}: Generalized denoising diffusion implicit models.
\newblock \emph{arXiv preprint arXiv:2206.05564}, 2022.

\bibitem[Zhang et~al.(2018)Zhang, Isola, Efros, Shechtman, and
  Wang]{zhang2018unreasonable}
Richard Zhang, Phillip Isola, Alexei~A Efros, Eli Shechtman, and Oliver Wang.
\newblock The unreasonable effectiveness of deep features as a perceptual
  metric.
\newblock In \emph{Proceedings of the IEEE conference on computer vision and
  pattern recognition}, pages 586--595, 2018.

\bibitem[Zhao et~al.(2022)Zhao, Bao, Li, and Zhu]{zhao2022egsde}
Min Zhao, Fan Bao, Chongxuan Li, and Jun Zhu.
\newblock {EGSDE}: Unpaired image-to-image translation via energy-guided
  stochastic differential equations.
\newblock \emph{arXiv preprint arXiv:2207.06635}, 2022.

\bibitem[Zhao et~al.(2020)Zhao, Liu, Lin, Zhu, and Han]{zhao2020differentiable}
Shengyu Zhao, Zhijian Liu, Ji~Lin, Jun-Yan Zhu, and Song Han.
\newblock Differentiable augmentation for data-efficient {GAN} training.
\newblock \emph{Advances in Neural Information Processing Systems},
  33:\penalty0 7559--7570, 2020.

\bibitem[Zheng et~al.(2022)Zheng, He, Chen, and Zhou]{zheng2022truncated}
Huangjie Zheng, Pengcheng He, Weizhu Chen, and Mingyuan Zhou.
\newblock Truncated diffusion probabilistic models.
\newblock \emph{arXiv preprint arXiv:2202.09671}, 2022.

\bibitem[Zhu et~al.(2017)Zhu, Park, Isola, and Efros]{cyclegan}
Jun-Yan Zhu, Taesung Park, Phillip Isola, and Alexei~A Efros.
\newblock Unpaired image-to-image translation using cycle-consistent
  adversarial networks.
\newblock In \emph{Proceedings of the IEEE international conference on computer
  vision}, pages 2223--2232, 2017.

\end{thebibliography}
\bibliographystyle{plainnat}

\appendix 
\newpage \clearpage

\begin{algorithm}[h]
\caption{\PyCode{Train(Data)}}
\begin{flushleft}
\PyInput{Input: Data=\{x0, x1\}} \\ 
\PyInput{Output: Model $v(x,t)$ for the rectified flow} \\ 
\PyCode{initialize Model} \\ 
\PyCode{for x0, x1 in Data:}
\PyComment{x0, x1:~samples from  $\tg_0$,  $\tg_1$} \\  %
\PyCode{~~~~Optimizer.zero\_grad()} \\
\PyCode{~~~~t = torch.rand(batchsize)}~~\PyComment{Randomly sample  $t \in$ [0,1]} \\
\PyCode{~~~~\blue{Loss = ( Model(t*x1+(1-t)*x0,~t) - (x1-x0) ).pow(2).mean()}} \\
\PyCode{~~~~Loss.backward()} \\
\PyCode{~~~~Optimizer.step()} \\
\PyCode{return Model} 
\end{flushleft}
\label{alg:pytorchAlgorithm}
\end{algorithm}

\begin{algorithm}[h]
\caption{\PyCode{Sample(Model, Data)}}
\begin{flushleft}
\PyInput{Input: Model $v(x,t)$ of the rectified flow} \\ 
\PyInput{Output: draws of the rectified coupling $(Z_0,Z_1)$} \\ 
\PyCode{coupling = []} \\
\PyCode{for x0, \_ in Data:}
\PyComment{x0:~samples from  $\tg_0$ (batchsize$\times$dim)} \\
\PyCode{~~~~x1 = model.ODE\_solver(x0)} \\
\PyCode{~~~~coupling.append((x0, x1))} \\
\PyCode{return coupling}
\end{flushleft}
\label{alg:rectify}
\end{algorithm}

\begin{algorithm}[h]
\caption{\PyCode{Reflow(Data)}}
\begin{flushleft}
\PyInput{Input: Data=\{x0, x1\}} \\ %
\PyInput{Output: draws of the $K$-th rectified coupling} \\ 
\PyCode{Coupling = Data} \\
\PyCode{for  $k=1,\ldots,K$:} \\ 
\PyCode{~~~~Model = Train(Coupling)}   \\
\PyCode{~~~~Coupling = sample(Model, Data)}   \\
\PyCode{return Coupling}
\end{flushleft}
\label{alg:rectify}
\end{algorithm}

\begin{figure}[h]%
  \begin{center}
  \begin{tabular}{lr}
    \includegraphics[width=0.45\textwidth]{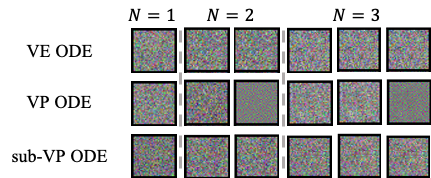}  &
    \smash{\raisebox{10pt}{ 
    \includegraphics[width=0.45\textwidth]{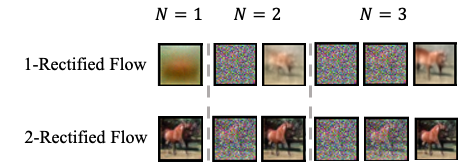} }
    }
    \end{tabular} 
  \end{center}
  \vspace{0\baselineskip}
  \caption{Few-step generation with different ODEs. Compared with VE,VP,sub-VP ODE, 1-rectified flow can generate blurry images using only 1,2,3 steps. After one time of rectification, 2-rectified flow can generate clear images with 1,2,3 steps. 
  }
  \label{fig:cifar10_triangle}
\end{figure}

\section{Additional Experiment Details}
\paragraph{Experiment Configuration on CIFAR10}  We conduct unconditional image generation with the CIFAR-10 dataset \citep{krizhevsky2009learning}. The resolution of the images are set to $32 \times 32$. For rectified flow, we adopt the same network structure as DDPM++ in \citep{song2020score}. The training of the network is smoothed by exponential moving average as in~\citep{song2020score}, with a ratio of $0.999999$. We adopt Adam~\citep{kingma2014adam} optimizer with a learning rate of $2e-4$ and a dropout rate of $0.15$.

For reflow, we first generate 4 million pairs of $(z_0, z_1)$ to get a new dataset $D$, then fine-tune the $i$-rectified flow model for $300,000$ steps to get the $(i+1)$-rectified flow model. We further distill these rectified flow models for few-step generation. To get a $k$-step image generator from the $i$-rectified flow, we randomly sample $t \in \{0, 1/k, \cdots, (k-1)/k\}$ during fine-tuning, instead of randomly sampling $t \in [0, 1]$. Specifically, for $k=1$, we replace the L2 loss function with the LPIPS similarity~\citep{zhang2018unreasonable} since it empirically brings better performance.

\paragraph{Expreiment Configuration on Image-to-Image Translation}
In this experiment, we also adopt the same U-Net structure of DDPM++~\cite{song2020score} for representing the drift $v^X$. 
We follow the procedure in Algorithm~\ref{alg:cap}. 
For the purpose of generative modeling, we set $\tg_0$ to be one domain dataset and $\tg_1$ the other domain dataset. 
For optimization, we use AdamW~\citep{loshchilov2017decoupled} optimizer with $\beta$ $(0.9, 0.999)$, weight decay $0.1$ and dropout rate $0.1$.
We train the model with a batch size of $4$ for $1,000$ epochs. 
We further apply exponential moving average (EMA) optimizer with coefficient $0.9999$.
We perform grid-search on the learning rate from $\{5\times 10^{-4}, 2\times 10^{-4}, 5\times 10^{-5}, 2\times 10^{-5}, 5\times 10^{-6}\}$ and pick the model with the lowest training loss.

We use the AFHQ \citep{choi2020stargan}, MetFace \citep{karras2020training} and CelebA-HQ \citep{karras2018progressive} dataset. 
Animal Faces HQ (AFHQ) is an animal-face dataset consisting of 15,000 high-quality images at $512 \times 512$ resolution. The dataset includes three domains of cat, dog, and wild animals, each providing 5000 images. 
MetFace consists of 1,336 high-quality PNG human-face images at $1024 \times 1024$ resolution, extracted from works of art.
CelebA-HQ is a human-face dataset which consists of 30,000 images at $1024 \times 1024$ resolution.
We randomly select $80\%$ as the training data and regard the rest as the test data, and resize the image to $512 \times 512$.

\paragraph{Experiment Configuration on Domain Adaptation}
For training the model, we apply AdamW~\citep{loshchilov2017decoupled} optimizer with batch size $16$, number of iterations $50$k, learning rate $10^{-4}$, weight decay $0.1$ and OneCycle \citep{smith2019onecycle} learning rate schedule.

\begin{figure}
    \centering
    \includegraphics[width=0.95\textwidth]{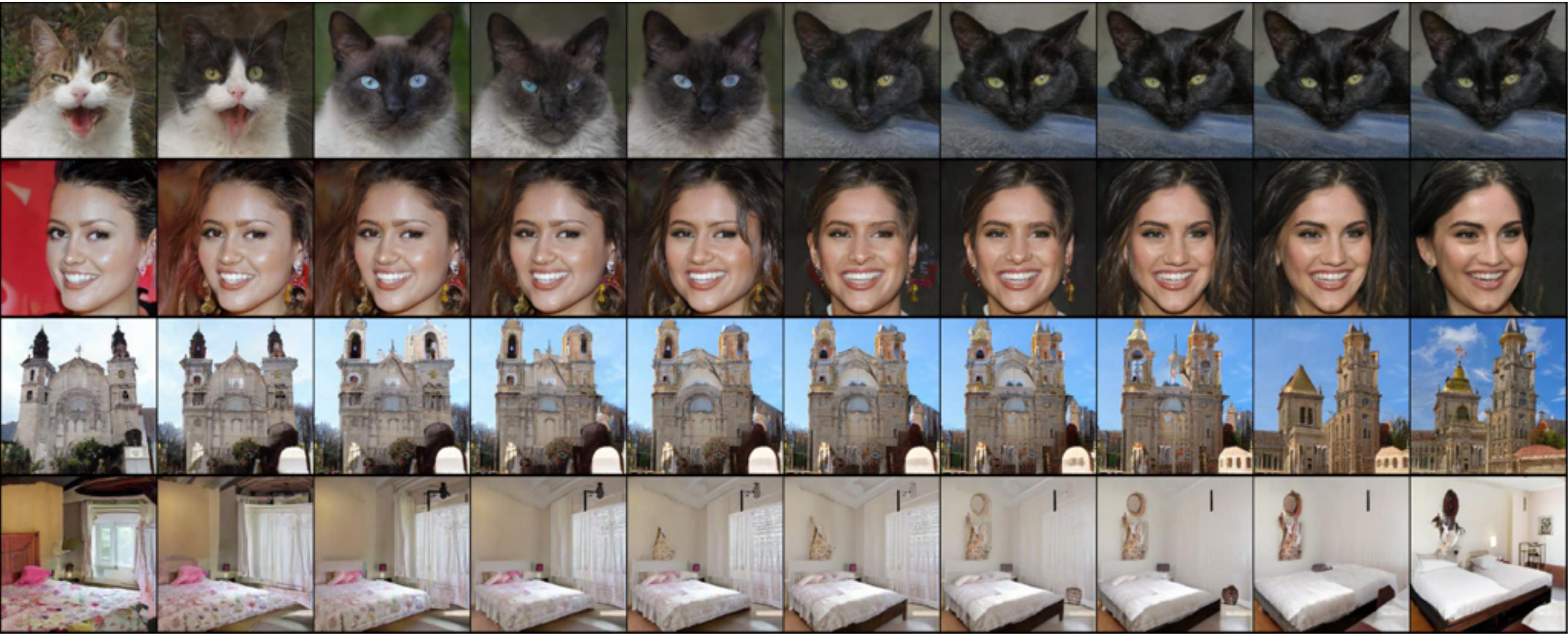}
    \caption{
    To visualize the latent space, we randomly sample $z_0$ and $z_1$ from $\mathcal{N}(0, I)$, and show the generated images of $\sqrt{\alpha} z_0 + \sqrt{1- \alpha} z_1$ for $\alpha \in [0,1]$.
    }
    \label{fig:interp}
\end{figure}

\begin{figure}
    \centering
    \includegraphics[width=0.98\textwidth]{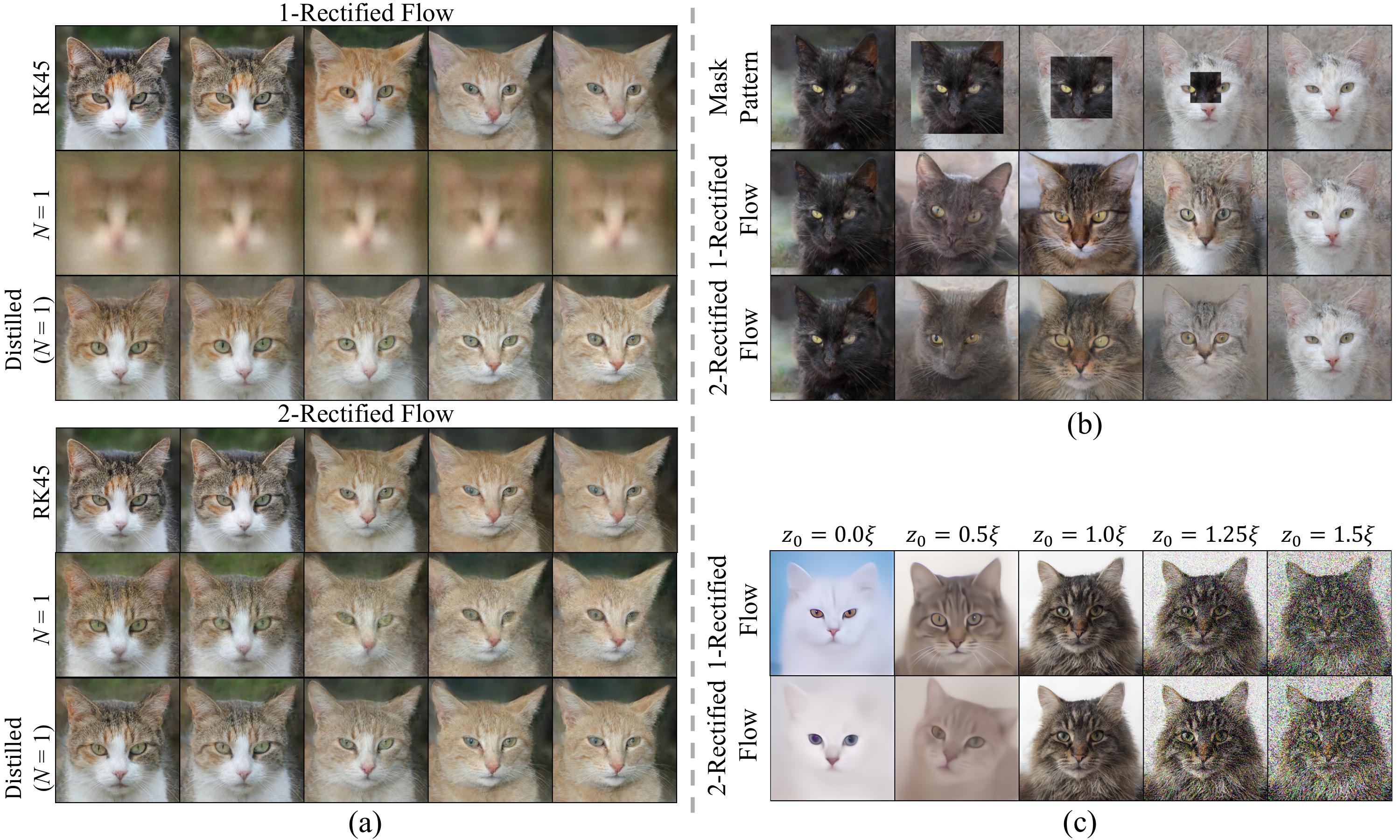}
    \caption{
    (a) We compare the latent space between Rectified Flow (0) and (1) using different sampling strategies with the same random seeds. We observe that (i) both 1-Rectified Flow and 2-Rectified Flow can provide a smooth latent interpolation, and their latent spaces look similar; (ii) when using one-step sampling ($N=1$), 2-Rectified Flow can still provide visually recognizable interpolation, while 1-Rectified Flow cannot; (iii) Distilled one-step models can also continuously interpolate between the images, and their latent spaces have little difference with the original flow.
    (b) We composite the latent codes of two images by replacing the boundary of a black cat with a white cat, then visualize the variation along the trajectory. The black cat turns into a grey cat at first, then a cat with mixing colors, and finally a white cat.
    (c) We randomly sample $\xi \sim \normal(0, I)$, then generate images with $\alpha \xi$ to examine the influence of $\alpha$ on the generated images. We find $\alpha<1$ results in overly smooth images, while $\alpha > 1$ leads to noisy images. 
    }
    \label{fig:interp_combine}
\end{figure}

\begin{figure}
    \centering
    \includegraphics[width=0.98\textwidth]{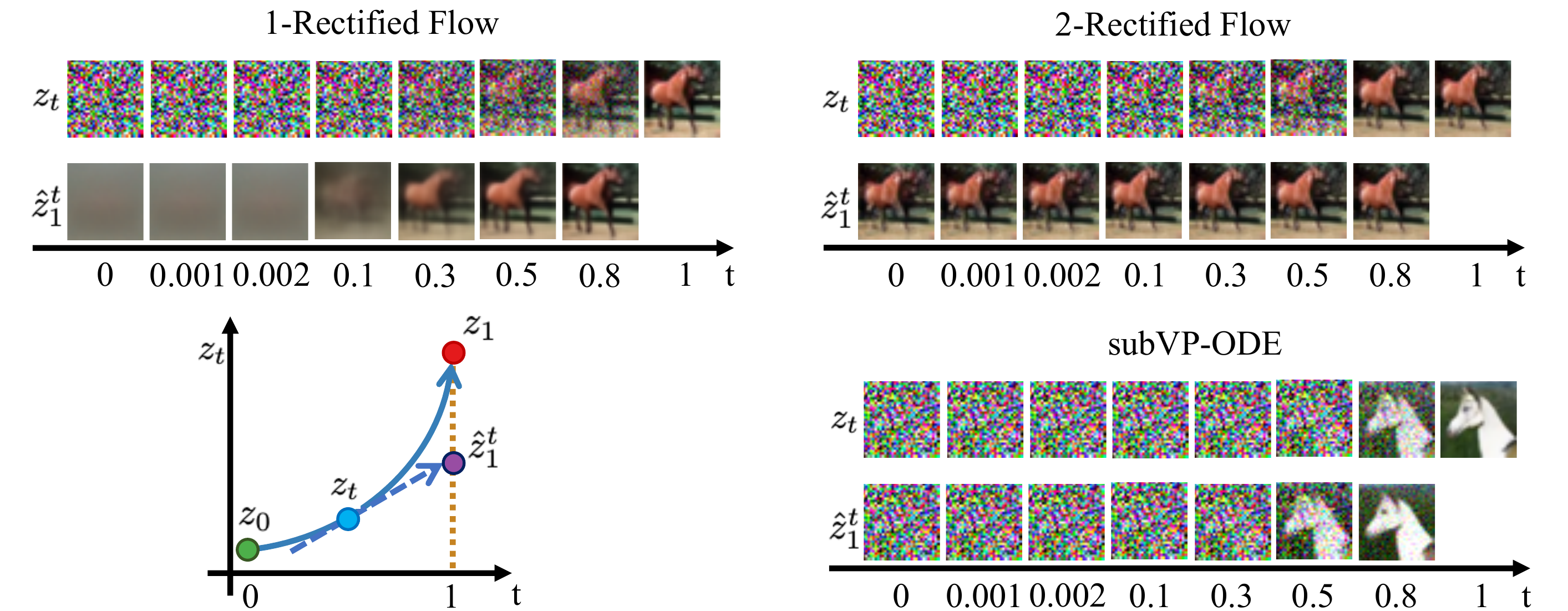}
    \caption{
    Sample trajectories $z_t$ of different flows on the CIFAR10 dataset,  %
    and the extrapolation $\hat{z}_1^t =z_t + (1-t) v(z_t, t)$ from different $z_t$. The same random seed is adopted for all three methods. The $\hat z_1^t$ of 2-rectified flow is almost independent with $t$, indicating that its trajectory is almost straight. 
    }
    \label{fig:cifar_target}
\end{figure}

\begin{figure}
    \centering
    \includegraphics[width=0.95\textwidth]{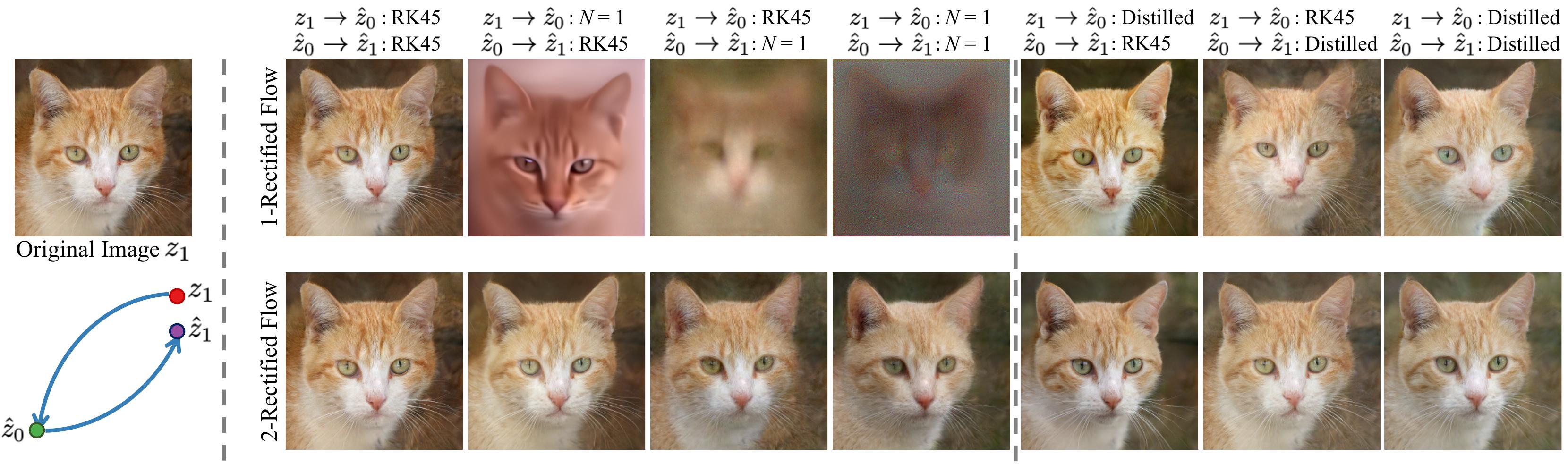}
    \caption{
    We perform latent space embedding / image reconstruction here. Given an image $z_1$, we use an \emph{reverse ODE solver} to get a latent code $\hat{z}_0$, then use a \emph{forward ODE solver} to get a reconstruction $\hat{z}_1$ of the image. The columns in the figure are \emph{reverse ODE solver (forward ODE solver)}. (i) Thanks to the`straightening' effect, 2-rectified flow can get meaningful latent code with only one reverse step. It can also generate recognizable images using one forward step. 
    (ii) With the help of distilled models, one-step embedding and reconstruction is significantly improved. 
    }
    \label{fig:image_reconstruction}
\end{figure}

\begin{figure}[h]
    \centering
    \includegraphics[width=\textwidth]{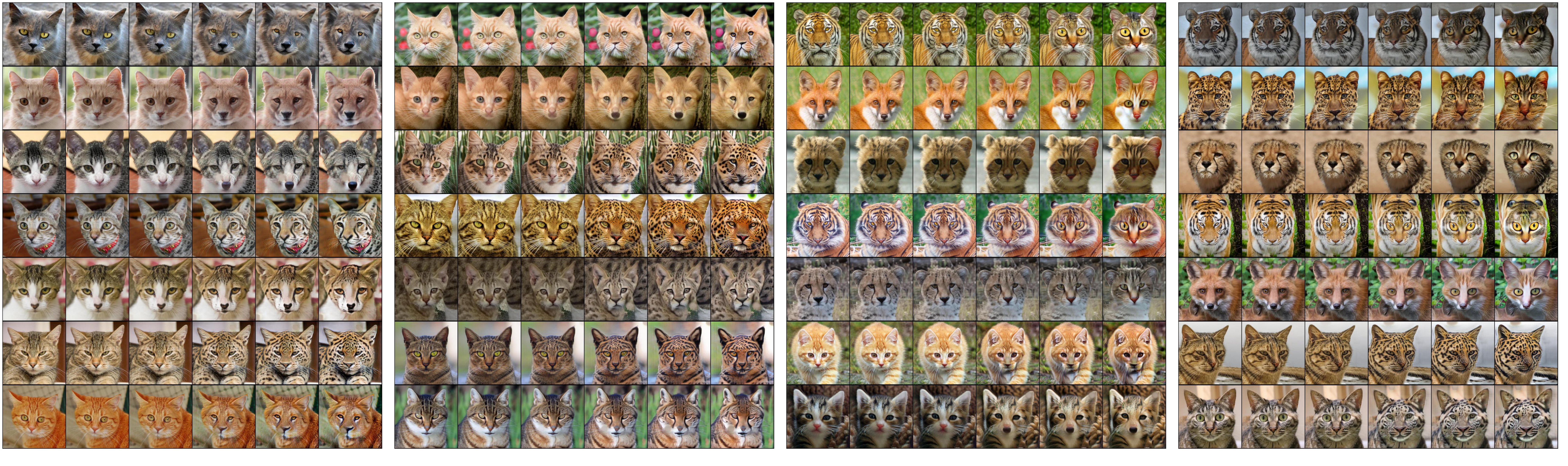} \\ 
    \includegraphics[width=\textwidth]{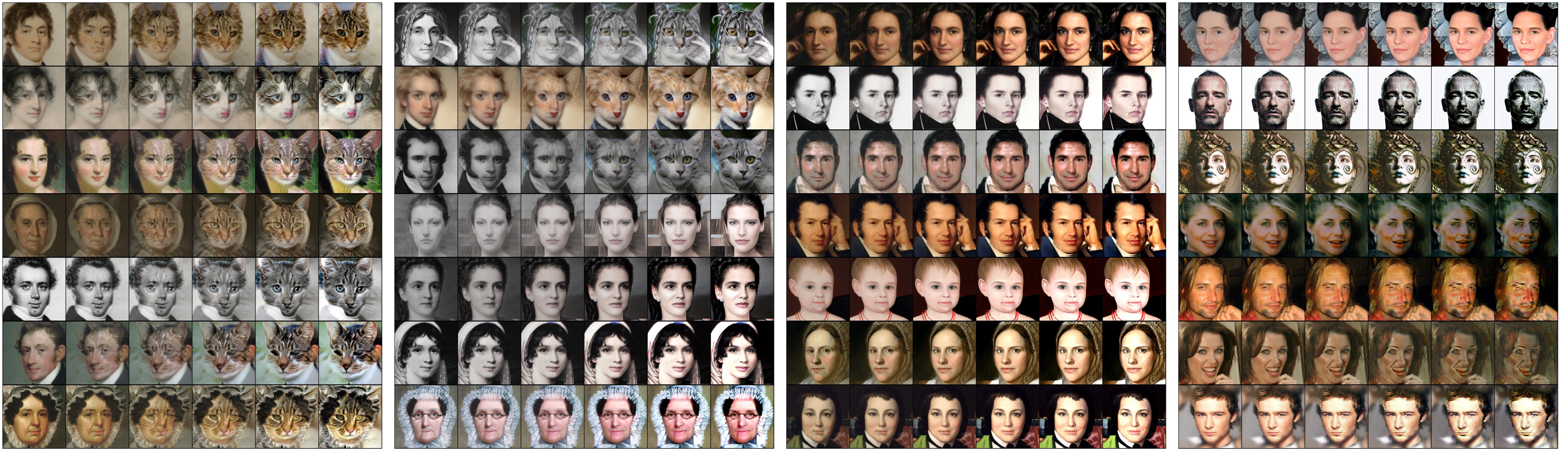}\\
    \includegraphics[width=\textwidth]{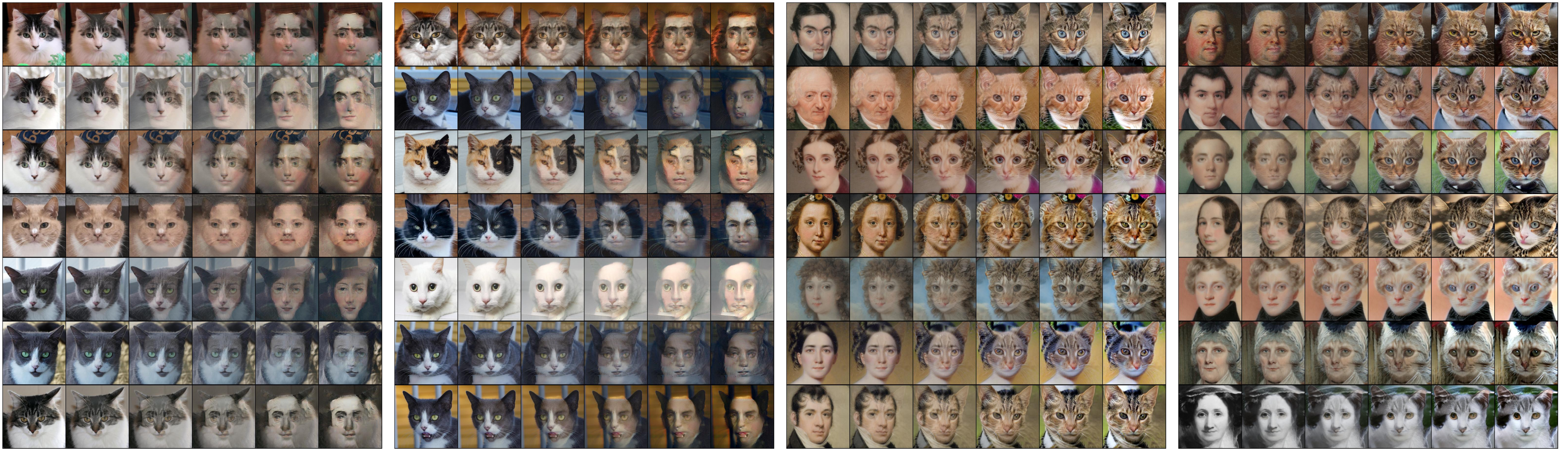}
    \caption{
    More results for image-to-image translation between different domains. 
    The images in each row are time-uniformly sampled from the trajectory of 1-rectified flow solved $N=100$ Euler steps with constant step size. 
    }
    \label{fig:appendix-traj1}
\end{figure}


\end{document}